\def\Figref#1{Figure~\ref{#1}}
\def\Secref#1{Section~\ref{#1}}
\def\eqref#1{equation~\ref{#1}}
\def\1{\bm{1}}
\DeclareMathAlphabet{\mathsfit}{\encodingdefault}{\sfdefault}{m}{sl}
\SetMathAlphabet{\mathsfit}{bold}{\encodingdefault}{\sfdefault}{bx}{n}
\newcommand{\E}{\mathbb{E}}
\newcommand{\eg}{\emph{e.g.},\ }
\newcommand{\ie}{\emph{i.e.},\ }
\newcommand{\states}{\mathcal{S}}
\newcommand{\actions}{\mathcal{A}}
\newcommand{\behavior}{{\pi_\beta}}
\newcommand{\bv}{\mathbf{v}}
\newcommand{\bx}{\mathbf{x}}
\newcommand{\bw}{\mathbf{w}}
\newcommand{\bs}{\mathbf{s}}
\newcommand{\ba}{\mathbf{a}}
\newcommand{\leftnorm}{\left\vert \left\vert}
\newcommand{\rightnorm}{\right\vert \right\vert}
\newcommand{\expected}{\mathbb{E}}
\newcommand{\srank}{\text{srank}}
\def\thetaP{\theta^{\prime}}
\def\ss#1{\scriptsize #1}
\newcommand{\methodname}{DR3}
\def\Qt{Q_{\theta}}
\def\Qtp{Q_{\thetaP}}
\def\ld{\ell_{\lambda}}
\newtheorem{assumption}{Assumption}
\newtheorem{theorem}{Theorem}[section]
\newtheorem{lemma}[theorem]{Lemma}
\newtheorem{proposition}[theorem]{Proposition}
\newtheorem{definition}{Definition}
\newcommand{\rebuttal}[1]{#1}
\title{DR3: Value-Based \underline{D}eep \underline{R}einforcement\\ Learning \underline{R}equires Explicit \underline{R}egularization}
\author{
 \quad \quad \quad \quad \quad \quad \quad \quad \quad \quad \quad {Aviral Kumar$^{1, 2}$~~~~ Rishabh Agarwal$^{2, 3}$} \vspace{0.05cm}\\
\quad \quad \quad \quad \quad \quad {\textbf{Tengyu Ma$^{4}$~~~ Aaron Courville$^{3}$~~~ George Tucker$^{2}$~~~ Sergey Levine$^{1,2}$}} \vspace{0.1cm}\\
\quad \quad \quad \quad \quad ~~~ {$^1$ UC Berkeley~~~ $^2$ Google Research~~~ $^3$ MILA~~~ $^4$ Stanford University} \vspace{0.05cm}\\
\quad \quad \quad \quad \quad \quad ~~ {\texttt{aviralk@berkeley.edu}, \texttt{rishabhagarwal@google.com}}
}
\begin{document}

\maketitle

\begin{abstract}
Despite overparameterization, deep networks trained via supervised learning are easy to optimize and exhibit excellent generalization. One hypothesis to explain this is that overparameterized deep networks enjoy the benefits of implicit regularization induced by stochastic gradient descent, which favors parsimonious solutions that generalize well on test inputs. It is reasonable to surmise that deep reinforcement learning (RL) methods could also benefit from this effect. In this paper, we discuss how the implicit regularization effect of SGD seen in supervised learning could in fact be harmful in the offline deep RL setting, leading to poor generalization and degenerate feature representations. Our theoretical analysis shows that when existing models of implicit regularization are applied to temporal difference learning, the resulting derived regularizer favors degenerate solutions with excessive ``aliasing'', in stark contrast to the supervised learning case. We back up these findings empirically, showing that feature representations learned by a deep network value function trained via bootstrapping can indeed become degenerate, aliasing the representations for state-action pairs that appear on either side of the Bellman backup. To address this issue, we derive the form of this implicit regularizer and, inspired by this derivation, propose a simple and effective \emph{explicit} regularizer, called \methodname, that counteracts the undesirable effects of this implicit regularizer. When combined with existing offline RL methods, \methodname\ substantially improves performance and stability, alleviating unlearning in Atari 2600 games, D4RL domains and robotic manipulation from images.
\end{abstract}

\section{Introduction}
Deep neural networks are overparameterized, with billions of parameters, which in principle should leave them vulnerable to overfitting. Despite this, supervised learning with deep networks still learn representations that generalize  well. A widely held consensus is that deep nets find simple solutions that generalize due to various \emph{implicit} regularization effects~\citep{blanc2020implicit,woodworth2020kernel,arora2018optimization,gunasekar2017implicit,wei2019regularization,li2019towards}. We may surmise that using deep neural nets in reinforcement learning~(RL) will work well for the same reason, learning effective representations that generalize due to such implicit regularization effects. But is this actually the case for value functions trained via bootstrapping? 

In this paper, we argue that, while implicit regularization leads to effective representations in supervised deep learning, it may lead to poor learned representations when training overparameterized deep network value functions. 
In order to rule out confounding effects from exploration and non-stationary data distributions, we focus on the offline RL setting -- where deep value networks must be trained from a static dataset of experience.
There is already evidence that value functions trained via bootstrapping learn poor representations: value functions trained with offline deep RL eventually degrade in performance~\citep{agarwal2019optimistic, kumar2021implicit} and this degradation is correlated with the emergence of low-rank features
in the value network~\citep{kumar2021implicit}.
Our goal is to understand the underlying cause of the emergence of poor representations during bootstrapping and develop a potential solution. Building on the theoretical framework developed by \citet{blanc2020implicit,damian2021label}, we characterize the implicit regularizer that arises when training deep value functions with TD learning. The form of this implicit regularizer implies that TD-learning would co-adapt feature representations at state-action tuples that appear on either side of a Bellman backup.

We show that this theoretically predicted aliasing phenomenon manifests in practice as feature \textbf{co-adaptation}, where the features of consecutive state-action tuples learned by the Q-value network become very similar in terms of their dot product~(\Secref{sec:problem}). This co-adaptation co-occurs with oscillatory learning dynamics, and training runs that exhibit feature co-adaptation typically converge to poorly performing solutions. Even when Q-values are not overestimated, prolonged training in offline RL can result in performance degradation as feature co-adaptation increases. 
To mitigate this co-adaptation issue, which arises as a result of implicit regularization, we propose an \emph{explicit regularizer} that we call \methodname~(\Secref{sec:method}).
While exactly estimating and cancelling the effects of the theoretically derived implicit regularizer is computationally difficult, \methodname\ provides a simple and tractable theoretically-inspired approximation that mitigates the issues discussed above. In practice, \methodname\ amounts to regularizing the features at consecutive state-action pairs to be dissimilar in terms of their dot-product similarity. Empirically, we find that \methodname\ prevents previously noted pathologies such as feature rank collapse~\citep{kumar2021implicit},  gives methods that train for longer and improves performance relative to the base offline RL method employed in practice.

Our first contribution is the derivation of the implicit regularizer that arises when training deep net value functions via TD learning, and an empirical demonstration that it manifests as \emph{feature co-adaptation} in the offline deep RL setting.
Feature co-adaptation accounts at least in part for some of the challenges of offline deep RL, including degradation of performance with prolonged training. Second, we propose a simple and effective \emph{explicit} regularizer for offline value-based RL, \methodname, which minimizes the feature similarity between state-action pairs appearing in a bootstrapping update. \methodname\ is inspired by the theoretical derivation of the implicit regularizer, it alleviates co-adaptation and can be easily combined with modern offline RL methods, such as REM~\citep{agarwal2019optimistic}, CQL~\citep{kumar2020conservative}, and BRAC~\citep{wu2019behavior}. Empirically, using \methodname\ in conjunction with existing offline RL methods provides about \textbf{60\%} performance improvement on the harder D4RL~\citep{fu2020d4rl} tasks, and \textbf{160\%} and \textbf{25\%} stability gains for REM and CQL, respectively, on offline RL tasks in 17 Atari 2600 games. Additionally, we observe large improvements on image-based robotic manipulation tasks~\citep{singh2020cog}.

\vspace{-7pt}
\section{Preliminaries}
\vspace{-5pt}
\label{sec:background}
The goal in RL is to maximize the long-term discounted reward in an MDP, defined as $(\states, \actions, R, P, \gamma)$~\citep{puterman1994markov}, with state space $\states$, action space $\actions$, a reward function $R(\bs, \ba)$, dynamics $P(\bs' | \bs, \ba)$ and a discount factor $\gamma \in [0, 1)$. The Q-function $Q^\pi(\bs, \ba)$ for a policy $\pi(\ba|\bs)$ is the expected sum of discounted rewards obtained by executing action $\ba$ at state $\bs$ and following $\pi(\ba|\bs)$ thereafter.
$Q^\pi(\bs, \ba)$ is the fixed point of $Q(\bs, \ba) := R(\bs, \ba) + \gamma \E_{\bs' \sim P(\cdot|\bs, \ba), \ba' \sim \pi(\cdot|\bs')} \left[ Q(\bs', \ba')\right]$. We study the offline RL setting, where the algorithm must learn a policy only using a given dataset $\mathcal{D} = \{(\bs_i, \ba_i, \bs'_i, r_i)\}$, generated from some behavior policy, $\behavior(\ba|\bs)$, without active data collection. The Q-function is parameterized with a neural net with parameters $\theta$. We will denote the penultimate layer of the deep network (the learned \emph{features}) $\phi_\theta(\bs, \ba)$,
such that $Q_\theta(\bs, \ba) = \bw^T \phi(\bs, \ba)$, 
where $\bw \in \mathbb{R}^{d}$. Standard deep RL methods~\citep{Mnih2015,Haarnoja18} convert the Bellman equation into a squared temporal difference~(TD) error objective for $Q_\theta$:
\vspace{-0.12in}
\begin{equation}
\label{eqn:td_error}
 \small{\mathcal{L}_\mathrm{TD}(\theta) = \sum_{\bs, \ba, \bs' \sim \mathcal{D}} \left(R(\bs, \ba) + \gamma \overline{Q}_\theta(\bs', \ba') - Q_\theta(\bs, \ba) \right)^2},   
\end{equation}
\vspace{-0.12in}
where $\bar{Q}_\theta$ is a delayed copy of same Q-network, referred to as the \emph{target network} and $\ba'$ is computed by maximizing the target Q-function at state $\bs'$ for Q-learning (i.e., when computing $Q^*$) and by sampling $\ba' \sim \pi(\cdot|\bs)$ when computing the Q-value $Q^\pi$ of a policy $\pi$.  

A major problem in offline RL is the issue of distributional shift between the learned policy and the behavior policy~\citep{levine2020offline}. Since our goal is to study the effect of implicit regularization in TD-learning and not distributional shift, we build on top of existing offline RL methods in our experiments: CQL~\citep{kumar2020conservative}, which penalizes erroneous Q-values during training, REM~\citep{agarwal2019optimistic}, which utilizes an ensemble of Q-functions, and BRAC~\citep{wu2019behavior}, which applies a policy constraint. An overview of these methods is provided in Appendix~\ref{app:additional_background}.

\vspace{10pt}
\section{Implicit Regularization in Deep RL via TD-Learning}
\vspace{-7pt}
\label{sec:problem}
While the ``deadly-triad''~\citep{suttonrlbook} suggests that training value function approximators with bootstrapping off-policy can lead to divergence, modern deep RL algorithms have been able to successfully combine these properties~\citep{Hasselt2018DeepRL}. However, making too many TD updates to the Q-function in offline deep RL is known to sometimes lead to performance degradation and unlearning, even for otherwise effective modern algorithms~\citep{fu2019diagnosing, fedus2020revisiting,agarwal2019optimistic,kumar2021implicit}. Such unlearning is not typically observed when training overparameterized models via supervised learning, so what about TD learning is responsible for it? We show that one possible explanation behind this pathology is the implicit regularization induced by minimizing TD error on a deep Q-network. Our theoretical results suggest that this implicit regularization ``co-adapts'' the representations of state-action pairs that appear in a Bellman backup (we will define this more precisely below).
Empirically, this typically manifests as ``co-adapted'' features for consecutive state-action tuples, even with specialized TD-learning algorithms that account for distributional shift, and this in turn leads to poor final performance both in theory and in practice. We first provide empirical evidence of this co-adaptation phenomenon in Section~\ref{app:problem_more} (additional evidence in Appendix~\ref{app:more_evidence_coadaptation}) and then theoretically characterize the implicit regularization in TD learning, and discuss how it can explain the co-adaptation phenomenon in Section~\ref{sec:theory}.

\begin{figure}[t]
    \centering
    \vspace{-5pt}
    \includegraphics[width=0.67\linewidth]{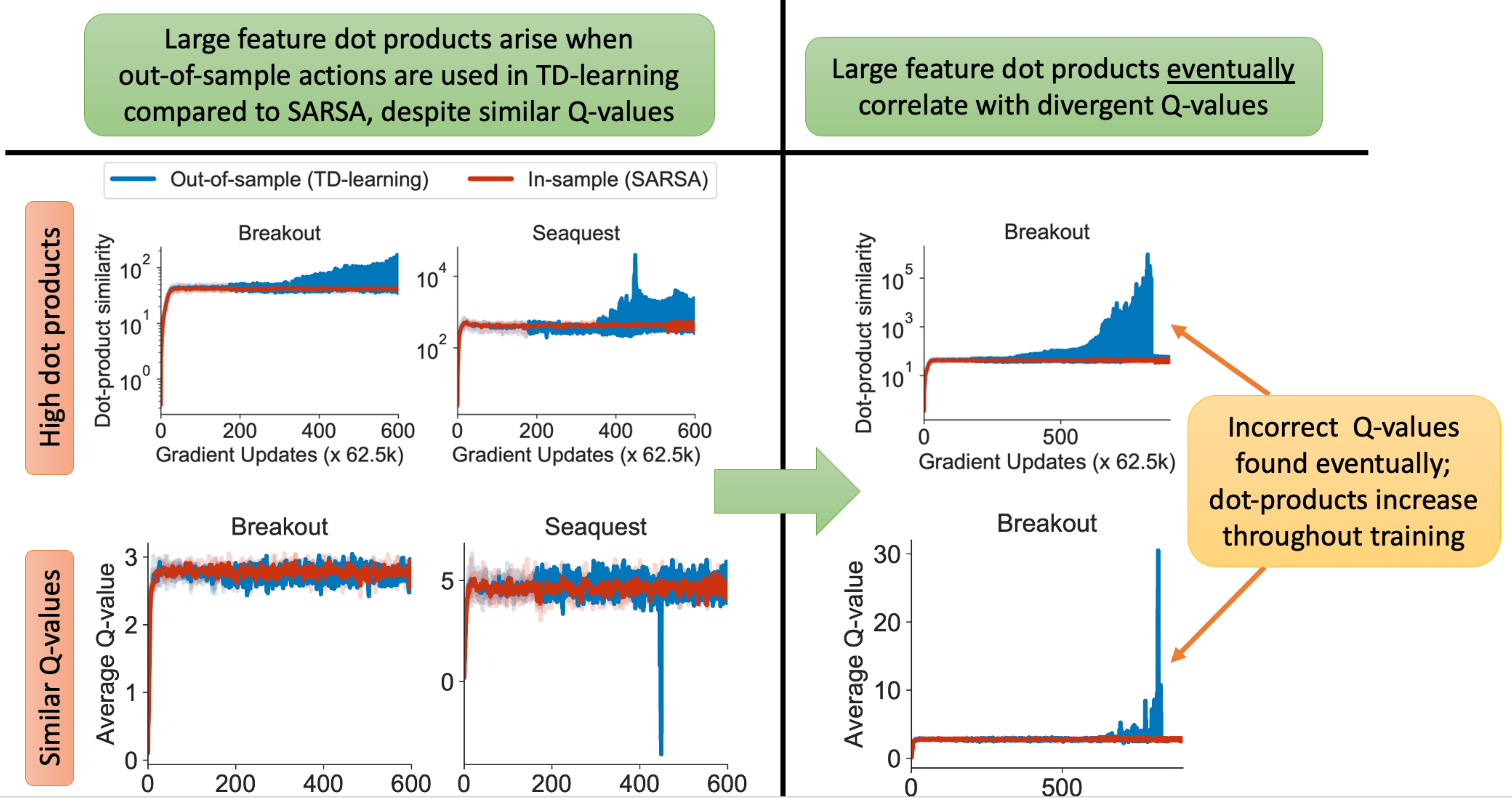}~~
    \includegraphics[width=0.32\linewidth]{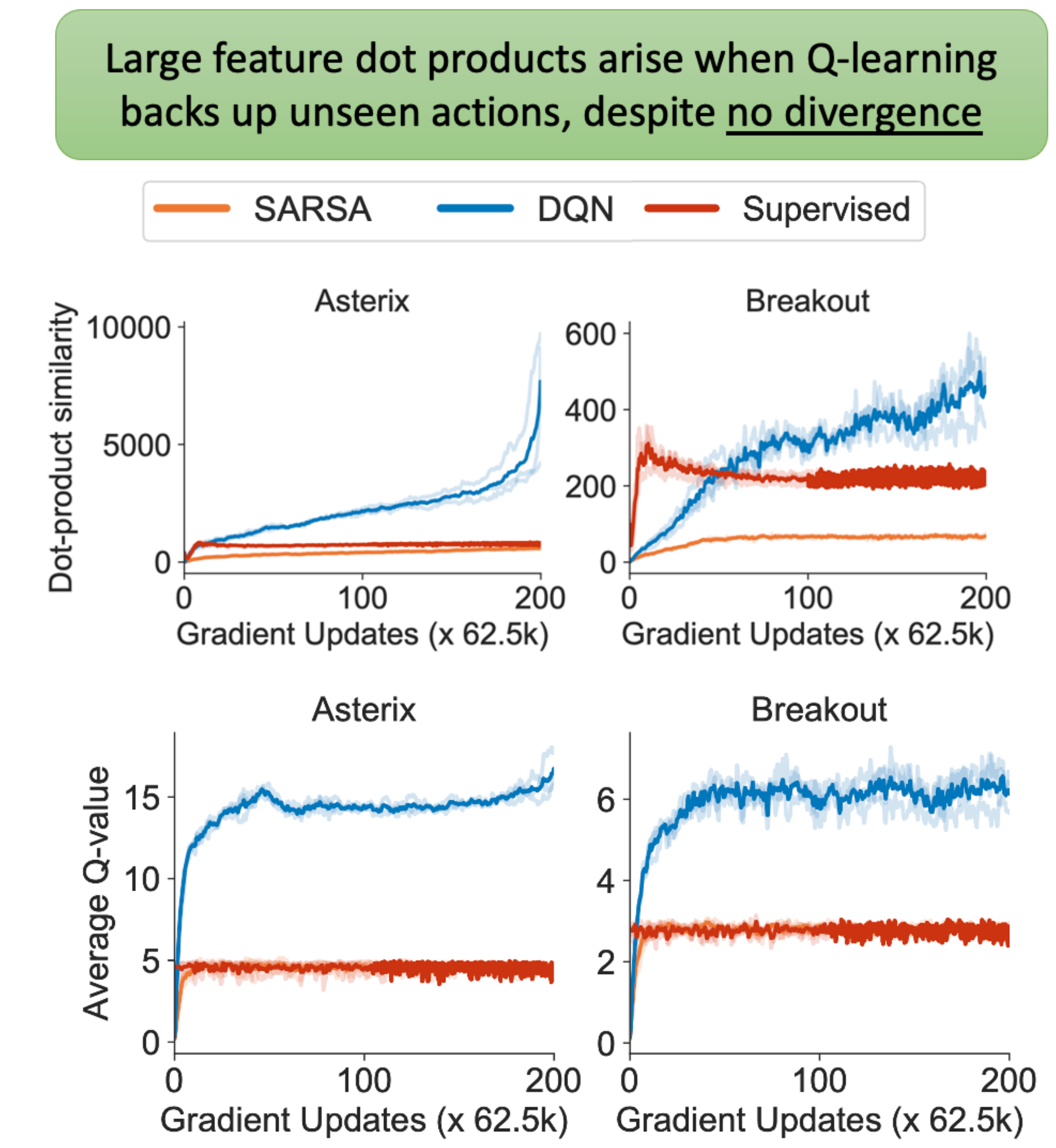}
    \vspace{-0.3cm}
    \caption{\small{Feature dot-products $\phi(\bs, \ba)^\top \phi(\bs', \ba')$ increase during training when backing up from \emph{out-of-sample} but in-distribution actions (\textbf{TD-learning}: left, \textbf{Q-learning}: right), though the average Q-value converges and stays relatively constant. Using only seen state-action pairs for backups (\textbf{offline SARSA}) or not performing Bellman backups (i.e., \textbf{supervised regression}) avoids this issue, with stable and relatively low dot products. \textit{Left}: TD-learning with high feature dot products eventually destabilizes and produces incorrect Q-values, \textit{Right}: DQN attains extremely large feature dot products, despite a relatively stable trend in Q-values.}}  
    \label{fig:dot_products}
    \vspace{-0.6cm}
\end{figure}

\vspace{-5pt}
\subsection{Feature Co-Adaptation And How It Relates To Implicit Regularization}
\label{app:problem_more}
\vspace{-5pt}
In this section, we empirically identify a \emph{feature co-adaptation} phenomenon that appears when training value functions via bootstrapping, where the feature representations of consecutive state-action pairs exhibit a large value of the dot product $\phi(\bs, \ba)^\top \phi(\bs', \ba')$. Note that feature co-adaptation may arise because of high cosine similarity or because of high feature norms. Feature co-adaptation appears even when there is no explicit objective to increase feature similarity.

\textbf{Experimental setup.} We ran supervised regression and three variants of approximate dynamic programming (ADP)
on an offline dataset consisting of 1\% of uniformly-sampled data from the replay buffer of DQN on two Atari games, previously used in \citet{agarwal2019optimistic}. First, for comparison, we trained a Q-function via \textbf{supervised regression} to Monte-Carlo~(MC) return estimates on the offline dataset to estimate the value of the behavior policy. Then, we trained variants of ADP which differ in the selection procedure for the action $\ba'$ that appears in the target value in $\mathcal{L}_\mathrm{TD}(\theta)$ (Equation~\ref{eqn:td_error}). The \textbf{offline SARSA} variant aims to estimate the value of the behavior policy, $Q^{\pi_\beta}$, and sets $\ba'$ to the actual action observed at the next time step in the dataset, such that $(\bs', \ba') \in \mathcal{D}$. The \textbf{TD-learning} variant also aims to estimate the value of the behavior policy, but utilizes the expectation of the target Q-value over actions $\ba'$ sampled from the behavior policy $\pi_\beta$, $\ba' \sim \pi_\beta(\cdot|\bs')$. We do not have access to the functional form of $\pi_\beta$ for the experiment shown in Figure~\ref{fig:dot_products} since the dataset corresponds to the behavior policy induced by the replay buffer of an online DQN, so we train a model for this policy using supervised learning. However, we see similar results comparing \textbf{offline SARSA} and \textbf{TD-learning} on a gridworld domain where we can access the exact functional form of the behavior policy in Appendix~\ref{app:exact_behavior_policy}. %
All of the methods so far estimate $Q^{\pi_\beta}$ using different target value estimators.
We also train \textbf{Q-learning}, which chooses the action $\ba'$ to maximize the learned Q-function. While Q-learning learns a different Q-function, we can still compare the relative stability of these methods to gain intuition about the learning dynamics. %
In addition to feature dot products $\phi(\bs, \ba)^\top \phi(\bs', \ba')$, we also track the average prediction of the Q-network over the dataset to measure whether the predictions diverge or are stable in expectation.

\textbf{Observing feature co-adaptation empirically.} As shown in Figure~\ref{fig:dot_products} (right), the average dot product (top row) between features at consecutive state-action tuples continuously increases for both Q-learning and TD-learning (after enough gradient steps), whereas it flatlines and converges to a small value for supervised regression. We might at first think that this is simply a case of Q-learning failing to converge. However, the bottom row shows that the average Q-values do in fact converge to a stable value. Despite this, the optimizer drives the network towards higher feature dot products. There is no explicit term in the TD error objective that encourages this behavior, 
indicating the presence of some implicit regularization phenomenon. This \emph{implicit} preference towards maximizing the dot products of features at consecutive state-action tuples is what we call ``feature co-adaptation.''

\textbf{When does feature co-adaptation emerge?} Observe in Figure~\ref{fig:dot_products} (right) that the feature dot products for offline SARSA converge quickly and are relatively flat, similarly to supervised regression. This indicates that utilizing a bootstrapped update alone is not responsible for the increasing dot-products and instability, because while offline SARSA uses backups, it behaves similarly to supervised MC regression. Unlike offline SARSA, feature co-adaptation emerges for TD-learning, which is surprising as TD-learning also aims to estimate the value of the behavior policy, and hence should match offline SARSA in expectation. The key difference is that while offline SARSA always utilizes actions $\ba'$ observed in the training dataset for the backup, TD-learning may utilize potentially unseen actions $\ba'$ in the backup, even though these actions $\ba' \sim \pi_\beta(\cdot|\bs')$ are \emph{within} the distribution of the data-generating policy. This suggests that utilizing \textbf{out-of-sample} actions in the Bellman backup, even when they are not out-of-distribution, critically alters the learning dynamics. This is distinct from the more common observation in offline RL, which attributes training challenges to out-of-distribution actions~\citep{levine2020offline}, but not out-of-sample actions. The theoretical model developed in Section~\ref{sec:theory} will provide an explanation for this observation with a discussion about how feature co-adaption caused due to out-of-sample actions can be detrimental in offline RL.

\subsection{Theoretically Characterizing Implicit Regularization in TD-Learning}
\label{sec:theory} 
Why does feature co-adaptation emerge in TD-learning and what do \emph{out-of-sample} actions have to do with it? To answer this question, we theoretically characterize the implicit regularization effects in TD-learning. We analyze the learning dynamics of TD learning in the overparameterized regime, where there are many different parameter vectors $\theta$ that fully minimize the training set temporal difference error. We base our analysis of TD learning on the analysis of implicit regularization in supervised learning, previously developed by \citet{blanc2020implicit,damian2021label}.

\textbf{Background.} When training an overparameterized $f_\theta(\bx)$ via supervised regression using the squared loss, denoted by $L$, many different values of $\theta$ will satisfy $L(\theta)=0$ on the training set due to overparameterization, but \citet{blanc2020implicit} show that the dynamics of stochastic gradient descent will only find fixed points $\theta^*$ that additionally satisfy a condition which can be expressed as $\nabla_\theta R(\theta^*) = 0$, along certain directions (that we will describe shortly). This function $R(\theta)$ is referred to as the implicit regularizer. The noisy gradient updates analyzed in this model have the form:  
\vspace{-0.05in}
\begin{equation}
\label{eq:gradient_update}
    \theta_{k+1} \leftarrow \theta_k - \eta \nabla_\theta L(\theta) + \eta \varepsilon_k, ~~ \varepsilon_k \sim \mathcal{N}(0, M).
\end{equation}
\vspace{-0.05in}
\citet{blanc2020implicit} and \citet{damian2021label} show that some common SGD techniques fall into this framework, for example, when the regression targets in supervised learning are corrupted with $\mathcal{N}(0, 1)$ label noise, then the resulting $M = \sum_{i=1}^{|\mathcal{D}|} \nabla_\theta f_\theta(\bx_i) \nabla_\theta f_\theta(\bx_i)^\top$ and the induced implicit regularizer $R$ is given by $R(\theta) = \sum_{i}^{|\mathcal{D}|} ||\nabla_\theta f_\theta(\bx_i)||_2^2$. Any solution $\theta^*$ found by Equation~\ref{eq:gradient_update} must satisfy $\nabla_\theta R(\theta^*) = 0$ along directions $\bv \in \mathbb{R}^{|\theta|}$ which lie in the null space of the Hessian of the loss $\nabla^2_\theta L(\theta^*)$ at $\theta^*$,  $\bv \in \text{Null}(\nabla^2_\theta L(\theta^*))$. The intuition behind the implicit regularization effect is that along such directions in the parameter space, the Hessian is unable to contract $\theta_k$ when running noisy gradient updates (Equation~\ref{eq:gradient_update}). Therefore, the only condition that the noisy gradient updates converge/stabilize at $\theta^*$ is given by the condition that $\nabla R(\theta^*) = 0$. This model corroborates empirical findings~\citep{mulayoff2020unique, damian2021label} about the solutions found by SGD with deep nets, which motivates our use of this framework. 

\textbf{Our setup.} Following this framework, we analyze the fixed points of noisy TD-learning. We consider noisy pseudo-gradient (or semi-gradient) TD updates with a general noise covariance $M$:
\vspace{-0.05in}
\begin{align}
    \theta_{k+1} = \theta_k - \eta \underbrace{\left( \sum_i \nabla_\theta Q(\bs_i, \ba_i) \left(Q_\theta(\bs_i, \ba_i)\!- \!(r_i\!+\!\gamma {Q}_{\theta}(\bs'_i, \ba'_i))\right) \right)}_{:= g(\theta)} +\eta \varepsilon_k,  ~~ \varepsilon_k \sim \mathcal{N}(0, M)
\label{eq:td_update}
\end{align}
We use a deterministic policy $\ba'_i = \pi(\bs'_i)$ to simplify exposition. Following \citet{damian2021label}, we can set the noise model $M$ as $M = \sum_i \nabla_\theta Q(\bs_i, \ba_i) \nabla_\theta Q(\bs_i, \ba_i)^\top$, or utilize a different choice of $M$, but we will derive the general form first.  Let $\theta^*$ denote a stationary point of the training TD error, such that the pseudo-gradient
$g(\theta^*) = 0$. Further, we denote the derivative of $g(\theta)$ w.r.t. $\theta$ as the matrix $G(\theta) \in \mathbb{R}^{|\theta| \times |\theta|}$, and refer to it as the \emph{pseudo-Hessian}: although $G(\theta)$ is not actually the second derivative of any well-defined objective, since TD updates are not proper gradient updates, as we will see it will play a similar role to the Hessian in gradient descent. For brevity, define $G = G(\theta^*)$, $g = g(\theta^*)$, $\nabla G = \nabla_\theta G(\theta^*) \in \mathbb{R}^{|\theta| \times |\theta| \times |\theta|}$, and let $\lambda_i(P)$ denote the $i$-th eigenvalue of matrix $P$, when arranged in decreasing order of its (complex) magnitude $|\lambda_i(P)|$ (note that an eigenvalue can be complex). 

\textbf{Assumptions.} To simplify analysis, we assume that matrices $G$ and $M$ (i.e., the noise covariance matrix) span the same $n$-dimensional basis in $d$-dimensional space, where $d$ is the number of parameters and $n$ is the number of datapoints, and $n \ll d$ due to overparameterization. We also require $\theta^*$ to satisfy a technical criterion that requires approximate alignment between the eigenspaces of $G$ and the gradient of the Q-function, without which noisy TD may not be stable at $\theta^*$. We summarize all the assumptions in Appendix~\ref{app:proofs}, and present the resulting regularizer below. 

\begin{theorem}[Implicit regularizer at TD fixed points]
\label{thm:implicit_noise_reg}
Under the assumptions so far, a fixed point of TD-learning,  $\theta^*$, where $Q_{\theta^*}(\bs_i, \ba_i) = r_i + \gamma Q_{\theta^*}(\bs'_i, \ba'_i)$ for every $(\bs_i, \ba_i, \bs'_i) \in \mathcal{D}$ is stable (atttractive) if: \textbf{(1)} it satisfies $\mathrm{Re}(\lambda_i(G)) \geq 0, \forall i$ and $\mathrm{Re}(\lambda_i(G)) > 0$ if $|\mathrm{Imag}(\lambda_i(G))| > 0$, and \textbf{(2)} along directions $\bv \in \mathbb{R}^{\text{dim}(\theta)}, \bv \in \text{Null}(G)$, $\theta^*$ is the stationary point of the induced implicit regularizer:
\begin{align}
\label{eqn:regularizer}
R_\mathrm{TD}(\theta) &= \underbracket{\eta \sum_{i=1}^{|\mathcal{D}|} \nabla Q_\theta(\bs_i, \ba_i)^\top \Sigma^{*}_M \nabla Q_\theta(\bs_i, \ba_i)}_{\text{the implicit regularizer for noisy GD in supervised learning}} \nonumber\\
&~~~~~~~~~~~~~~~~ - \underbracket{\eta \gamma \sum_{i=1}^{|\mathcal{D}|} \mathrm{trace}\left(\left[\left[\nabla Q_\theta(\bs'_i, \ba'_i)^\top\right]\right]^\top \Sigma_M^* \nabla Q_\theta(\bs_i, \ba_i)  \right)}_{\text{additional term in TD learning}},
\end{align}
where $(\bs_i, \ba_i)$ and $(\bs'_i, \ba'_i)$ denote state-action pairs that appear together in a Bellman update, $[[\square]]$ denotes the stop-gradient function, which does not pass partial gradients w.r.t. $\theta$ into $\square$. $\Sigma^*_M$ is the fixed point of the  discrete Lyapunov equation: $\Sigma^*_M := (I - \eta G) \Sigma^*_M (I - \eta G)^\top + \eta^2 M$.
\end{theorem}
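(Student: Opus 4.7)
The plan is to adapt the implicit regularization framework of Blanc et al. and Damian et al. to the non-gradient-flow setting of TD learning, carefully tracking where the asymmetry of the TD pseudo-Hessian $G$ forces the derivation to diverge from the supervised case. I would write $\theta_k = \theta^* + \delta_k$, split $\delta_k$ into a component in the row span of $G$ (``contractive directions'') and its orthogonal complement, which by the alignment-style assumption coincides with $\mathrm{Null}(G)$ (``flat directions''), and then analyze the two subspaces separately.

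First I would do a first-order expansion $g(\theta^* + \delta) = G\delta + O(\|\delta\|^2)$ and study the noisy linearized update $\delta_{k+1} = (I-\eta G)\delta_k + \eta \varepsilon_k$. Stability in the contractive directions requires all eigenvalues of $I - \eta G$ to have modulus at most one, which for sufficiently small $\eta$ reduces to $\mathrm{Re}(\lambda_i(G)) \geq 0$, with strict inequality needed when $|\mathrm{Imag}(\lambda_i(G))| > 0$ to kill the marginal spiraling that complex eigenvalues otherwise produce. Iterating the linearized update and taking expectations then yields the stationary fluctuation covariance as the unique solution of the discrete Lyapunov equation $\Sigma^*_M = (I-\eta G)\Sigma^*_M(I-\eta G)^\top + \eta^2 M$ stated in the theorem.

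For the flat directions the first-order term has no contraction, so I would push the Taylor expansion one order further, $g(\theta^* + \delta) = G\delta + \tfrac{1}{2}\nabla G \cdot (\delta, \delta) + O(\|\delta\|^3)$, project onto a null direction $\bv \in \mathrm{Null}(G)$, and take the expectation over the stationary contractive fluctuations with covariance $\Sigma^*_M$. The linear term averages out, leaving an effective drift proportional to $\eta\,\mathrm{trace}\!\left(\Sigma^*_M \, \partial_{\bv} G(\theta^*)\right)$. Requiring this drift to vanish along every $\bv \in \mathrm{Null}(G)$ is precisely the condition that the scalar $\theta \mapsto \mathrm{trace}(\Sigma^*_M\, G(\theta))$ is stationary along $\mathrm{Null}(G)$ at $\theta^*$. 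I would then substitute the explicit TD pseudo-Hessian $G(\theta^*) = \sum_i \nabla Q_\theta(\bs_i,\ba_i)\bigl(\nabla Q_\theta(\bs_i,\ba_i) - \gamma \nabla Q_\theta(\bs'_i,\ba'_i)\bigr)^\top$ (the $\nabla^2 Q$ pieces of $\partial g/\partial \theta$ drop out because the residuals $Q_{\theta^*}(\bs_i,\ba_i) - r_i - \gamma Q_{\theta^*}(\bs'_i,\ba'_i)$ vanish at the fixed point) and compute the trace to obtain exactly the two summands of $R_{\mathrm{TD}}$.

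The main obstacle is the asymmetry of $G$: in the supervised case the Gauss--Newton Hessian $\sum_i \nabla f\nabla f^\top$ is symmetric and $\mathrm{trace}(\Sigma^* H)$ appears as a genuine second-order term of the loss, whereas for TD the analogous scalar $\mathrm{trace}(\Sigma^*_M G)$ is not the expansion of any squared objective because the pseudo-gradient $g(\theta)$ is not a true gradient. This is precisely what forces the stop-gradient on $\nabla Q_\theta(\bs'_i,\ba'_i)$ in the second summand of $R_{\mathrm{TD}}$: when one differentiates the regularizer and matches it term-by-term against $\mathrm{trace}\!\left(\Sigma^*_M\, \partial_\theta G(\theta)\right)$, only the dependence through the ``current'' factor $\nabla Q_\theta(\bs_i,\ba_i)$ gets differentiated, because the bootstrapped factor $\nabla Q_\theta(\bs'_i,\ba'_i)$ enters $G$ in the same frozen-target role it plays in the TD pseudo-gradient. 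Making this matching precise, together with verifying the alignment-style assumption on $G$ and $M$ that legitimizes the range/null decomposition for non-symmetric $G$ and guarantees $\Sigma^*_M$ is well defined, is where the bulk of the technical care lies.
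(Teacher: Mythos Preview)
Your overall strategy---linearize around $\theta^*$, extract $\Sigma^*_M$ from the discrete Lyapunov equation governing the contractive fluctuations, then push the Taylor expansion of $g$ to second order and read off a drift in the null directions---matches the paper's, which likewise follows Blanc et al.\ and Damian et al.\ via a reference OU process. Your treatment of condition~\textbf{(1)} and of the Lyapunov equation is correct.

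The gap is the step where you equate the null-space drift with $\mathrm{trace}\bigl(\Sigma^*_M\,\partial_{\bv} G\bigr)$ and hence with the directional derivative of $\theta\mapsto\mathrm{trace}\bigl(\Sigma^*_M\,G(\theta)\bigr)$. That identification requires the tensor $\nabla G$ to be symmetric under exchange of its output index and a differentiation index, which holds in supervised learning (there $g=\nabla L$, so $\nabla G=\nabla^3 L$ is fully symmetric) but fails for TD precisely because the semi-gradient $g$ is not a true gradient. The actual drift along $\bv$ is $\bv^\top\mathbb{E}\bigl[\nabla G[\delta,\delta]\bigr]$ with $\mathbb{E}[\delta\delta^\top]=\Sigma^*_M$, and computing $\nabla G[\delta,\delta]$ directly from $g(\theta)=\sum_i \nabla Q_i\,(Q_i-r_i-\gamma Q'_i)$ gives two pieces:
\[
\mathbb{E}\bigl[\nabla G[\delta,\delta]\bigr]
\;=\; 2\sum_i \nabla^2 Q_i\,\Sigma^*_M\bigl(\nabla Q_i-\gamma\nabla Q'_i\bigr)
\;+\;\sum_i \mathrm{tr}\!\bigl((\nabla^2 Q_i-\gamma\nabla^2 Q'_i)\,\Sigma^*_M\bigr)\,\nabla Q_i.
\]
Only the first sum is $\nabla_\theta R_{\mathrm{TD}}$, and \emph{this} is where the stop-gradient on $\nabla Q'_i$ actually originates: the drift contains $\nabla^2 Q_i\,\Sigma^*_M\,\nabla Q'_i$ but not $\nabla^2 Q'_i\,\Sigma^*_M\,\nabla Q_i$, so writing the first sum as the gradient of a scalar forces $\nabla Q'_i$ to be frozen. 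The second sum is not the gradient of any scalar; the paper disposes of it separately by observing it lies in $\mathrm{span}\{\nabla Q_i\}$, which under the alignment assumption is the range of $G$, so its accumulation through $(I-\eta G)^{k-j}$ contracts to a lower-order error. Your scalar $\mathrm{trace}\bigl(\Sigma^*_M\,G(\theta)\bigr)$, differentiated without stop-gradient, instead produces an extra term $\sum_i(\nabla^2 Q_i-\gamma\nabla^2 Q'_i)\,\Sigma^*_M\,\nabla Q_i$ that matches neither piece of the true drift, and your ``frozen-target role'' heuristic for the stop-gradient does not recover this splitting.
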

A proof of Theorem~\ref{thm:implicit_noise_reg} is provided in Appendix~\ref{app:proofs}. Next, we explain the intuition behind this result and provide a proof sketch. To derive the induced implicit regularizer for a stable fixed point $\theta^*$ of TD error, we study the learning dynamics of noisy TD learning (Equation~\ref{eq:td_update}) initialized at $\theta^*$, and derive conditions under which this noisy update would stay close to $\theta^*$ with multiple updates.  This gives rise to the two conditions shown in Theorem~\ref{thm:implicit_noise_reg} which can be understood as controlling stability in mutually exclusive directions in the parameter space. If condition \textbf{(1)} is not satisfied, then even under-parameterized TD will diverge away from $\theta^*$, since $I - \eta G$ would be a non-contraction as the spectral radius, $\rho(I - \eta G) \geq 1$ in that case. Thus, $\theta_k - \theta^*$ will grow or not decrease in some direction. When \textbf{(1)} is satisfied for all directions in the parameter space, there are still directions where both the real and imaginary parts of the eigenvalue $\lambda_i(G)$ are $0$ due to overparameterization\footnote{To see why this is the case, note that $\text{rank}(G) \leq |\mathcal{D}| \ll \text{dim}(\theta)$, and so some eigenvalues of $G$ are $0$.}. 
In such directions, learning is governed by the projection of the noise under the tensor  $\nabla G$,
which appears in the Taylor expansion of $\theta_k - \theta^*$ around the point $\theta^*$:
\begin{align}
    \label{eqn:nu_k}
    &\theta_{k+1} = \theta_k - \eta \left(g + G (\theta_k - \theta^*) + \frac{1}{2} \nabla G [\theta_k -\theta^*, \theta_k - \theta^*] \right) + \varepsilon_k, ~~ \varepsilon_k \sim \mathcal{N}(0, M)\\
    \implies &\nu_{k+1} = (I - \eta G )\nu_k  - \frac{\eta}{2} \nabla G [\nu_k, \nu_k] + \varepsilon_k,
    \label{eqn:nu_k_actual}
\end{align}
where we reparameterize in terms of $\nu_k := \theta_k - \theta^*$. The proof shows that $\theta^*$ is stable if it is a stationary point of the implicit regularizer $R_\mathrm{TD}$ (condition \textbf{(2)}), which ensures that total noise (i.e., accumulated $\varepsilon_k$ over iterations $k$) accumulated by $\nabla G$ does not lead to a large deviation in $\nu_k$ in directions where $I - \eta G$ does not contract.

\textbf{Interpretation of Theorem~\ref{thm:implicit_noise_reg}.} While the choice of the noise model $M$ will change the form of the implicit regularizer, in practice, the form of $M$ is not known as this corresponds to the noise induced via SGD. We can consider choices of $M$ for interpretation, but Theorem~\ref{thm:implicit_noise_reg} is easy to qualitatively interpret for $M$ such that $\Sigma^*_M = I$. In this case, we find that the implicit preference towards local minima of $R_\mathrm{TD}(\theta)$ can explain feature co-adaptation. In this case, the regularizer takes a simpler form:
\begin{align*}
    R_\mathrm{TD}(\theta) := \sum_i ||\nabla Q_\theta(\bs_i, \ba_i) ||_2^2 - \gamma \nabla Q_\theta(\bs_i, \ba_i) \nabla [[Q_\theta(\bs'_i, \ba'_i)]].
\end{align*}
The first term is equal to the squared per-datapoint gradient norm, which is same as the implicit regularizer in supervised learning obtained by \citet{blanc2020implicit,damian2021label} with label noise. However, $R_\mathrm{TD}(\theta)$ additionally includes a second term that is equal to the dot product of the gradient of the Q-function at the current and next states, $\nabla_\theta Q_\theta(\bs_i, \ba_i)^\top \nabla_\theta Q_\theta(\bs'_i, \ba'_i)$, and thus this term is effectively \emph{maximized}. When restricted to the last-layer parameters of a neural network,
this term is equal to the dot product of the features at consecutive state-action tuples: $\sum_i \nabla_\theta Q_\theta(\bs_i, \ba_i)^\top \nabla_\theta Q_\theta (\bs'_i, \ba'_i) = \sum_i \phi(\bs_i, \ba_i)^\top \phi(\bs'_i, \ba'_i)$. The tendency to maximize this quantity to attain a local minimizer of the implicit regularizer corroborates the empirical findings of increased dot product in Section~\ref{app:problem_more}. 

\textbf{Explaining the difference between utilizing seen and unseen actions in the backup.} If all state-action pairs $(\bs'_i, \ba'_i)$ appearing on the right-hand-side of the Bellman update also appear in the dataset $\mathcal{D}$, as in the case of offline SARSA (Figure~\ref{fig:dot_products}), the preference to increase dot products will be balanced by the affinity to reduce gradient norm (first term of $R_\mathrm{TD}(\theta)$ when $\Sigma^*_M = I$): for example, for offline SARSA, when $(\bs'_i, \ba'_i)$ are permutations of $(\bs_i, \ba_i)$, $R_\mathrm{TD}$ is lower bounded by $(1 - \gamma) \sum_i ||\nabla_\theta Q_\theta(\bx_i)||_2^2$ and hence minimizing $R_\mathrm{TD}(\theta)$ would minimize the feature norm instead of maximizing dot products. This also corresponds to the implicit regularizer we would obtain when training Q-functions via supervised learning and hence, our analysis predicts that offline SARSA with in-sample actions (i.e., when $(\bs', \ba') \in \mathcal{D}$) would behave similarly to supervised regression. 

However, the regularizer behaves very differently when unseen state-action pairs $(\bs'_i, \ba'_i)$ appear only on the right-hand-side of the backup. This happens with any algorithm where $\ba'$ is not the dataset action, which is the case for all deep RL algorithms that compute target values by selecting $\ba'$ according to the current policy. In this case, we expect the dot product of gradients at $(\bs, \ba)$ and $(\bs', \ba')$ to be large at any attractive fixed point, since this minimizes $R_\mathrm{TD}(\theta)$. This is precisely a form of co-adaptation: \textit{gradients at out-of-sample state-action tuples are highly similar to gradients at observed state-action pairs measured by the dot product}. This observation is also supported by the analysis in Section~\ref{app:problem_more}. Finally, note that the choice of $M$ is a modelling assumption, and to derive our explicit regularizer, later in the paper, we will make a simplifying choice of $M$. However, we also empirically verify that a different choice of $M$, given by label noise, works well.

\begin{figure}[t]
    \centering
    \includegraphics[width=0.99\linewidth]{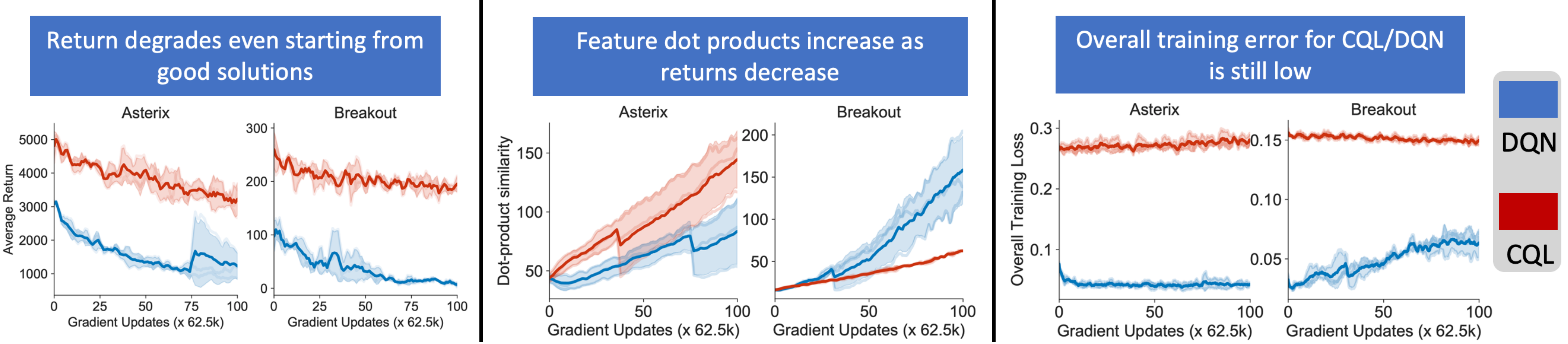}
    \vspace{-0.3cm}
    \caption{\small{Even when current offline RL algorithms are initialized at a high-performing checkpoint that attains small feature dot products, feature dot products increase with further training and the performance degrades.}}  
    \label{fig:stability}
    \vspace{-0.55cm}
\end{figure}

\textbf{Why is implicit regularization detrimental to policy performance?}
To answer this question, we present theoretical and empirical evidence that illustrates the adverse effects of this implicit regularizer. Empirically, we ran two algorithms, DQN and CQL, initialized from a high-performing Q-function checkpoint,
which attains relatively small feature dot products (i.e., the second term of $R_\mathrm{TD}(\theta)$ is small). Our goal is to see if TD updates starting from such a ``good'' initialization still stay around it or diverge to poorer solutions. Our theoretical analysis in Section~\ref{sec:theory} would predict that TD learning would destabilize from such a solution, since it would not be a stable fixed point. Indeed, as shown in Figure~\ref{fig:stability}, the policy immediately degrades, and the the dot-product similarities start to increase. This even happens with CQL, which explicitly corrects for distributional shift confounds, implying that the performance drop cannot be directly explained by the typical out-of-distribution action explanations. To investigate the reasons behind this drop, we also measured the training loss function values for these algorithms (i.e., TD error for DQN and TD error + CQL regularizer for CQL) and find in Figure~\ref{fig:stability} that the loss values are generally small for both CQL and DQN. This indicates that the preference to increase dot products is not explained by an inability to minimize TD error. 
In Appendix~\ref{app:cql_stability}, we show that this drop in performance when starting from good solutions can be effectively mitigated with our proposed \methodname\ explicit regularizer for both DQN and CQL. Thus we find that not only standard TD learning degrades from a good solution in favor of increasing feature dot products, but keeping small dot products enables these algorithms to remain stable near the good solution.

To motivate why co-adapted features can lead to poor performance in TD-learning, we study the convergence of linear TD-learning on co-adapted features. Our theoretical result characterizes a lower bound on the feature dot products in terms of the feature norms for state-action pairs in the dataset $\mathcal{D}$, which if satisfied, will inhibit  convergence: 
\begin{proposition}[TD-learning on co-adapted features]
\label{thm:co_adapted_features_are_bad}
Assume that the features $\Phi = [\phi(\bs, \ba)]_{\bs, \ba}$ are used for linear TD-learning. Then, if $\sum_{\bs, \ba, \bs' \in \mathcal{D}} \phi(\bs, \ba)^\top \phi(\bs', \ba') \geq \frac{1}{\gamma} \sum_{\bs, \ba \in \mathcal{D}} \phi(\bs, \ba)^\top \phi(\bs, \ba)$, linear TD-learning using features $\Phi$ will not converge. 
\end{proposition}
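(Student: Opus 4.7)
}
The plan is to reduce the statement to a spectral condition on the batch linear TD update matrix, and then show that the hypothesis on feature dot products forces its trace to be non-positive, which is incompatible with contraction. First, I would write the linear TD-learning update explicitly. For a parameter vector $w \in \mathbb{R}^d$ with $Q_w(\bs,\ba) = w^\top \phi(\bs,\ba)$, the batch semi-gradient TD update over $\mathcal{D} = \{(\bs_i,\ba_i,r_i,\bs'_i,\ba'_i)\}$ takes the affine form
\begin{equation*}
w_{k+1} = (I - \eta A)\, w_k + \eta b, \qquad A := \sum_{i} \phi(\bs_i,\ba_i)\bigl(\phi(\bs_i,\ba_i) - \gamma\,\phi(\bs'_i,\ba'_i)\bigr)^\top, \quad b := \sum_i r_i\,\phi(\bs_i,\ba_i).
\end{equation*}
This is a standard linear dynamical system, and convergence from arbitrary initialization requires $I - \eta A$ to be a contraction, i.e. $\rho(I - \eta A) < 1$ for small enough $\eta > 0$, which is equivalent to every eigenvalue $\lambda$ of $A$ having strictly positive real part.

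Next, I would compute $\mathrm{tr}(A)$ and invoke the hypothesis. By linearity of trace and the cyclic identity $\mathrm{tr}(uv^\top) = v^\top u$,
\begin{equation*}
\mathrm{tr}(A) \;=\; \sum_i \phi(\bs_i,\ba_i)^\top \phi(\bs_i,\ba_i) \;-\; \gamma \sum_i \phi(\bs_i,\ba_i)^\top \phi(\bs'_i,\ba'_i).
\end{equation*}
The assumed inequality $\sum_i \phi(\bs_i,\ba_i)^\top \phi(\bs'_i,\ba'_i) \geq \tfrac{1}{\gamma}\sum_i \|\phi(\bs_i,\ba_i)\|_2^2$ rearranges to $\mathrm{tr}(A) \leq 0$. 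Since $A$ is real, its complex eigenvalues occur in conjugate pairs, and $\mathrm{tr}(A) = \sum_j \mathrm{Re}(\lambda_j(A))$. Hence at least one eigenvalue $\lambda^\star$ of $A$ satisfies $\mathrm{Re}(\lambda^\star) \leq 0$.

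I would then finish by showing this precludes convergence. Writing $\lambda^\star = a + ib$ with $a \leq 0$, the corresponding eigenvalue of $I - \eta A$ is $1 - \eta(a+ib)$ with squared magnitude $(1-\eta a)^2 + \eta^2 b^2 = 1 - 2\eta a + \eta^2(a^2+b^2) \geq 1$ for every $\eta > 0$. If the inequality is strict (which holds whenever $a < 0$ or $b \neq 0$), then $w_k$ diverges along the corresponding eigendirection from generic initializations. In the remaining boundary case $\lambda^\star = 0$, the matrix $A$ is singular, so there is no unique fixed point: along the null direction the iterates either stay constant or drift linearly depending on the component of $b$, and no limit is reached. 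In either case, linear TD-learning on $\Phi$ fails to converge.

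I do not expect a major obstacle here: the argument is a direct trace/spectral computation once the affine form of the update is written down. The only mild subtlety is the boundary case $\mathrm{Re}(\lambda^\star) = 0$, where ``non-convergence'' must be stated as ``no contraction and no unique fixed point'' rather than outright divergence; this is handled by the case split above.
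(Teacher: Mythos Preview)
Your proposal is correct and follows essentially the same approach as the paper: compute the trace of the TD update matrix $A=\Phi^\top(\Phi-\gamma\Phi')$, use the hypothesis to conclude $\mathrm{tr}(A)\le 0$, and deduce that some eigenvalue has non-positive real part, which precludes contraction. The paper's version is slightly terser---it cites a prior result (Proposition~3.1 of Ghosh \& Bellemare, 2020) for the spectral convergence criterion instead of deriving it from the affine update as you do---but the core argument is identical, and your handling of the boundary case $\mathrm{Re}(\lambda^\star)=0$ is in fact more careful than the paper's.
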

A proof of Proposition~\ref{thm:co_adapted_features_are_bad} is provided in Appendix~\ref{app:new_thm} and it relies on a stability analysis of linear TD. While features change during training for TD-learning with neural networks, and arguably linear TD is a simple model to study consequences of co-adapted features, even in this simple linear setting, Proposition~\ref{thm:co_adapted_features_are_bad} indicates that TD-learning may be non-convergent as a result of co-adaptation.

\textbf{Comparison to implicit regularization in TD learning with linear function approximation.} Running stochastic gradient descent in overparameterized linear regression finds solutions with the smallest $\ell_2$ norm, which is often regarded as the implicit regularizer. Based on this observation, one might wonder how our derived implicit regularizer relates to minimum norm solutions attained by gradient descent in overparameterized linear TD learning. The implicit regularizer we obtain in  Equation~\ref{eqn:regularizer} would be a constant, independent of the parameter vector $\theta$ for linear TD learning. Thus our regularization specifically captures the effect of SGD on non-linear function approximators, which are absent when studying linear function approximation. 

\textbf{Takeaways.} We summarize the key takeaways from our theoretical analysis below:
\vspace{-0.15cm}
\begin{itemize}
\vspace{-0.05cm}
    \item \emph{The implicit regularizer at TD fixed points} is shown in Equation~\ref{eqn:regularizer}. The first term corresponds to the regularizer for SGD in supervised learning, while the second term that is unique to TD and leads to an (undesirable) increase in gradient or feature dot products.
\vspace{-0.05cm}
    \item \emph{Out-of-sample actions exacerbate the implicit regularization effect,} since feature dot products can be easily increased when out-of-sample actions, which do not appear in the dataset, are used to compute Bellman targets.
\vspace{-0.05cm}
    \item The implicit regularizer in Equation~\ref{eqn:regularizer} is induced via a mechanism unique to non-linear Q-functions, different from overparameterized, linear TD-learning.
\end{itemize}
\vspace{-0.15cm}

\vspace{-5pt}
\section{\methodname: Explicit Regularization for Deep TD-Learning}
\label{sec:method}
\vspace{-5pt}
Since the implicit regularization effects in TD-learning can lead to feature co-adaptation, which in turn is correlated with poor performance, can we instead derive an \emph{explicit} regularizer to alleviate this issue? Inspired by the analysis in the previous section, we will propose an \emph{explicit} regularizer that attempts to counteract the second term in Equation~\ref{eqn:regularizer}, which would otherwise lead to co-adaptation and poor representations.
The \emph{explicit} regularizer that offsets the difference between the two implicit regularizers is given by: $\Delta(\theta) = \sum_i \mathrm{trace}\left[\Sigma^{* \top}_M \nabla_\theta Q_\theta(\bs_i, \ba_i) \nabla_\theta Q_\theta(\bs'_i, \ba'_i)^\top \right]$, which represents the second term of $R_\mathrm{TD}(\theta)$. Note that we drop the stop gradient on $Q_\theta(\bs'_i, \ba'_i)$ in $\Delta(\theta)$, as it performs slightly better in practice~(Table~\ref{tab:rem_phi_res}), although as shown in that Table, the version with the stop gradient also significantly improves over the base method.
The first term of $R_\mathrm{TD}(\theta)$ corresponds to the regularizer from supervised learning.
Our proposed method, DR3, simply combines approximations to $\Delta(\theta)$ with various offline RL algorithms. For any offline RL algorithm, \textsc{Alg}, with objective $\mathcal{L}_{\textsc{Alg}}(\theta)$, the training objective with \methodname\ is given by: $\mathcal{L}(\theta) := \mathcal{L}_{\textsc{Alg}}(\theta) + c_0 \Delta(\theta)$, where $c_0$ is the DR3 coefficient. See Appendix~\ref{app:tuning_dr3} for details on how we tune $c_0$ in this paper.

\textbf{Practical version of DR3.} In order to practically instantiate DR3, we need to choose a particular noise model $M$. In general, it is not possible to know beforehand the ``correct'' choice of $M$ (Equation~\ref{eq:td_update}), even in supervised learning, as this is a complicated function of the data distribution, neural network architecture and initialization. Therefore, we instantiate DR3 with two heuristic choices of $M$: \textbf{(i)} $M$ induced by label noise studied in prior work for supervised learning and for which we need to run a computationally heavy fixed-point computation for $M$,
and \textbf{(ii)} a simpler alternative that sets $\Sigma^*_M = I$. We find that both of these variants generally perform well empirically (Figure~\ref{fig:other_penalty_main}), and improve over the base offline RL method, and so we utilize \textbf{(ii)} in practice due to low computational costs. Additionally, because computing and backpropagating through per-example gradient dot products is slow, we instead approximate $\Delta(\theta)$ with the contribution only from the last layer parameters (\ie $\sum_i \nabla_\bw Q_\theta(\bs_i, \ba_i)^\top \nabla_\bw Q_\theta(\bs'_i, \ba'_i)$), similarly to tractable Bayesian neural nets.
As shown in Appendix~\ref{app:theory_practice_gap}, the practical version of DR3 performs similarly to the label-noise version.
\vspace{-0.1in}
\begin{equation}
\label{eqn:explicit_regularizer}
    \text{\textbf{Explicit DR3 regularizer}}:~~~~~~~~\overline{\mathcal{R}}_\mathrm{exp}(\theta) = \sum_{i \in \mathcal{D}} \phi(\bs_i, \ba_i)^\top \phi(\bs'_i, \ba'_i).
\end{equation} 
\vspace{-0.1in}

\vspace{-5pt}
\section{Related Work}
\label{sec:related}
\vspace{-5pt}
Prior analyses of the learning dynamics in RL has focused primarily on analyzing error propagation in tabular or linear settings~\citep[\eg][]{chen2019information,duan2020minimax,xie2020q, wang2021what,wang2021instabilities,farahmand2010error,de2002alp}, understanding instabilities in deep RL~\citep{achiam2019towards,bengio2020interference,kumar2020discor,van2018deep} and deriving weighted TD updates that enjoy convergence guarantees~\citep{maei09nonlineargtd,mahmood2015emphatic,sutton16emphatic}, but these methods do not reason about implicit regularization or any form of representation learning. \citet{ghosh2020representations} focuses on understanding the stability of TD-learning in underparameterized linear settings, whereas our focus is on the overparameterized setting, when optimizing TD error and learning representations via SGD.  \citet{kumar2021implicit} studies the learning dynamics of Q-learning and observes that the rank of the feature matrix, $\Phi$, drops during training. While this observation is related, our analysis characterizes the implicit preference of learning towards feature co-adaptation (Theorem~\ref{thm:implicit_noise_reg}) on out-of-sample actions as the primary culprit for aliasing. Additionally, while the goal of our work is not to increase $\srank(\Phi)$, utilizing \methodname\ not only outperforms the $\srank(\Phi)$ penalty in \citet{kumar2021implicit} by more than \textbf{100\%}, but it also alleviates rank collapse, with no apparent term that explicitly enforces high rank values. Somewhat related to DR3, \citet{durugkar2018td,pohlen2018observe} heuristically constrain gradients of TD to prevent changes in target Q-values to prevent divergence. Contrary to such heuristic approaches,  DR3 is inspired from a theoretical model of implicit regularization, and does not prevent changes in target values, but rather reduces feature dot products.

\vspace{-8pt}
\section{Experimental Evaluation of \methodname}
\label{sec:experiments}
\vspace{-8pt}
Our experiments aim to evaluate the extent to which \methodname\ improves performance in offline RL in practice, and to study its effect on prior observations of rank collapse. To this end, we investigate if \methodname\ improves offline RL performance and stability on three offline RL benchmarks: Atari 2600 games with discrete actions~\citep{agarwal2019optimistic}, continuous control tasks from D4RL~\citep{fu2020d4rl}, and image-based robotic manipulation tasks~\citep{singh2020cog}.

Following prior work~\citep{fu2020d4rl, gulcehre2020rl}, we evaluate DR3 in terms of final offline RL performance after a given number of iterations. Additionally, we report \emph{training stability}, which is important in practice as offline RL does not admit cheap validation of trained policies for model selection.
To evaluate stability, we train for a large number of gradient steps (2-3x longer than prior work) and either report the \textbf{average performance} over the course of training or the final performance at the end of training. %
We expect that a stable method that does not unlearn with more gradient steps, should have better average performance, as compared to a method that attains good peak performance but degrades with more training. See Appendix~\ref{app:additional_background} for further details.

\textbf{{Offline RL on Atari 2600 games.}} We compare \methodname\ to prior offline RL methods on a set of offline Atari datasets of varying sizes and quality, akin to \citet{agarwal2019optimistic, kumar2021implicit}. We evaluated on three datasets: \textbf{(1)} 1\% and 5\% samples drawn uniformly at random from DQN replay; \textbf{(2)} a dataset with more suboptimal data consisting of the first 10\% samples observed by an online DQN. Following \citet{agarwal2021precipice}, we report the interquartile mean~(IQM) normalized scores across 17 games over the course of training in Figure~\ref{fig:atari_all_combined} and report the IQM average performance in Table~\ref{tab:cql_res}. Observe that combining \methodname\ with modern offline RL methods (CQL, REM) attains the best final and average performance across the 17 Atari games tested on, directly improving upon prior methods
across all the datasets. When \methodname\ is used in conjunction with REM, it prevents {severe} unlearning and performance degradation with more training. CQL + \methodname\ improves by \textbf{20\%} over CQL on final performance and attains \textbf{25\%} better average performance. {While DR3 is not unequivocally ``stable'', as its performance also degrades relative to the peak it achieves (Figure~\ref{fig:atari_all_combined}), it is more stable relative to base offline RL algorithms.} We also compare \methodname\ to the $\mathrm{srank}(\Phi)$ penalty proposed to counter rank collapse~\citep{kumar2021implicit}. Directly taking median normalized score improvements reported by \citet{kumar2021implicit}, CQL + \methodname\ improves by over \textbf{2x} (31.5\%) over na\"ive CQL relative to the srank penalty~(14.1\%), indicating DR3's efficacy.

\begin{wrapfigure}{r}{0.42\textwidth}
\small \begin{center}
\vspace{-15pt}
\includegraphics[width=0.99\linewidth]{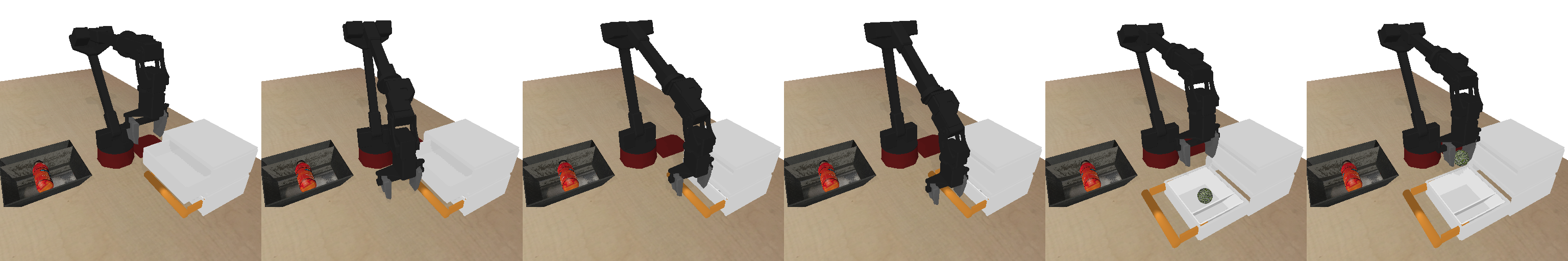}
\includegraphics[width=0.99\linewidth]{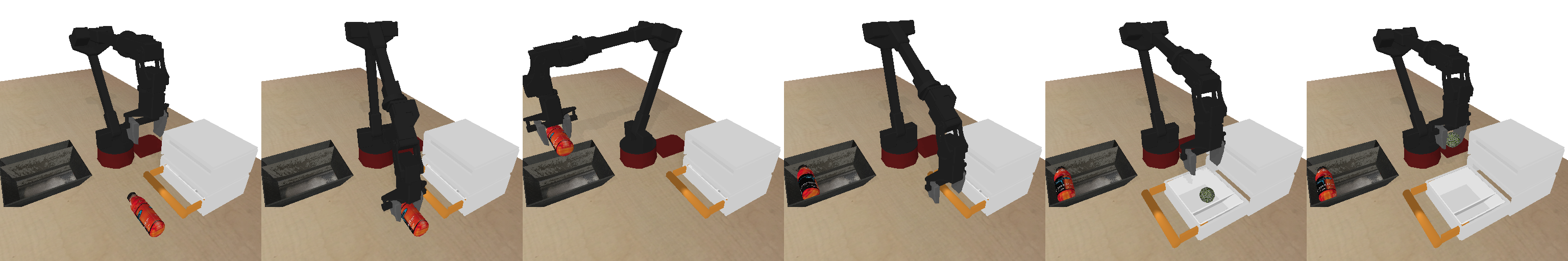}
\vspace{-20pt}
\end{center}
\end{wrapfigure}
\textbf{{Offline RL on robotic manipulation from}} \textbf{{images.}} Next, we aim to evaluate the efficacy of \methodname\ on two image-based robotic manipulation tasks~\citep{singh2020cog}~(visualized on the right) that require composition of skills (e.g., opening a drawer, closing a drawer, picking an obstructive object, placing an object, etc.) over extended horizons using only a sparse 0-1 reward. 
As shown in Figure~\ref{fig:cog_figure}, combining \methodname\ with COG not only improves over COG, but also learns faster and attains a better average performance.

\begin{table}[t]
    \centering
\fontsize{8}{8}\selectfont
    \centering
    \vspace{-0.1cm}
    \caption{\footnotesize{IQM normalized average performance (training stability) across 17 games, with 95\% CIs in parenthesis, after 6.5M gradient steps for the 1\% setting and 12.5M gradient steps for the 5\%, 10\% settings. Individual performances reported in Tables~\ref{tab:cql_dqn_1}-\ref{tab:rem_dqn_10}. \methodname\ improves the stability over both CQL and REM.  }}%
    \label{tab:cql_res}
    \vspace{-0.1cm}
\begin{tabular}{lcccc}
\toprule
Data & CQL & CQL + \methodname & REM & REM + \methodname \\
\midrule
1\%   & 43.7~\ss{(39.6, 48.6)} & \textbf{56.9}~\ss{(52.5, 61.2)} & 4.0~\ss{(3.3, 4.8)} & \textbf{16.5}~\ss{(14.5, 18.6)}  \\
\midrule
5\%   &  78.1~\ss{(74.5, 82.4)} & \textbf{105.7}~\ss{(101.9, 110.9)} & 25.9~\ss{(23.4, 28.8)} & \textbf{60.2}~\ss({55.8, 65.1}) \\
\midrule
10\%  & 59.3~\ss{(56.4, 61.9)} & \textbf{65.8}~\ss{(63.3, 68.3)} & 53.3~\ss{(51.4, 55.3)} & \textbf{73.8}~\ss{(69.3, 78)} \\
\bottomrule
\vspace{-0.25in}
\end{tabular}
\end{table}

\begin{figure}[t]
\centering
\begin{minipage}{.4\textwidth}
\includegraphics[width=0.96\linewidth]{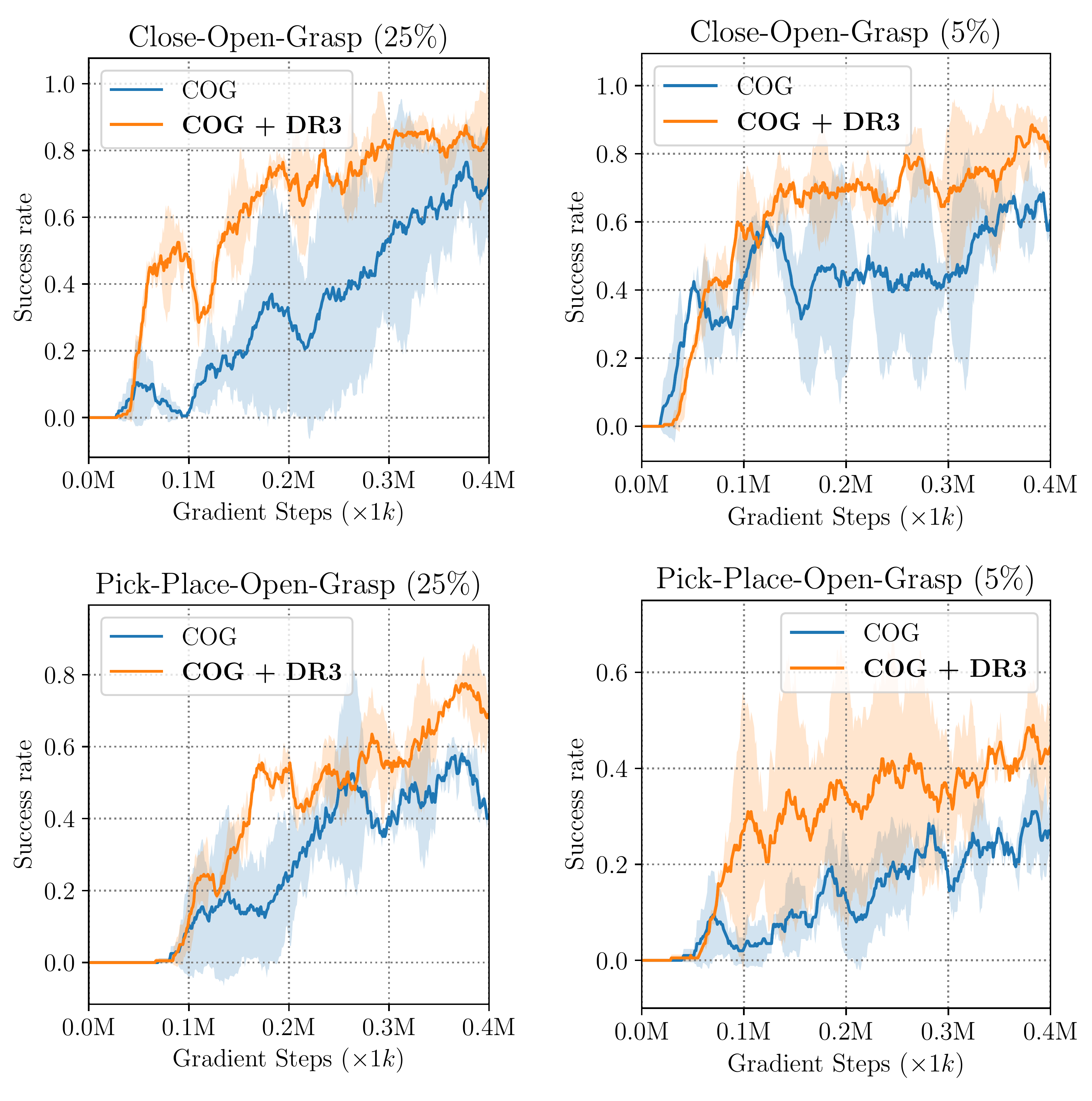}
\vspace{-0.1in}
\caption{\footnotesize{\textbf{Performance of \methodname\ + COG} on two manipulation tasks using only 5\% and 25\% of the data used by \citet{singh2020cog} to make these more challenging.
COG + \methodname\  outperforms COG in training and attains higher average and final performance.}}
\label{fig:cog_figure}
\end{minipage}~~\vline~~
\begin{minipage}{.56\textwidth}
    \centering
    \includegraphics[width=0.99\linewidth]{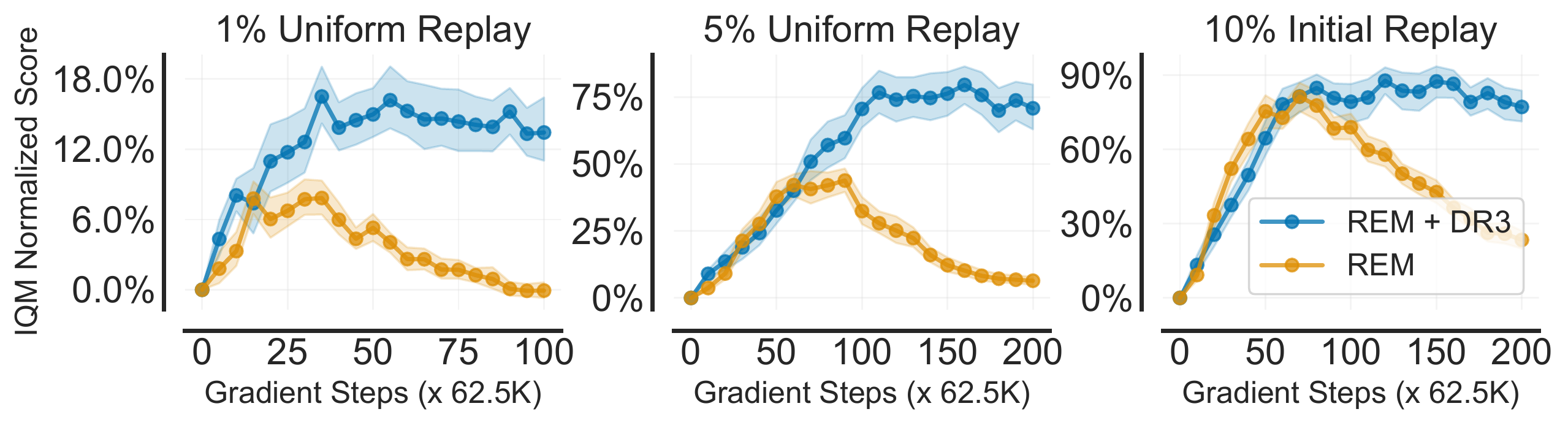}
    \includegraphics[width=0.99\linewidth]{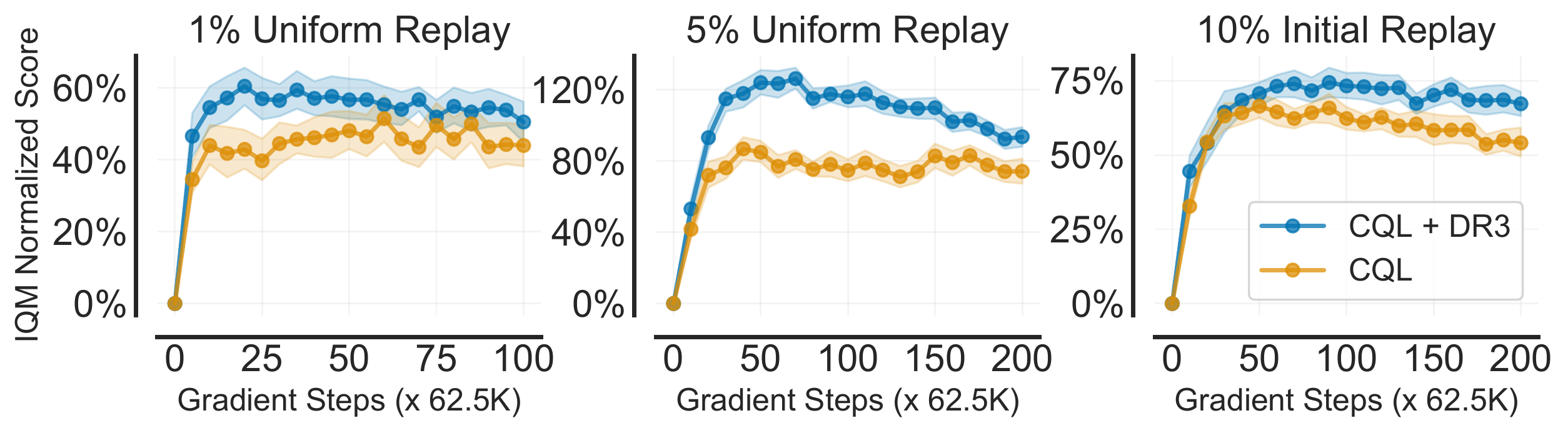}
    \vspace{-0.25in}
    \caption{\footnotesize{\textbf{Normalized performance across 17 Atari games for REM + \methodname\ (top), CQL + \methodname\ (bottom)}. x-axis represents \emph{gradient steps}; no new data is collected. While na\"ive REM suffers from a degradation in performance with more training, REM + \methodname\ not only remains generally stable with more training, but also attains higher final performance. CQL + \methodname\ attains higher performance than CQL. We report IQM with  95\% stratified bootstrap CIs~\citep{agarwal2021precipice}}.}
    \label{fig:atari_all_combined}
\end{minipage}
\vspace{-0.5cm}
\end{figure}

\textbf{{Offline RL on D4RL tasks.}} Finally, we evaluate DR3 in conjunction with CQL on the antmaze-v2 domain in D4RL~\citep{fu2020d4rl}. To assess if \methodname\ is stable and able to prevent unlearning that eventually appears in CQL, we trained CQL+\methodname\ for \textbf{9x} longer: 2M and 3M steps with 3x higher learning rate. This is different from prior works~\citep{fu2020d4rl} that report performance at the end of 1M steps.
Observe in Table~\ref{tab:cql_d4rl}, that CQL + \methodname\ outperforms CQL ({statistical significance shown in Appendix~\ref{app:significance}}), indicating that DR3 significantly improves CQL. We also evalute DR3 on kitchen domains in D4RL in Appendix~\ref{app:significance}, where we also find that DR3 improves CQL.

\begin{table*}[ht]
    \vspace{-0.1in}
    \fontsize{8}{8}\selectfont
    \centering
    \captionof{table}{\footnotesize{\textbf{Performance of CQL, CQL + \methodname\ after 2M and 3M gradient steps with a learning rate of 3e-4} for the Q-function averaged over 3 seeds. This is training for \textbf{6x} and \textbf{9x} longer compared to CQL defaults. Observe that CQL + \methodname\ outperforms CQL at 2M and 3M steps, indicating is efficacy in preventing unlearning. We present the statistical significance of these results in Appendix~\ref{app:significance}.}}
    \label{tab:cql_d4rl}
    \vspace{-0.1in}
    \begin{tabular}{@{}l|rr||rr@{}}
    \toprule
    {\textbf{D4RL Task}} & \textbf{CQL} (2M) & \textbf{CQL + \methodname} (2M) & \textbf{CQL} (3M) & \textbf{CQL + \methodname} (3M)  \\
    \midrule
    \texttt{antmaze-umaze-v2} & 84.00 $\pm$ 2.67 & 85.33 $\pm$ 4.16 & 87.00 $\pm$ 1.73 & 90.00 $\pm$ 4.00 \\ 
    \texttt{antmaze-umaze-diverse-v2} & 45.67 $\pm$ 8.50 & 40.67 $\pm$ 11.84 & 36.33 $\pm$ 7.09 & \textbf{52.00 $\pm$ 11.26} \\
    \texttt{antmaze-medium-play-v2} & 24.00 $\pm$ 28.16 & \textbf{73.00 $\pm$ 4.00} & 16.00 $\pm$ 26.85 & \textbf{71.33 $\pm$ 1.52} \\
    \texttt{antmaze-medium-diverse-v2} & 32.67 $\pm$ 9.29 & \textbf{67.00 $\pm$ 2.00} & 48.33 $\pm$ 6.11 & \textbf{61.67 $\pm$ 3.21} \\
    \texttt{antmaze-large-play-v2} & 3.33 $\pm$ 2.51 & \textbf{28.00 $\pm$ 4.35} & 0.33 $\pm$ 0.57 & \textbf{26.33 $\pm$ 11.93} \\
    \texttt{antmaze-large-diverse-v2} & 1.33 $\pm$ 2.30 & \textbf{25.67 $\pm$ 0.57} & 0.00 $\pm$ 0.00 & \textbf{28.33 $\pm$ 1.52} \\
    \bottomrule
    \end{tabular}
    \vspace{-0.23cm}
    \end{table*}

Finally, we also compare CQL+DR3 and CQL in terms of performance and stability on MuJoCo tasks previously studied in \citet{kumar2021implicit} in Appendix~\ref{app:mujoco}. These tasks are constructed by uniformly subsampling transitions from the full-replay-v2 MuJoCo datasets in D4RL and are much harder than the typical Gym-MuJoCo tasks from \citet{fu2020d4rl} because succeeding on these tasks critically relies on estimating accurate Q-values for out-of-sample actions and all actions at certain states are out-of-sample. As shown in Appendix~\ref{app:mujoco}, CQL+DR3 is significantly more stable, and does not unlearn with more training, unlike CQL whose performance degrades very quickly. We also evaluate DR3 in conjunction with BRAC~\citep{wu2019behavior}, a policy constraint method, and find that BRAC+DR3 improves over BRAC in \textbf{13.8} median normalized performance (Table~\ref{tab:brac}).
\textbf{To summarize}, these results indicate that \methodname\ is a versatile explicit regularizer that improves performance and stability of a wide range of offline RL methods, including conservative methods (e.g, CQL, COG), policy constraint methods (e.g., BRAC) and ensemble-based methods (e.g., REM). 

\begin{wrapfigure}{r}{.47\textwidth}
    \centering
    \vspace{-0.2in}
    \includegraphics[width=0.99\linewidth]{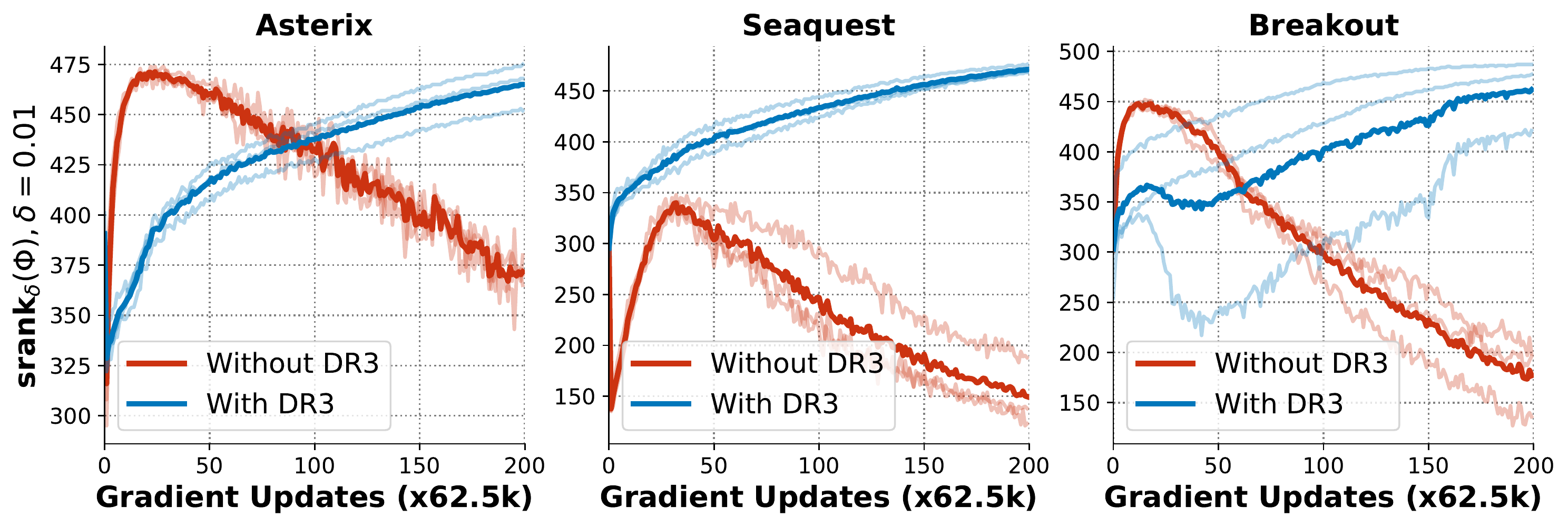}
    \vspace{-0.2in}
    \caption{\footnotesize{{\textbf{Trend of effective rank,} $\mathrm{srank}(\Phi)$ of features $\Phi$ learned by the Q-function when trained with TD error (red, ``Without DR3'') and with TD error + \methodname\ (blue, ``With DR3'') on three Atari games using the 5\% dataset. Note that \methodname\ alleviates rank collapse, without explicitly aiming to.}}}
    \label{fig:iup_is_fixed}
    \vspace{-0.2in}
    \end{wrapfigure}
\textbf{{{DR3 does not suffer from rank collapse.}}} Prior work~\citep{kumar2021implicit} has shown that implicit regularization can lead to a rank collapse issue in TD-learning, preventing Q-networks from using full capacity. To see if \methodname\ addresses the rank collapse issue, we follow \citet{kumar2021implicit} and plot the effective rank of learned features with DR3 in Figure~\ref{fig:iup_is_fixed} {(DQN, REM in Appendix~\ref{app:rank_collapse_is_gone})}. 
While the value of the effective rank decreases during training with na\"ive bootstrapping, {we find that rank of DR3 features typically does not collapse}, despite no explicit term encouraging this. 
Finally, we test the robustness/sensitivity of each layer in the learned Q-network to re-initialization~\citep{zhang2019all} during training and find that DR3 alters the features to behave similarly to supervised learning~(\Figref{fig:robustness}).

\begin{wrapfigure}{r}{0.45\textwidth}
\small \begin{center}
\vspace{-0.43in}
\includegraphics[width=0.99\linewidth]{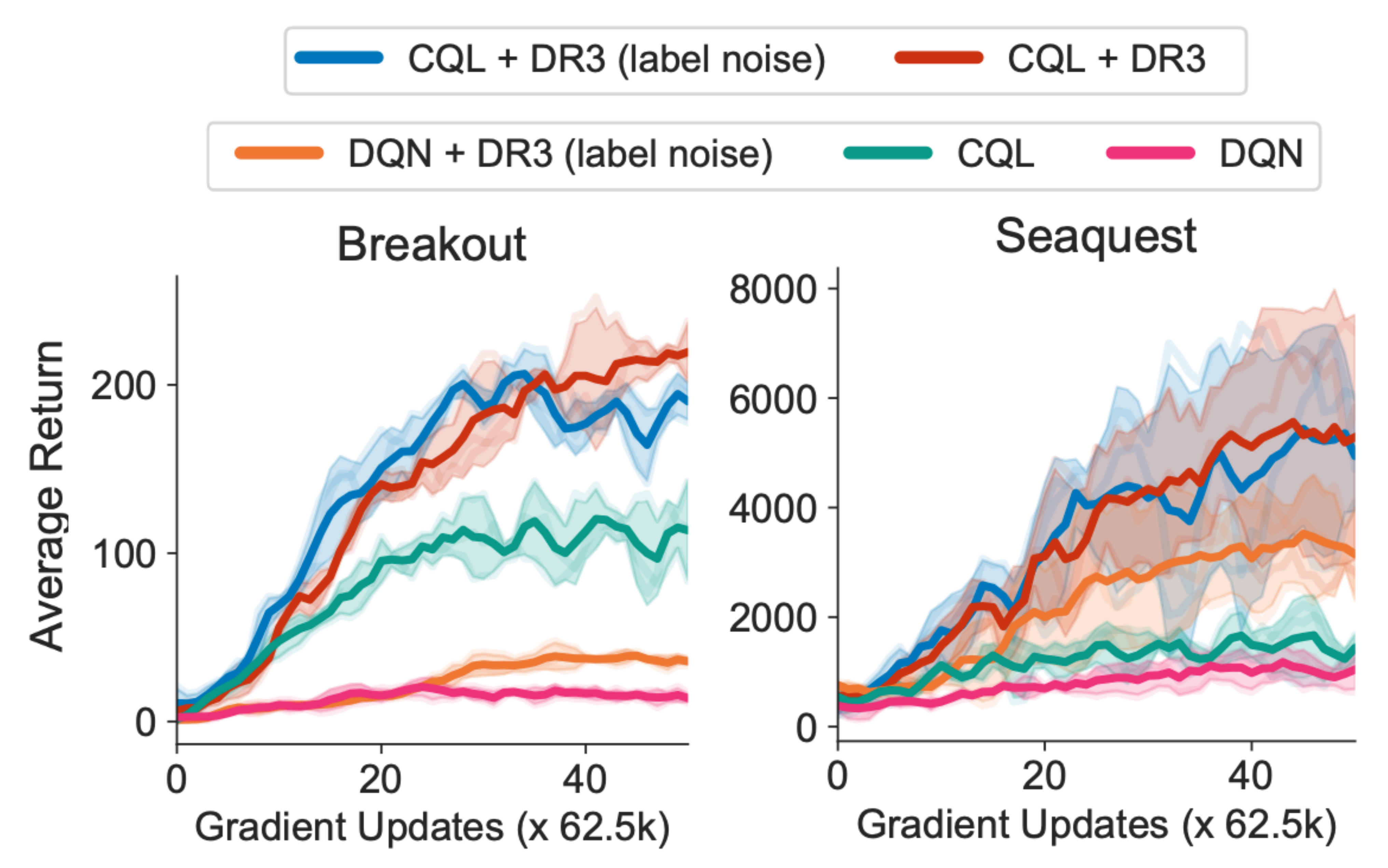}
\vspace{-18pt}
\caption{\label{fig:other_penalty_main} \footnotesize{Comparing  DR3 regularizers for our simplifying choice of $M$ and $M$ induced by label noise, {with base CQL and DQN algorithms}. Note that both of these penalties when applied over CQL improve performance.}}
\end{center}
\vspace{-0.2in}
\end{wrapfigure}
\textbf{Comparing explicit regularizers for different choices of noise covariance $M$.} Finally, we investigate the behavior of different implicit regularizers derived via two choices of $M$ in Equation~\ref{eqn:regularizer} and the corresponding explicit regularizers. While the explicit regularizer we use in practice is a simplifying choice that works well, another choice of $M$ is the covariance matrix induced by label noise, which requires explicit computation of $\Sigma_M^*$.
Observe in Figure~\ref{fig:other_penalty_main} that the {explicit regularizer for our simplifying choice is not worse than the different choice of $M$}. This justifies utilizing our simplified, heuristic choice of setting $\Sigma_M^* = I$ in practice. Results on five Atari games are shown in Appendix~\ref{app:theory_practice_gap}.

\section{Discussion}
\label{sec:discussion}
We characterized the implicit preference of TD-learning towards solutions that maximally co-adapt gradients (or features) at consecutive state-action tuples that appear in Bellman backup. This regularization effect is exacerbated when out-of-sample
state-action samples are used for the Bellman backup and it can lead to poor policy performance. Inspired by the theory, we propose a practical explicit regularizer, \methodname\, that aims to counteracts this implicit regularizer. \methodname\ yields substantial improvements in stability and performance on a wide range of offline RL problems. We believe that understanding the learning dynamics of deep Q-learning and the induced implicit regularization will lead to more robust and stable deep RL algorithms. Furthermore, this understanding can help us to predict the instability issues in value-based RL methods in advance, which can inspire cross-validation and model selection strategies, an important, open challenge in offline RL, for which existing off-policy evaluation techniques are not practically sufficient~\citep{fu2021benchmarks}. We also note that our analysis does not consider the online RL setting with non-stationary data distributions, and extending our theory and DR3 to online RL is an interesting avenue for future work.

\section*{Acknowledgements}
We thank Dibya Ghosh, Xinyang Geng, Dale Schuurmans, Marc Bellemare, Pablo Castro and Ofir Nachum for informative discussions, discussions on experimental setup and for providing feedback on an early version of this paper. We thank the members of RAIL at UC Berkeley for their support and suggestions. We thank anonymous reviewers for feedback on an early version of this paper. This research is funded in part by the DARPA Assured Autonomy Program and in part, by compute resources from Microsoft Azure and Google Cloud. TM acknowledges support of Google Faculty Award, NSF IIS 2045685, the Sloan fellowship, and JD.com. 

\bibliography{reference}
\bibliographystyle{iclr2022_conference}

\appendix
\newpage
\part*{Appendices}
\counterwithin{figure}{section}
\counterwithin{table}{section}
\counterwithin{equation}{section}

\section{Additional Visualizations and Experiments for \methodname}

In this section, we provide visualizations and diagnostic experiments evaluating various  aspects of feature co-adaptation and the \methodname\ regularizer. We first provide more empirical evidence showing the presence of feature co-adaptation in modern deep offline RL algorithms. We will also visualize \methodname\, inspired from the implicit regularizer term in TD-learning alleviates rank collapse discussed in \citet{kumar2021implicit}. We will compare the efficacies of the explicit regularizer induced for different choices of the noise covariance matrix $M$ (Equation~\ref{eqn:regularizer}), understand the effect of dropping the stop gradient term in ou practical regularizer and finally, perform diagnostic experiments visualizing if the Q-networks learned with DR3 resemble more like neural networks trained via supervised learning, measured in terms of sensitivity and robustness to layer reinitialization~\citep{zhang2019all}.

\vspace{-0.05in}
\subsection{More Empirical Evidence of Feature Co-Adaptation}
\label{app:more_evidence_coadaptation}
In this section, we provide more empirical evidence demonstrating the existence of the feature co-adaptation issue in modern offline RL algorithms such as DQN and CQL. As shown below in Figure~\ref{fig:dot_product_increases_five_games}, while the average dataset Q-value for both CQL and DQN exhibit a flatline trend, the dot product similarity for consecutive state-action tuples generally continues to increase throughout training and does not flatline. While DQN eventually diverges in Seaquest, the dot products increase with more gradient steps even before divergence starts to appear.  
\begin{figure}[h]
    \centering
    \vspace{-0.1in}
    \includegraphics[width=0.99\textwidth]{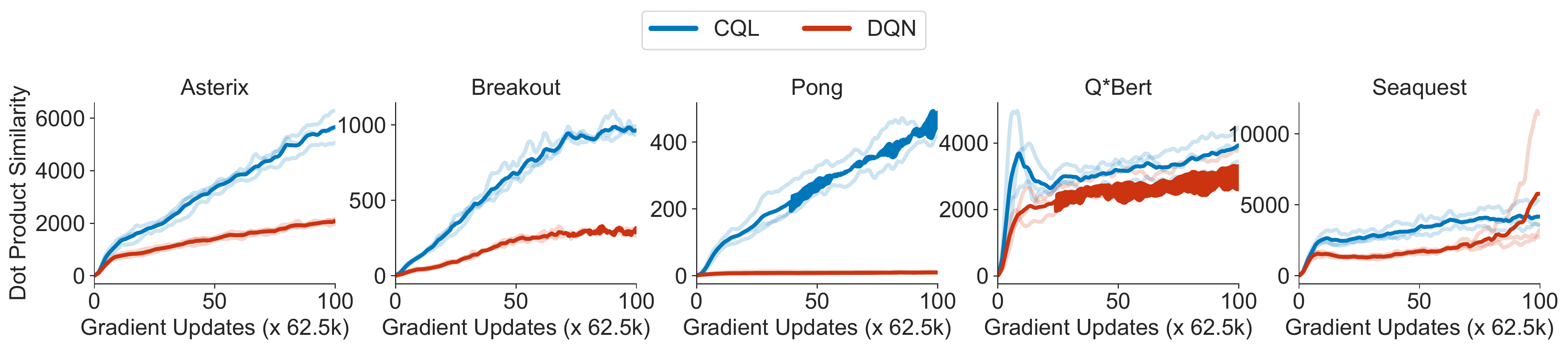}
    \includegraphics[width=0.99\textwidth]{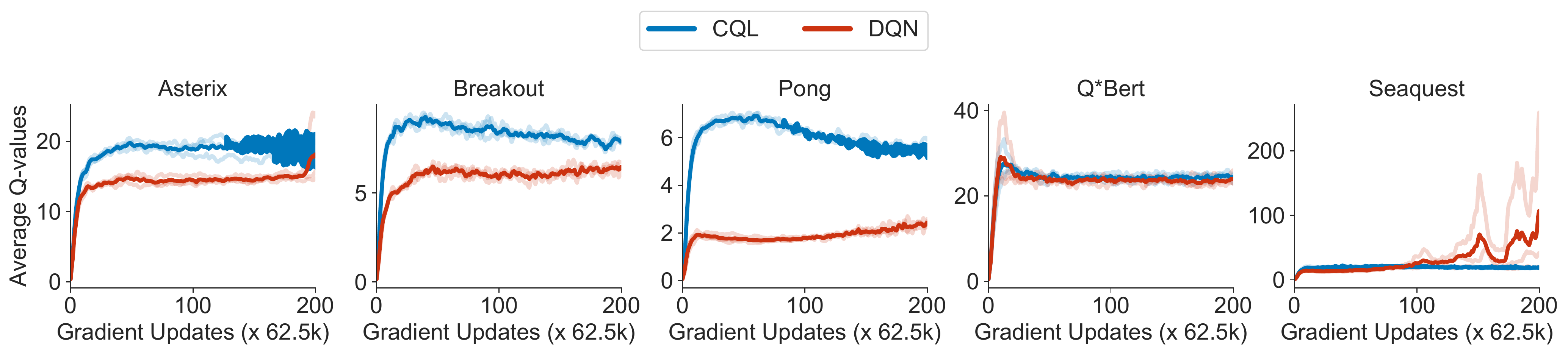}
    \vspace{-5pt}
    \caption{\footnotesize{\label{fig:dot_product_increases_five_games} \textbf{Demonstrating feature co-adaptation on five Atari games with standard offline DQN and CQL, averaged over 3 seeds.} Observe that the feature dot products continue to rise with more training for both CQL and DQN, indicating the presence of co-adaptation. On the other hand, the average Q-values exhibit a converged trend, except on Seaquest. Further, note that the dot products continue to increase for CQL even though CQL explicitly corrects for out-of-distribution action inputs. }}
    \vspace{-0.1in}
\end{figure}

\vspace{-0.1in}
\subsection{{Layer-wise structure of a Q-network trained with DR3}}
\label{app:layerwise}
\vspace{-0.05in}
To understand if DR3 indeed makes Q-networks behave as if they were trained via supervised learning, utilizing the empirical analysis tools from \citet{zhang2019all}, we test the robustness/sensitivity of each layer in the learned network to re-initialization, while keeping the other layers fixed. This tests if a particular layer is \emph{critical} to the predictions of the learned neural network and enables us to reason about generalization properties~\citep{zhang2019all,chatterji2019intriguing}. We ran CQL and REM and saved all the intermediate checkpoints. Then, as shown in Figure~\ref{fig:robustness}, we first loaded a checkpoint ($x$-axis), and computed policy performance (shaded color; colorbar)  by re-initializing a given layer ($y$-axis) of the network to its initialization value before training for the same run.

\begin{figure}[h]
    \centering
    \includegraphics[width=0.612\textwidth]{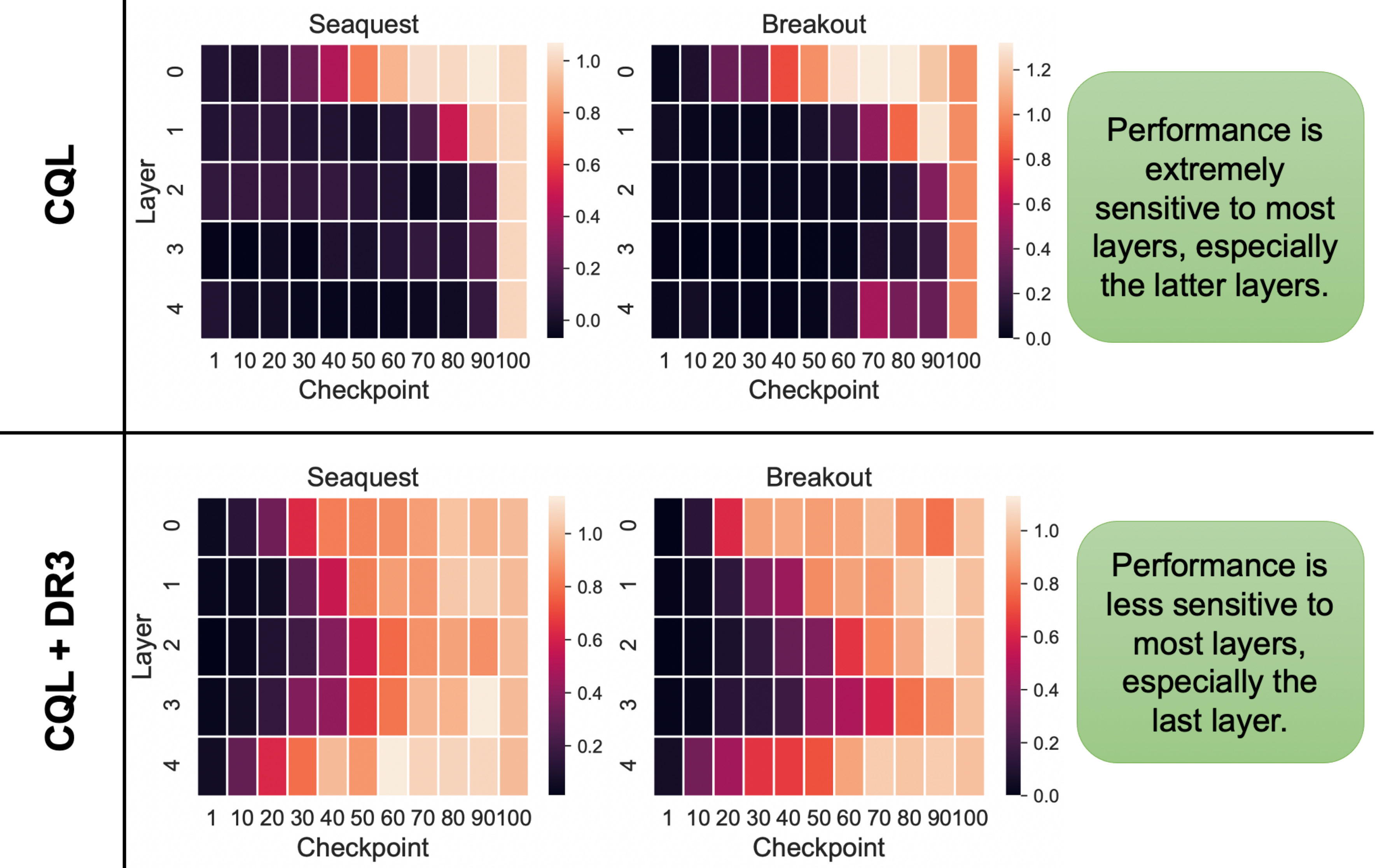}
    \vspace{0.4cm}
    \includegraphics[width=0.622\textwidth]{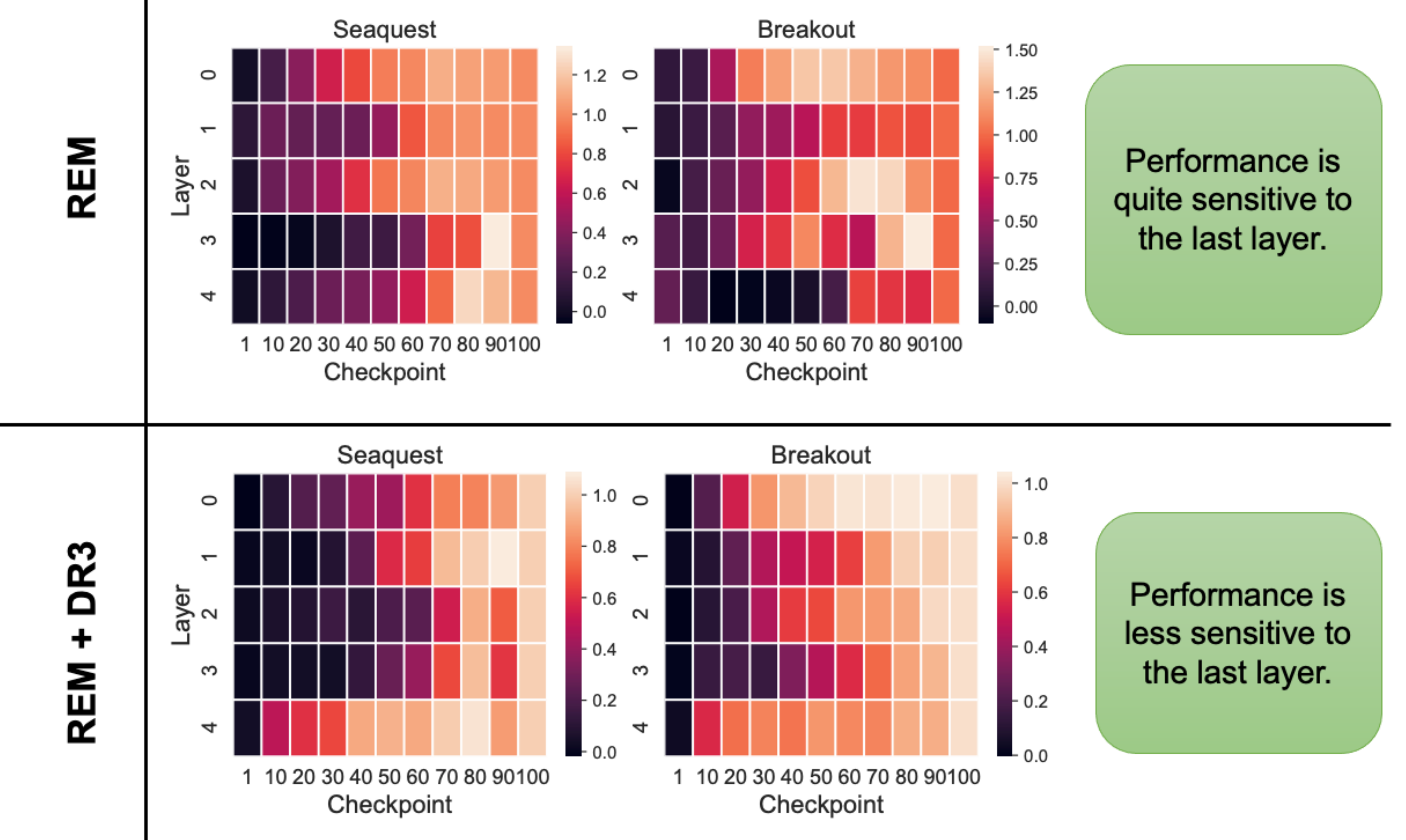}
    \vspace{-5pt}
    \caption{\footnotesize{\label{fig:robustness} \textbf{CQL vs CQL + DR3 and REM vs REM + DR3.} Average robustness of the learned Q-function to re-initialization of all layers to different checkpoints over the course of training created based on the protocol from \citet{zhang2019all}. The colors in the heatmap indicate performance of the reinitialized checkpoint, normalized w.r.t. the checkpoint without any change to layers. Note that while CQL and REM are more sensitive (i.e., less robust) to reinitialization of all the layers especially the last layer, CQL + DR3 and REM + DR3 behave closer to supervised learning, in the sense that they are more robust to reinitialization of layers of the network, especially the last layer.}}
    \vspace{-0.3cm}
\end{figure}
Note in Figure~\ref{fig:robustness}, that while almost all layers are absolutely critical for the base CQL algorithm, 
utilizing DR3 substantially reduces sensitivity to the latter layers in the Q-network over the course of training. This is similar to what \citet{zhang2019all} observed for supervised learning, where the initial layers of a network were the most critical, and the latter layers primarily performed near-random transformations without affecting the performance of the network. This indicates that utilizing DR3 alters the internal layers of a Q-network trained with TD to behave closer to supervised learning.

\vspace{-0.15cm}
\subsection{\rebuttal{Results on MuJoCo Domains}}
\label{app:mujoco}
\rebuttal{In this section, we provide the results of applying DR3 on the MuJoCo tasks shown in Figure A.5. (Appendix A) of \citet{kumar2021implicit}. To briefly describe the setup, in these tasks we train on the three gym tasks (Hopper-v2, Ant-v2, Walker2d-v2) using 20\% of the offline data, uniformly subsampled from the run of an online SAC agent, mimicking the setup from \citet{kumar2021implicit}. Rather than retraining an SAC agent to collect data, we subsampled the Gym-MuJoCo \texttt{*-full-replay-v2} replay buffers from the latest D4RL~\citep{fu2020d4rl}. In these cases we plot the $\mathrm{srank}_\delta$ values, the feature dot products and the corresponding performance values with and without the DR3 regularizer for 4M steps (\citet{kumar2021implicit} showed their plots for just under 4M steps) in Figure~\ref{fig:mujoco_results_from_iup}.}

\begin{figure}[t]
    \centering
    \vspace{-5pt}
    \includegraphics[width=0.89\textwidth]{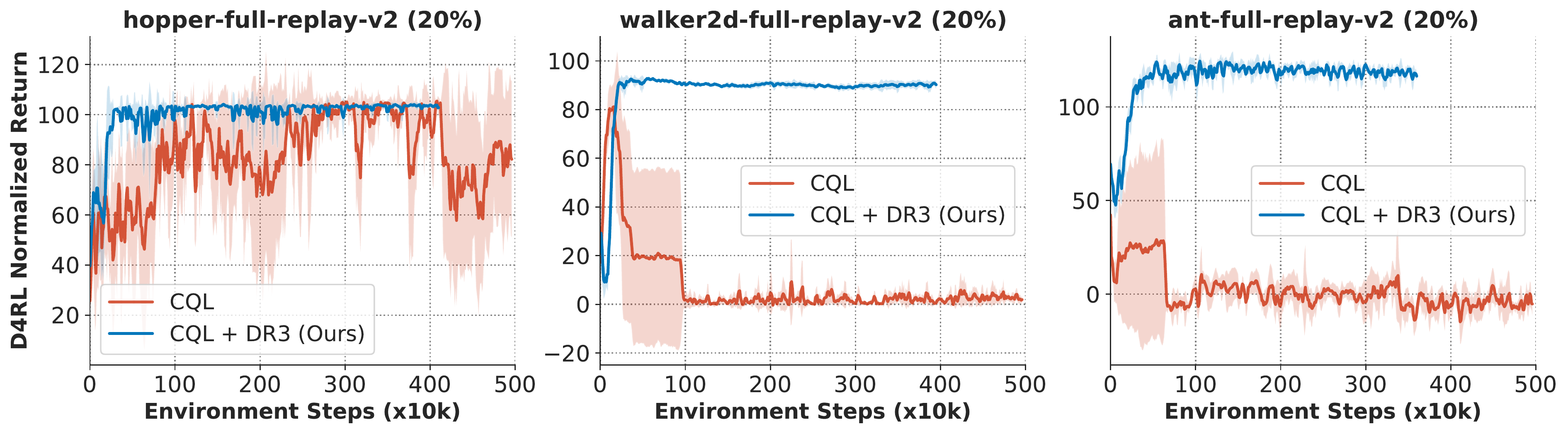}
    \includegraphics[width=0.89\textwidth]{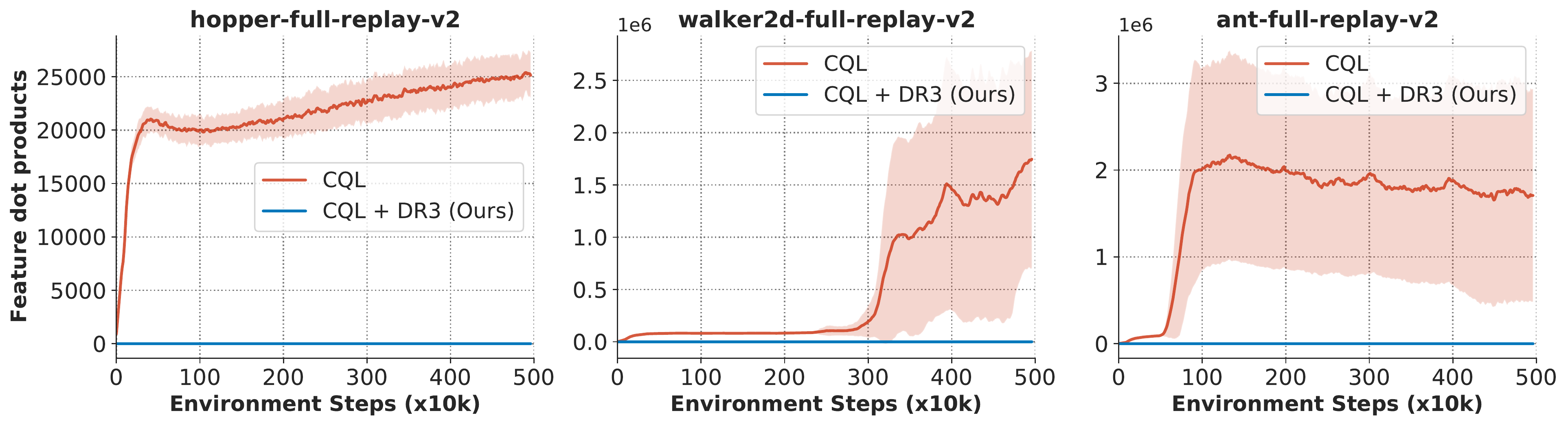}
    \includegraphics[width=0.89\textwidth]{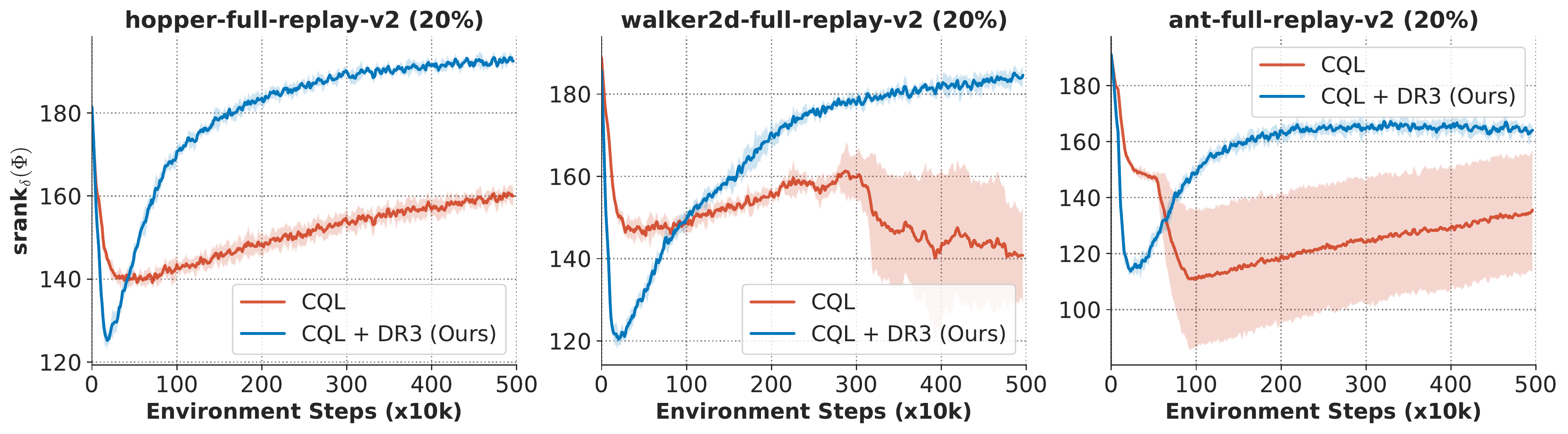}
    \vspace{-5pt}
    \caption{\footnotesize{\label{fig:mujoco_results_from_iup} \rebuttal{\textbf{Comparison of CQL and CQL + DR3 on the offline MuJoCo Gym domains}, mimicking the setup of \citet{kumar2021implicit}. The data is generated by randomly sampling 20\% of the transitions of the D4RL~\citep{fu2020d4rl} full-replay-v2 datasets, which are collected via the run of an online SAC agent. The performance is shown in terms of the D4RL normalized score, where 0.0 denotes the performance of a random policy and 100.0 denotes the performance of an expert online SAC policy. Observe that adding DR3 stabilizes the performance on Hopper, and prevents performance collapse on Walker2d and Ant. In addition note that the srank values attained by CQL + DR3 is higher than base CQL and more importantly, the feature dot products are much smaller for CQL + DR3 compared to CQL.}}}
    \vspace{-0.1cm}
\end{figure}

\rebuttal{Observe in Figure~\ref{fig:mujoco_results_from_iup}, that while the standard CQL algorithm performs poorly and suffers from performance degradation within about 1M-1.5M steps for Walker2d and Ant, CQL + DR3 is able to prevent the performance degradation and trains stably. Base CQL demonstrates oscillatory performance on Hopper, but CQL + DR3 stabilizes the performace of CQL. This indicates that DR3 is effective on MuJoCo domains, and prevents the instabilities with CQL.}

\rebuttal{For details, the weight on the CQL regularizer in this case is equal to 5.0 across all the tasks, and weight on the DR3 regularizer is 0.01. We also attempted to tune the CQL coefficient for the baseline CQL algorithm within $\{1.0, 2.0, 5.0, 10.0, 20.0\}$ to see if it address the performance degradation issues, but did not find any difference in the collapsing behavior of base CQL. Our CQL baseline is therefore well-tuned, and DR3 improves the performance over this baseline.}

\subsection{Rank Collapse is Alleviated With DR3}
\label{app:rank_collapse_is_gone}

\begin{figure}[h]
    \centering
    \includegraphics[width=0.99\textwidth]{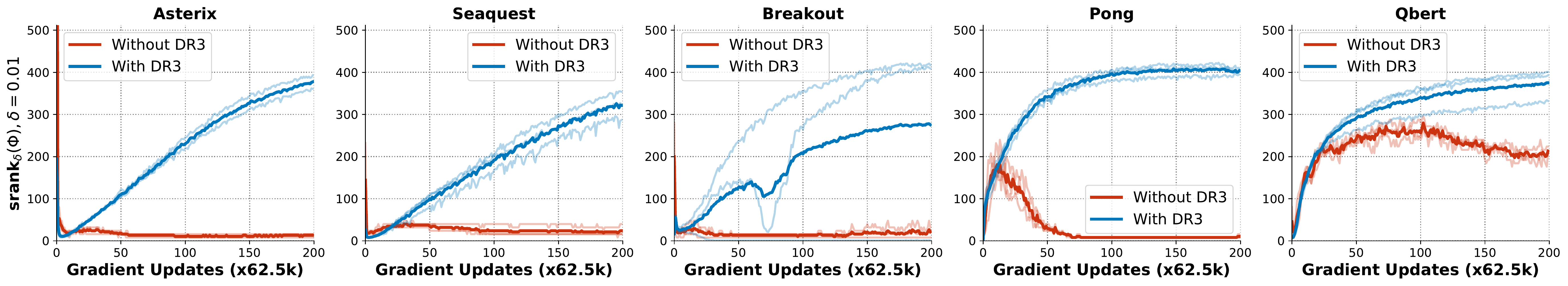}
    \vspace{-5pt}
    \caption{\footnotesize{\label{fig:rank_trends_5_games} \textbf{Comparing the feature ranks for CQL and CQL + DR3.} Observe that utilizing DR3 successfully alleviates the rank collapse issue noted in prior work without explicitly correcting for it.}}
\end{figure}
Prior work~\citep{kumar2021implicit} has shown that implicit regularization in TD-learning can lead to a feature rank collapse phenomenon in the Q-function, which hinders the Q-function from using its full representational capacity. Such a phenomenon is absent in supervised learning, where the feature rank does not collapse. Since DR3 is inspired by mitigating the effects of the term in the implicit regularizer (Equation~\ref{eqn:regularizer}) that only appears in the case of TD-learning, we wish to understand if utilizing DR3 also alleviates rank collapse. To do so, we compute the effective rank $\mathrm{srank}_\delta(\phi)$ metric of the features learned by Q-functions trained via CQL and CQL with DR3 explicit regularizer. As shown in Figure~\ref{fig:rank_trends_5_games}, for the case of five Atari games, utilizing DR3 alleviates the rank collapse issue completely \rebuttal{(i.e., the ranks do not collapse to very small values when CQL + DR3 is trained for long). We do not claim that the ranks with DR3 are necessarily higher, and infact as we show below, a higher srank of features may not always imply a better solution.} The fact that DR3 can prevent rank collapse is potentially surprising, because no term in the practical DR3 regularizer explicitly aims to increase rank: feature dot products can be made smaller while retaining low ranks by simply rescaling the feature vectors. But, as we observe, utilizing DR3 enables learning features \rebuttal{that do not exhibit collapsed ranks}, thus \rebuttal{we hypothesize} that correcting for appropriate terms in $\mathrm{R}_\mathrm{TD}(\theta)$ can address some of the previously observed pathologies in TD-learning.

\begin{figure}[h]
    \centering
    \vspace{-10pt}
    \includegraphics[width=0.99\textwidth]{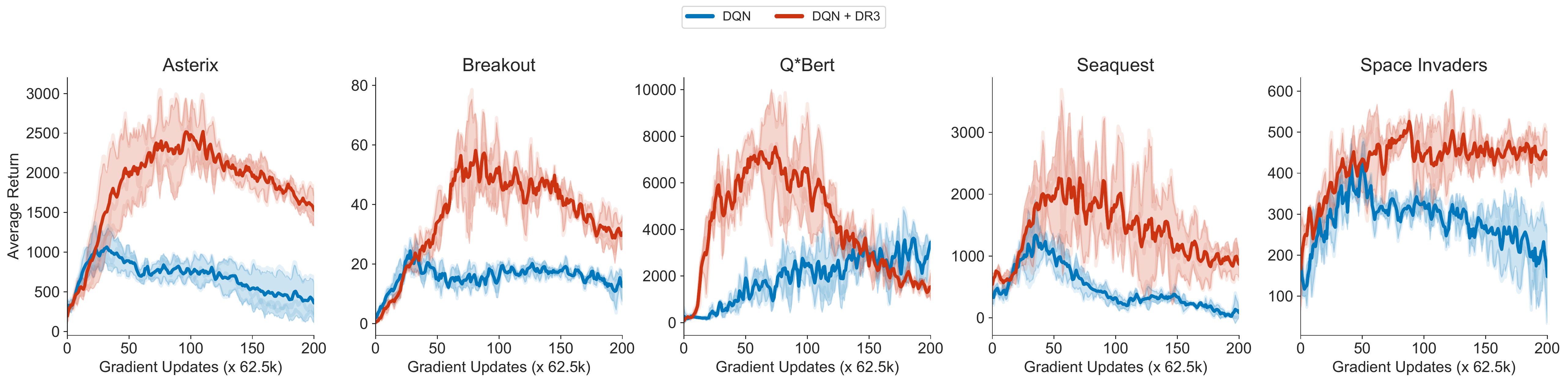}
    \includegraphics[width=0.99\textwidth]{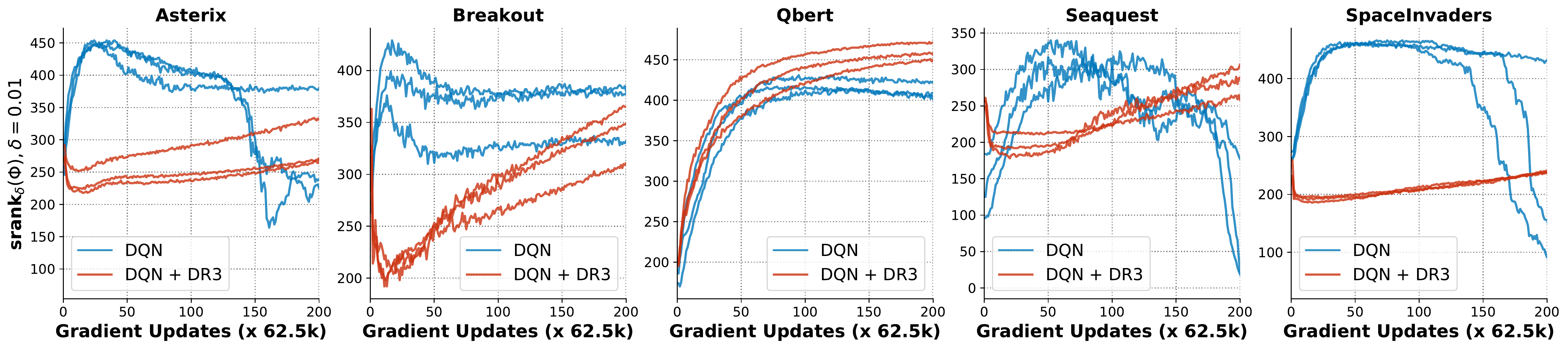}
    \vspace{-5pt}
    \caption{\footnotesize{\label{fig:rank_collapse_test_for_dqn} \rebuttal{\textbf{Performance and $\mathrm{srank}$ values for DQN and DQN + DR3.} Observe that the srank values increase for DQN + DR3, while they collapse for DQN on Asterix, Seaquest and SpaceInvaders with more training. Thus, DQN + DR3 does not suffer from a sudden rank collapse. However, a higher srank does not imply a better return, and so while initially DQN does have a high rank, DQN + DR3 performs superiorly.}}}
    \vspace{-0.3cm}
\end{figure}

\rebuttal{We now investigate the feature ranks of a Q-network trained when DR3 is applied in conjunction with a standard DQN and REM~\citep{agarwal2019optimistic} on the Atari domains. We plot the values of $\mathrm{srank}_\delta(\phi)$, the feature dot products and the performance of the algorithm for DQN in Figure~\ref{fig:rank_collapse_test_for_dqn} and for REM in Figure~\ref{fig:rank_collapse_test_for_rem}. In the case of DQN, we find that unlike the base DQN algorithm for which feature rank does begin to collapse with more training, the srank for DQN + DR3 is increasing. We also note that DQN + DR3 attains a better performance compared to DQN, throughout training. }

\begin{figure}[h]
    \centering
    \vspace{-5pt}
    \includegraphics[width=0.99\textwidth]{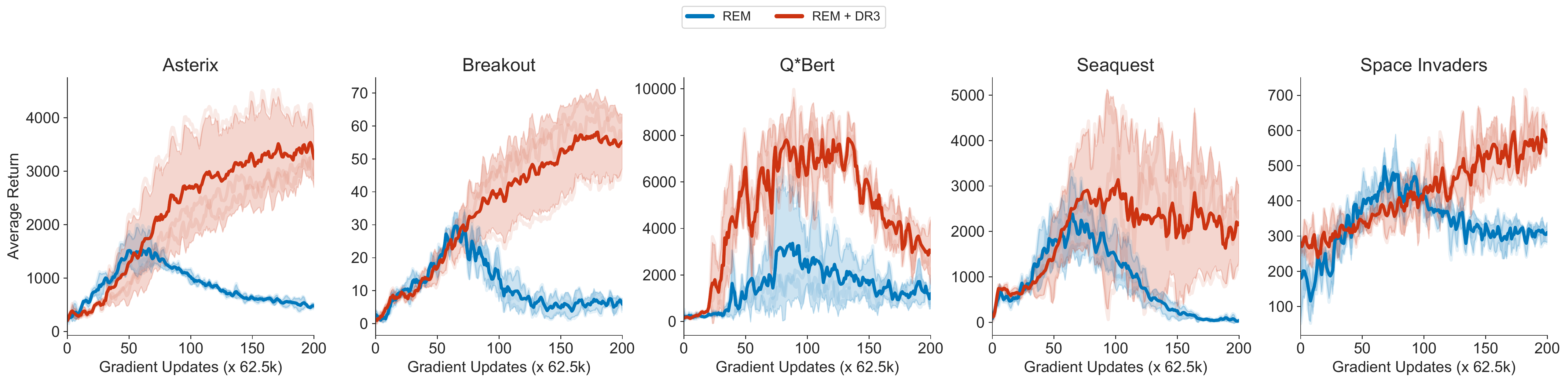}
    \includegraphics[width=0.99\textwidth]{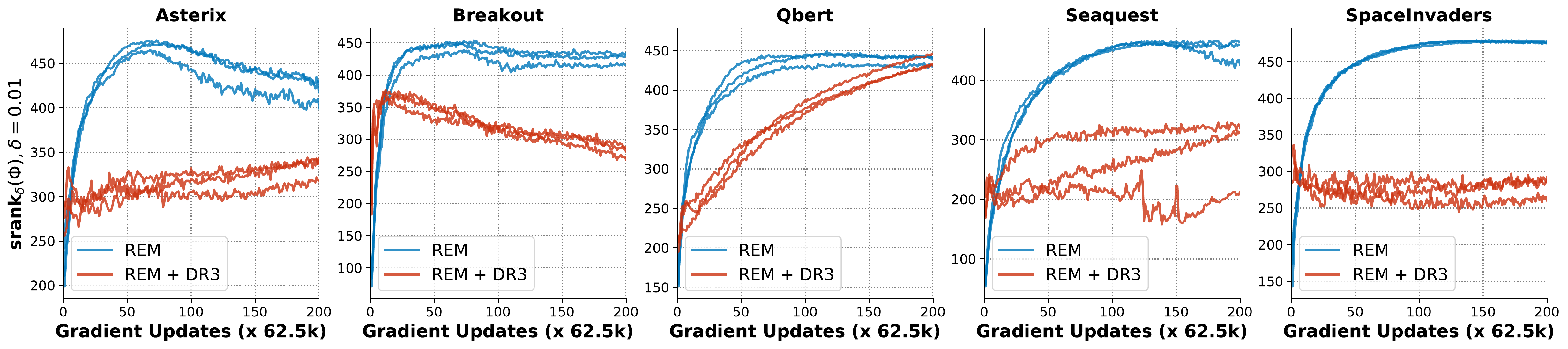}
    \vspace{-5pt}
    \caption{\footnotesize{\label{fig:rank_collapse_test_for_rem} \rebuttal{\textbf{Comparing the performance and $\mathrm{srank}$ values for REM and REM + DR3.} Observe that while REM + DR3 outperforms REM, the $\mathrm{srank}$ values attained by REM are much larger than REM + DR3, and none of these ranks have collapsed. Thus, while REM + DR3 maintains non-collapsed features, for the case of REM, it reduces the value of $\mathrm{srank}$ and attains better performance. This does not contradict the observations from \citet{kumar2021implicit} as we discuss in the text.}}}
\end{figure}

\rebuttal{However, we note that the opposite trend is true for the case of REM: while REM + DR3 attains a better performance than REM, adding DR3 leads to a reduction in the $\mathrm{srank}$ value compared to base REM. At a first glance, this might seem contradicting \citet{kumar2021implicit}, but this is not the case: to our understanding, \citet{kumar2021implicit} establish a correlation between extremely low rank values (i.e., rank collapse) and poor performance, but this does not mean that all high rank features will lead to good performance. We suspect that since REM trains a multi-headed Q-function with shared features and randomized target values, it is able to preserve high-rank features, but this need not mean that these features are useful. In fact, as shown in Figure~\ref{fig:rem_dot_products}, we find that the base REM algorithm does exhibit feature co-adaptation. This case is an example where the srank metric from \citet{kumar2021implicit} may not indicate poor performance.}  

\begin{figure}[h]
    \centering
    \vspace{-10pt}
    \includegraphics[width=0.99\textwidth]{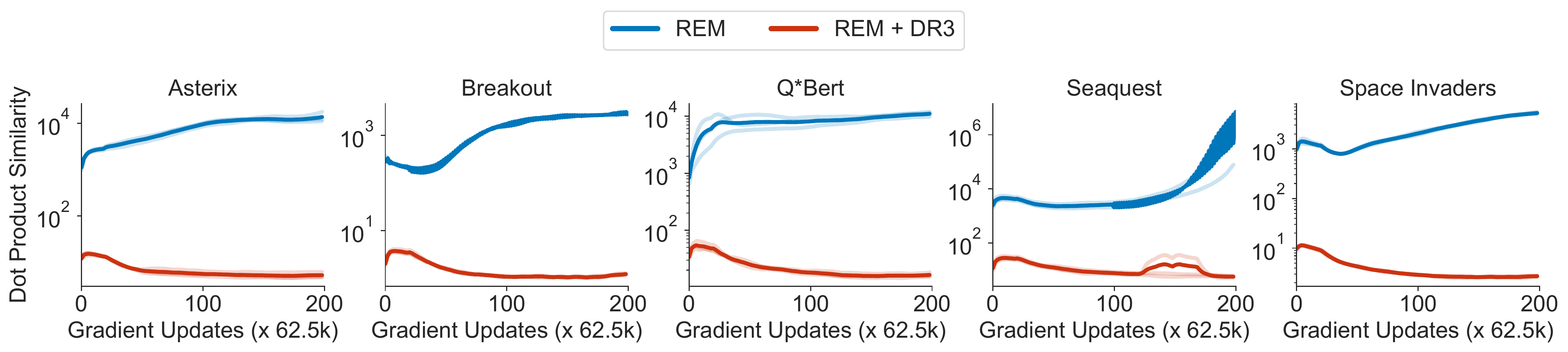}
    \vspace{-5pt}
    \caption{\footnotesize{\label{fig:rem_dot_products} \rebuttal{\textbf{Feature dot products for REM and REM + DR3 on log scale.} REM does suffer from feature co-adaptation despite high-rank features.}}}
    \vspace{-0.3cm}
\end{figure}

\vspace{-0.1cm}
\subsection{Induced Implicit Regularizer: Theory And Practice}
\label{app:theory_practice_gap}
\vspace{-0.3cm}

\begin{table}[t]
    \centering
\fontsize{8}{8}\selectfont
    \centering
    \vspace{-0.05cm}
    \caption{\footnotesize{Normalized interquartile mean performance with 95\% stratified bootstrap CIs~\citep{agarwal2021precipice} across 17 Atari games of REM, REM +  $\Delta'(\Phi)$ (Stop gradient in DR3), REM + \methodname\ after 6.5M gradient steps for the 1\% setting and 12.5M gradient steps for the 5\%, 10\% settings. Observe that REM + $\Delta'(\phi)$ also improves over the base REM method significantly, by about 130\%, even though $\Delta'(\phi)$ is generally comparable and somewhat worse than the DR3 regularizer used in the main paper.}}
    \label{tab:rem_phi_res}
\begin{tabular}{lcccc}
\toprule
Data &  REM & REM + $\Delta'(\Phi)$ & REM+\methodname \\
\midrule
1\%   &  4.0~\ss{(3.3, 4.8)} & 15.0~\ss{(13.4, 16.6)}
& 16.5~\ss{(14.5, 18.6)}  \\
\midrule
5\%   & 25.9~\ss{(23.4, 28.8)} & 55.5~\ss{(50.8, 59.8)} &  60.2~\ss({55.8, 65.1}) \\
\midrule
10\%  & 53.3~\ss{(51.4, 55.3)} & 67.7~\ss{(64.7, 71.3)} & 73.8~\ss{(69.3, 78)} \\
\bottomrule
\vspace{-0.5cm}
\end{tabular}
\end{table}
In this section, we compare the performance of our practical DR3 regularizer to the regularizers (Equation~\ref{eqn:regularizer}) obtained for different choices of $M$, such as $M$ induced by noise, studied in previous work, and also evaluate the effect of dropping the stop gradient function from the practical version of our regularizer.

\textbf{Empirically comparing the explicit regularizers for different noise covariance matrices, $M$.} The theoretically derived regularizer (Equation~\ref{eqn:regularizer}) suggests that for a given choice of $M$, the following equivalent of feature dot products should increase over the course of training: 

\begin{equation}
\label{eqn:general_M}
    \Delta_M(\theta):= \sum_{\bs, \ba \in \mathcal{D}} \mathrm{trace}\left[\Sigma^*_M \nabla Q(\bs, \ba) \nabla Q(\bs', \ba')^\top \right].~~~~~~ \text{(Generalized dot products)}
\end{equation}
We evaluate the efficacy of the explicit regularizer that penalizes the generalized dot products, $\Delta_M(\theta)$, in improving the performance of the policy, \rebuttal{with the goal of identifying if our practical method performs similar to this regularizer on generalized dot products.}. While $\Sigma^*_M$ must be explicitly computed by running fixed point iteration for every parameter iterate $\theta$ found during TD-learning -- which makes this method significantly computationally expensive\footnote{\rebuttal{In our implementation, we run 20 steps of the fixed-point computation of $\Sigma$ as shown in Theorem~\ref{thm:implicit_noise_reg} for each gradient step on the Q-function, and this increases the runtime to about 8 days for 50 iterations on a P100 GPU.}}, we evaluated it on five Atari games for \rebuttal{50 $\times$ 62.5k gradient steps as a proof of concept}. As shown in Figure~\ref{fig:explicit_m_choices}, the DR3 penalty with the choice of $M$ which corresponds to label noise, and the dot product DR3 penalty, which is our main practical approach in this paper generally perform similarly on these domains, attaining almost identical learning curves on \textbf{4/5 games}, and clearly improving over the base algorithm. This hints at the possibility of utilizing other noise covariance matrices to derive an explicit regularizer. \rebuttal{Deriving more computationally efficient versions of the regularizer for a general $M$ and identifying the best choice of $M$ are subject to future work.}

\begin{figure}[t]
    \centering
    \includegraphics[width=0.99\textwidth]{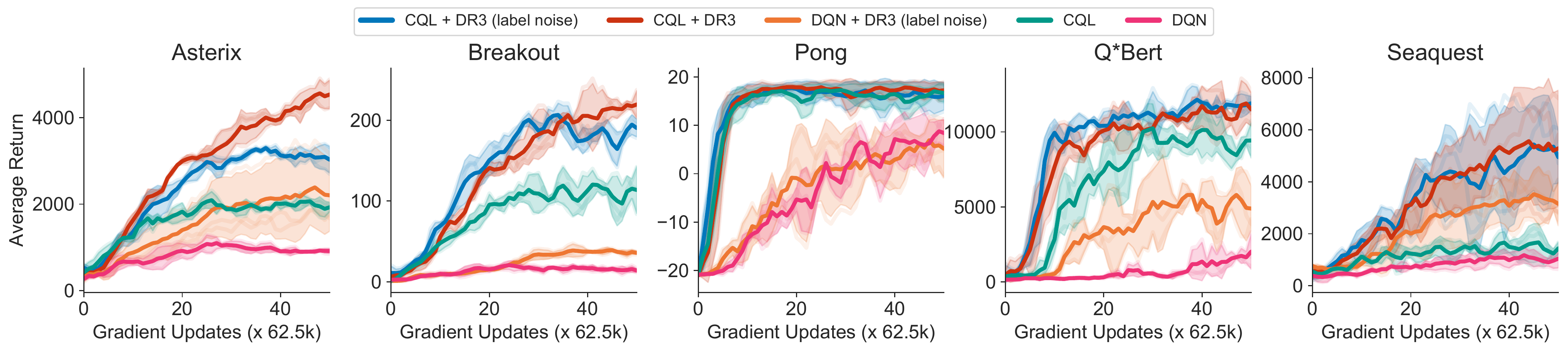}
    \vspace{-5pt}
    \caption{\footnotesize{\label{fig:explicit_m_choices} \textbf{Comparing the performance of explicit penalties for two different choices of the covariance matrix $M$.} Observe that in all the five games the DR3 regularizer derived for the choice of $M$ from \citet{blanc2020implicit} also leads to a substantial increase in performance over the base algorithm, and in four of five games, DR3 (label-noise) works just as well as DR3.}}
    \vspace{-0.25cm}
\end{figure}

\textbf{Effect of stop gradient.} Finally, we investigate the effect of utilizing a stop gradient in the DR3 regularizer. We run a variant of DR3: $\Delta'(\phi) = \sum_{\bs, \ba, \bs'} \phi(\bs, \ba)^\top [[\phi(\bs', \ba')]]$, with the stop gradient on the second term $(\bs', \ba')$ and present a comparison to the one without the stop gradient in Table~\ref{tab:rem_phi_res} for REM as the base offline method, averaged over 17 games. Note that this version of DR3, with the stop gradient, also improves upon the baseline offline RL method (i.e., REM) by \textbf{130\%}. While this performs largely similar, but somewhat worse than the complete version without the stop gradient, these results do indicate that utilizing $\Delta'(\phi)$ can also lead to significant gains in performance.

\subsection{\rebuttal{Understanding Feature Co-Adaptation Some More}}
\vspace{-0.2cm}
\rebuttal{In this section, we present some more empirical evidence to understand feature co-adaptation. The three factors we wish to study are: \textbf{(1)} the effect of target update frequency on feature co-adaptation; \textbf{(2)} understand the trend in normalized similarities and compare these to the trend in dot products; and \textbf{(3)} understand the effect of out-on-sample actions in TD-learning and compare it to  offline SARSA on a simpler gridworld domain. We answer these questions one by one via experiments aiming to verify each hypothesis.}

\begin{wrapfigure}{r}{0.6\textwidth}
    \centering
    \vspace{-0.5cm}
    \includegraphics[width=0.47\linewidth]{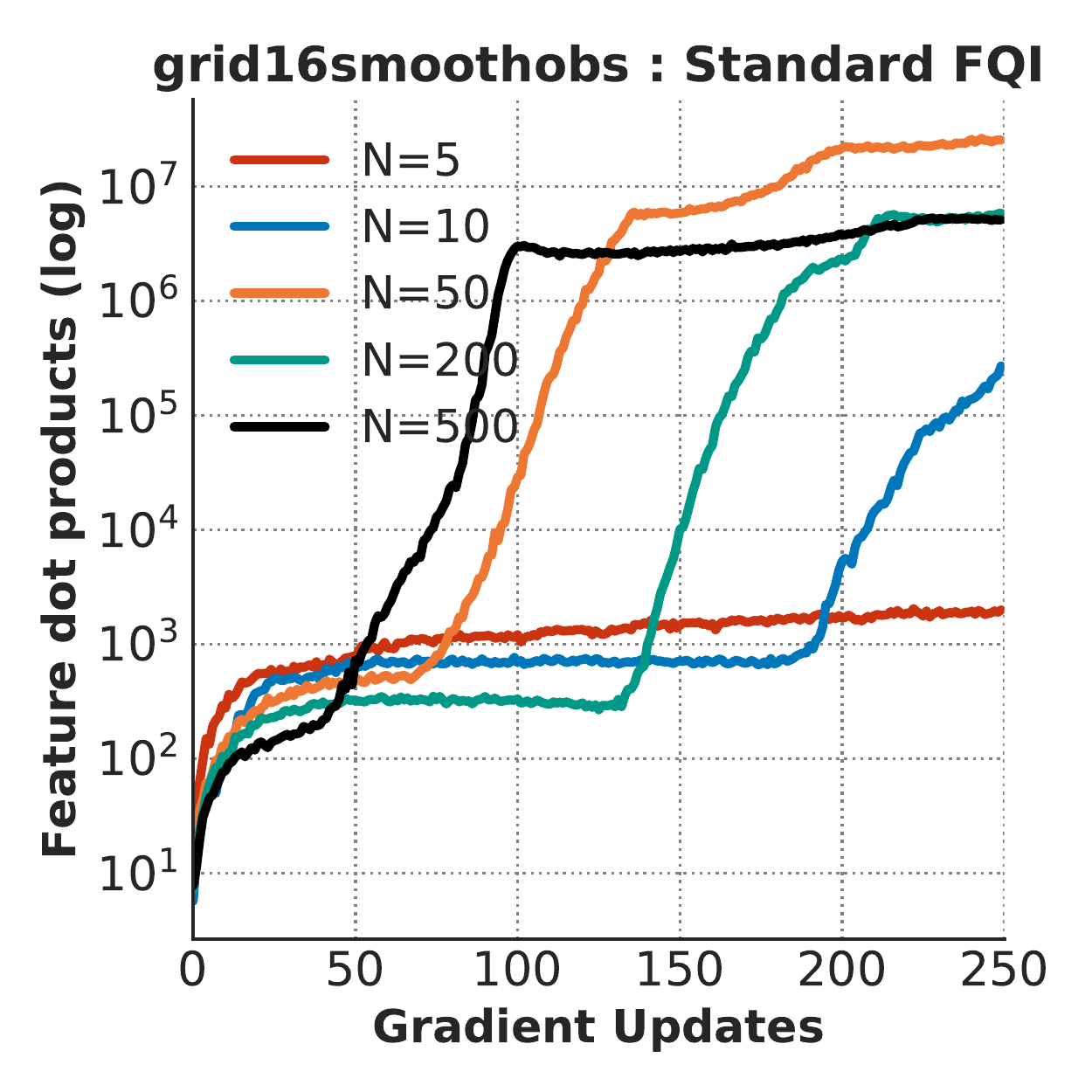}
    \includegraphics[width=0.47\linewidth]{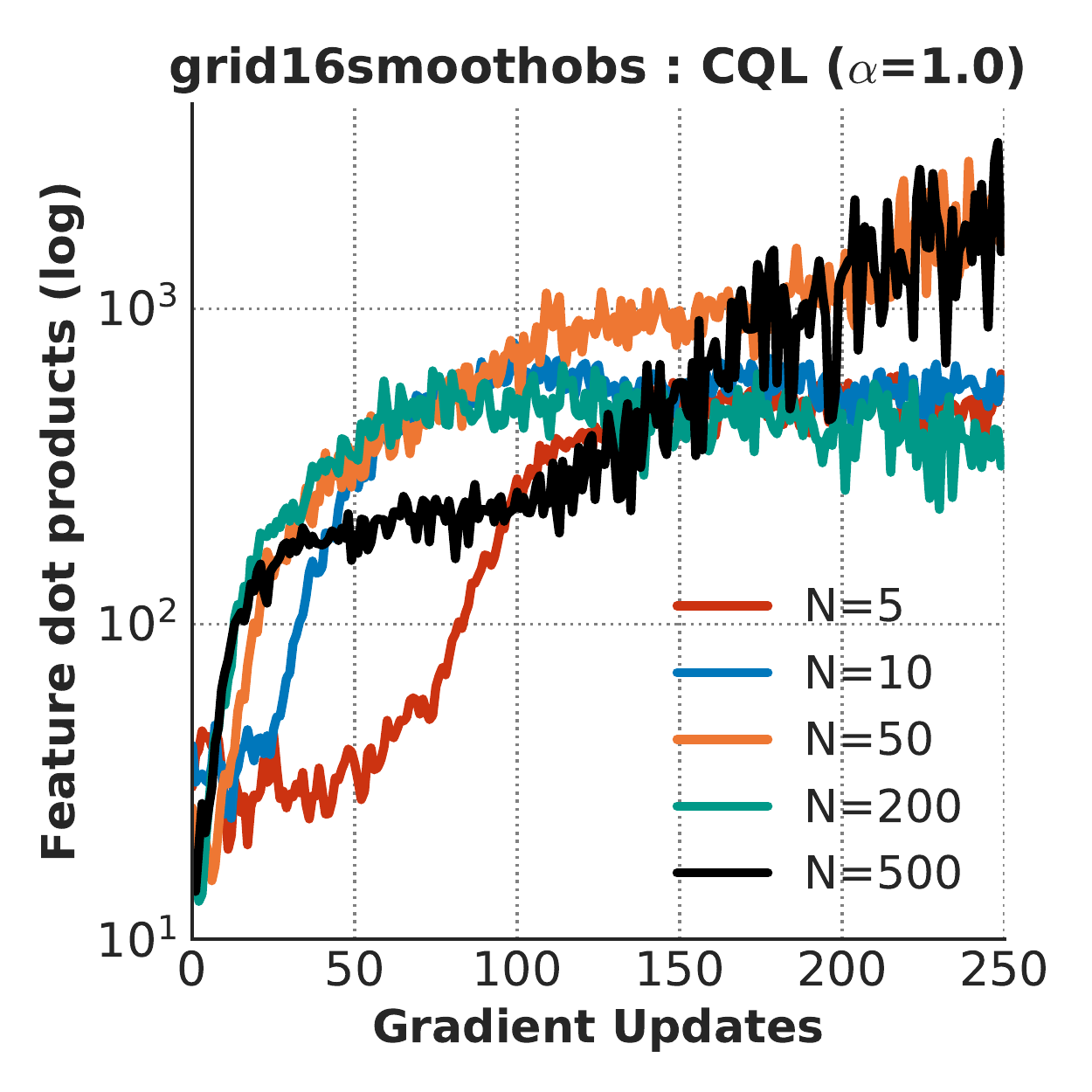}
    \vspace{-0.3cm}
    \caption{\label{fig:target_update_frequency} \footnotesize{\rebuttal{\textbf{Comparing the feature dot products for various target update delays}, where a smaller $N$ implies a faster update and a larger $N$ corresponds to a slower target update. Observe that while slower updates to the target network may reduce co-adaptation, very slow target updates may still lead to excesssive co-adaptation.}}} 
    \vspace{-0.6cm}
\end{wrapfigure}
\subsubsection{\rebuttal{Effect of Target Update Frequency on Feature Co-Adaptation}}
\rebuttal{We studied the effect of target update frequency on feature co-adaptation, on some gridworld domains from \citet{fu2019diagnosing}. We utilized the \texttt{grid16smoothobs} environment, where the goal of the agent is to navigate from the center of a $16 \times 16$ gridworld maze to one of its corners while avoiding obstacles and ``lava'' cells. The observations provided to the RL algorithm are given by a high-dimensional random transformation of the $(x, y)$ coordinates, smoothed over neighboring cells in the gridworld. We sampled an offline dataset of 256 transitions and trained a Q-network with two hidden layers of size $(1024, 1024)$ via fitted Q-iteration (FQI)~\citep{Riedmiller2005}.}

\rebuttal{We evaluated the feature dot products for Q-functions trained with a varying target update frequencies, given generically as: updating the target network using a hard target update once per $N$ gradient steps, where $N$ takes on values $N=5, 10, 50, 100, 200, 500$, and present the results in Figure~\ref{fig:target_update_frequency} (left), averaged over 3 random seeds. Thee feature dot products initially decrease from $N=5$ to $N=10$, because the target network is updated slower, but then starts to rapidly increase when when the target network is slowed down further to $N=50$ and $N=200$ in one case and $N=500$ in the other case.}

\rebuttal{We also evaluated the feature dot products when using CQL as the base offline RL algorithm. As shown in Figure~\ref{fig:target_update_frequency} (right), while CQL does reduce the absolute range of the feature dot products, slow target updates with $N=500$ still lead to the highest feature dot products as training progresses.}
\rebuttal{\textbf{Takeaway:} While it is intuitive to think that a slower target network might alleviate co-adaptation, we see that this is not the case empirically with both FQI and CQL, suggesting a deeper question that is an interesting avenue for future study.}

\subsubsection{\rebuttal{Gridworld Experiments Comparing TD-learning vs Offline SARSA}}
\label{app:exact_behavior_policy}
\rebuttal{To supplement the analysis in Section~\ref{app:problem_more}, we ran some experiments in the gridworld domains from \citet{fu2019diagnosing}. In this case, we used the \texttt{grid16smoothsparse} and \texttt{grid16randomsparse} domains, which present challenging navigation tasks in a maze under a 0-1 sparse reward signal, provided at the end of the trajectory. Additionally, the observations available to the offline RL agent do not consist of the raw $(x, y)$ locations of the agent in the maze, but rather high-dimensional randomly chosen transformations of $(x, y)$ in the case of \texttt{grid16randomsparse}, which are additionally smoothed locally around a particular state to obtain \texttt{grid16smoothsparse}.}

\rebuttal{Since our goal is to compare feature co-adaptation in TD-learning and offline SARSA, we consider a case where we evaluate a ``mixed'' behavior policy that chooses the optimal action with a probability of 0.7 at a given state, and chooses a random, suboptimal action with 0.3. We then generate a dataset of size $256$ transitions and train offline SARSA and TD-learning on this data. While SARSA backups the next action observed in the offline dataset, TD-learning computes a full expectation of the Q-function $\E_{\ba' \sim \pi_\beta(\cdot|\bs')}\left[Q(\bs', \ba')\right]$ under the behavior policy for computing Bellman backup targets. The behavior policy is fully known to the TD-learning agent. Our Q-network consists of two hidden layers of size $(1024, 1024)$ as before.}

\begin{wrapfigure}{r}{0.6\textwidth}
    \centering
    \vspace{-0.5cm}
    \includegraphics[width=0.47\linewidth]{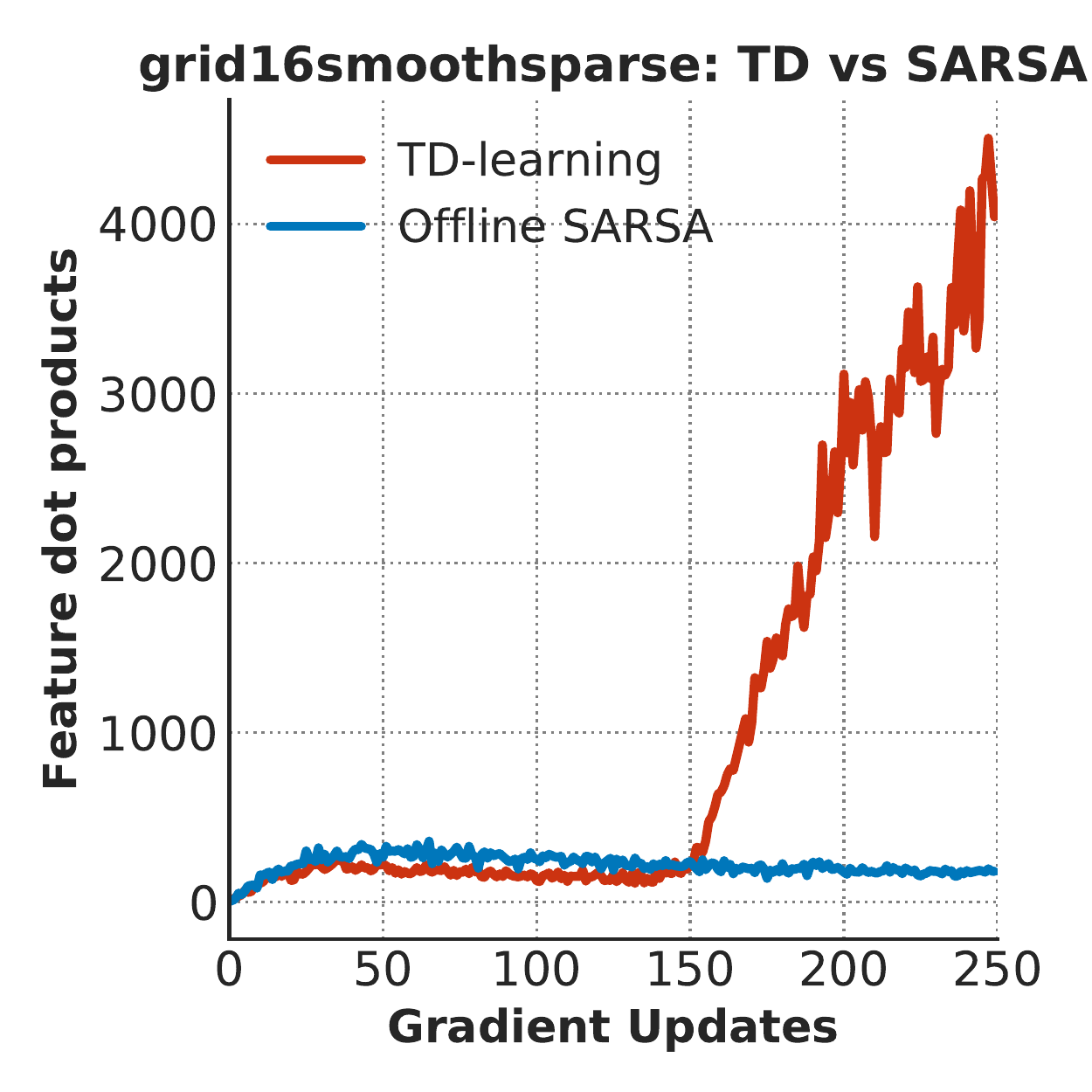}
    \includegraphics[width=0.47\linewidth]{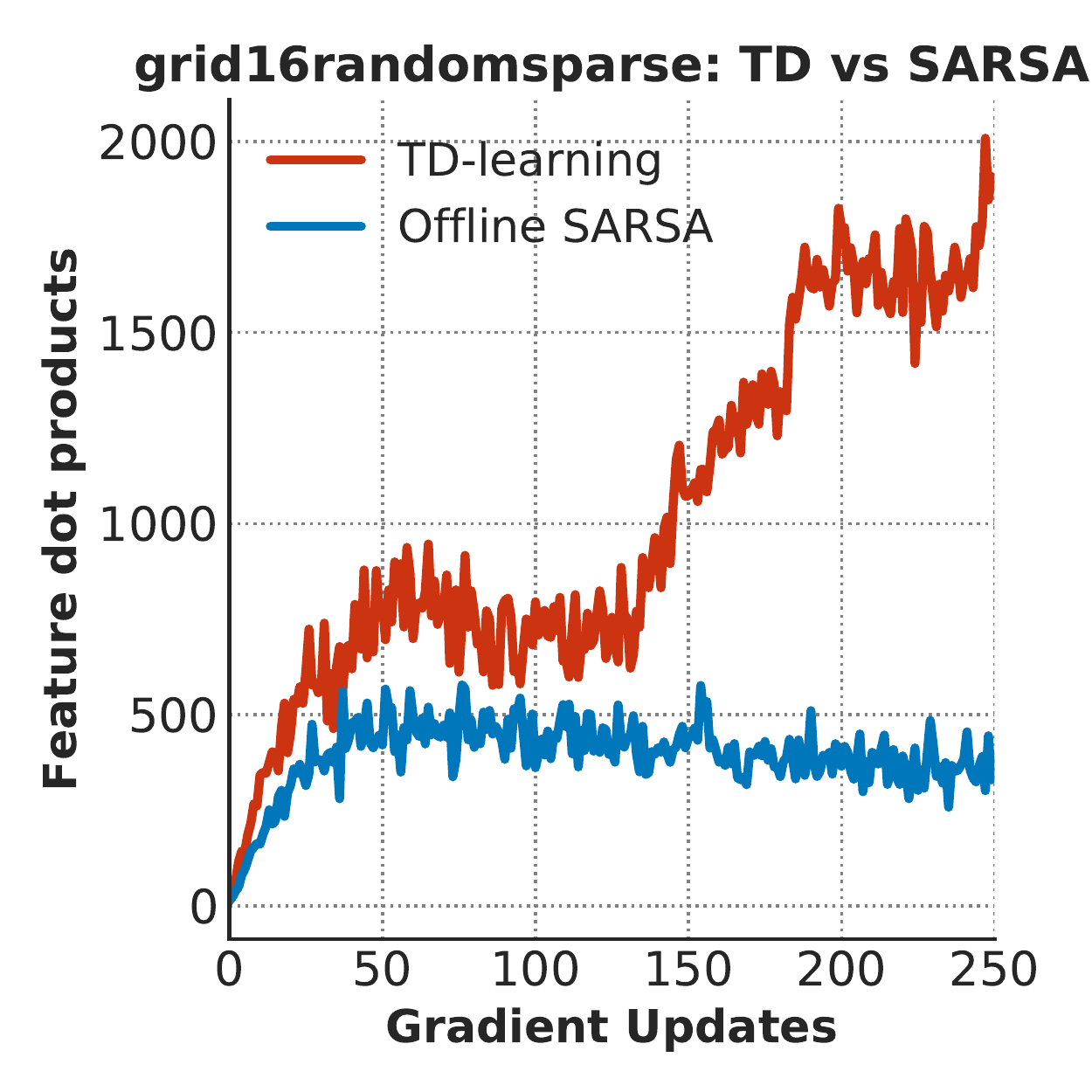}
    \vspace{-0.3cm}
    \caption{\label{fig:out_of_sample_actions} \footnotesize{\rebuttal{\textbf{Comparing the feature dot products for TD-learning and offline SARSA}, used to compute the value of the behavior policy using a dataset of size 256 on two gridworld domains. Observe that the feature dot products are higher in the case of TD-learning compared to offline SARSA.}}} 
\end{wrapfigure}
\rebuttal{We present the trends in the feature dot products for TD-learning and offline SARSA in Figure~\ref{fig:out_of_sample_actions}, averaged over three seeds. Observe that the trends in the dot product values for TD-learning and offline SARSA closely follow each other for the initial few gradient steps, soon, the dot products in TD-learning start growing faster. In contrast, the dot products for SARSA either saturate or start decreasing. The only difference between TD-learning and SARSA is the set of actions used to compute Bellman targets -- while the actions used for computing Bellman backup targets in SARSA are in-sample actions and are observed in the dataset, the actions used by TD-learning may be out-of-sample, but are still within the distribution of the data-generating behavior policy. This supports our empirical evidence in the main paper showing that out-of-sample actions can lead to feature co-adaptation.}

\subsubsection{\rebuttal{Feature Co-Adaptation and Normalized Feature Similarities}}
\rebuttal{Note that we characterized feature co-adaptation via the dot products of features. In this section, we explore the trends in other notions of similarity, such as cosine similarity between $\phi(\bs, \ba)$ and $\phi(\bs', \ba')$ which measures the dot product of feature vectors at consecutive state-action tuples after normalization. Formally,}
\rebuttal{
\begin{align*}
 \text{cos} (\phi(\bs, \ba), \phi(\bs', \ba')) := \frac{\phi(\bs, \ba)^\top \phi(\bs', \ba')}{||\phi(\bs, \ba)||_2 \cdot ||\phi(\bs', \ba')||_2}.
\end{align*}
}
\rebuttal{\!We plot the trend in the cosine similarity with and without DR3 for five Atari games in Figure~\ref{fig:atari_cosine} with CQL, DQN and REM, and for the three MuJoCo tasks studied in Appendix~\ref{app:mujoco} in Figure~\ref{fig:mujoco_cosine}. We find that the cosine similarity is generally very high on the Atari domains, close to 1, and not indicative of performance degradation. On the Ant and Walker2d MuJoCo domains, we find that the cosine similarity first rises up close to 1 and roughly saturates there. On the Hopper domain, the cosine similarity even decreases over training. However we observe that the feature dot products are increasing for all the domains. Applying DR3 in both cases improves performance (as shown in earlier Appendix), and generally gives rise to reduced cosine similarity values, though it can also increase the cosine similarity values occasionally. Furthermore, even when the cosine similarities were decreasing for the base algorithm (e.g., in the case of Hopper), addition of DR3 reduced the feature dot products and helped improve performance. This indicates that both the norm and directional alignment are contributors to the co-adaptation issue, which is what DR3 aims to fix and independently directional alignment does not indicate poor performance.}

\begin{figure}[t]
    \centering
    \includegraphics[width=0.99\textwidth]{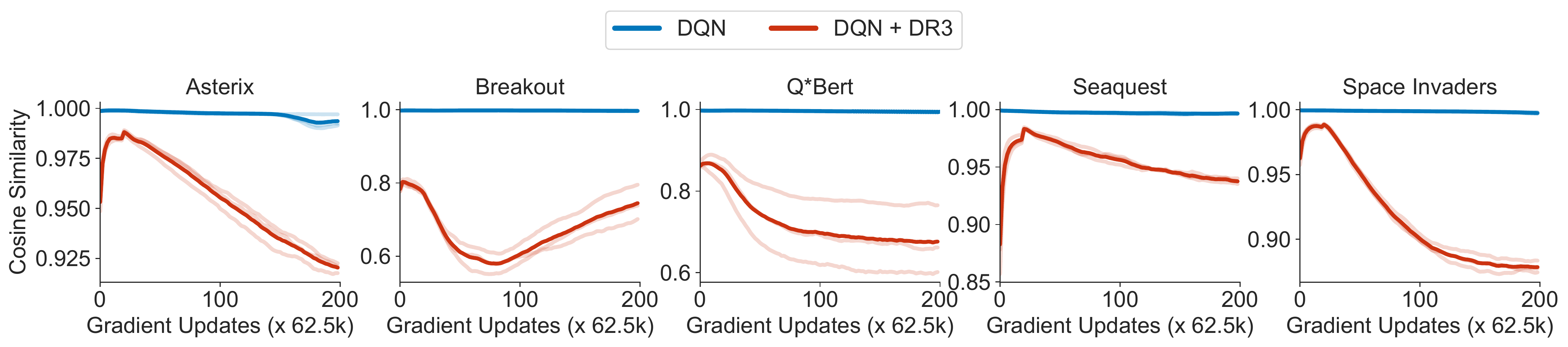}
    \includegraphics[width=0.99\textwidth]{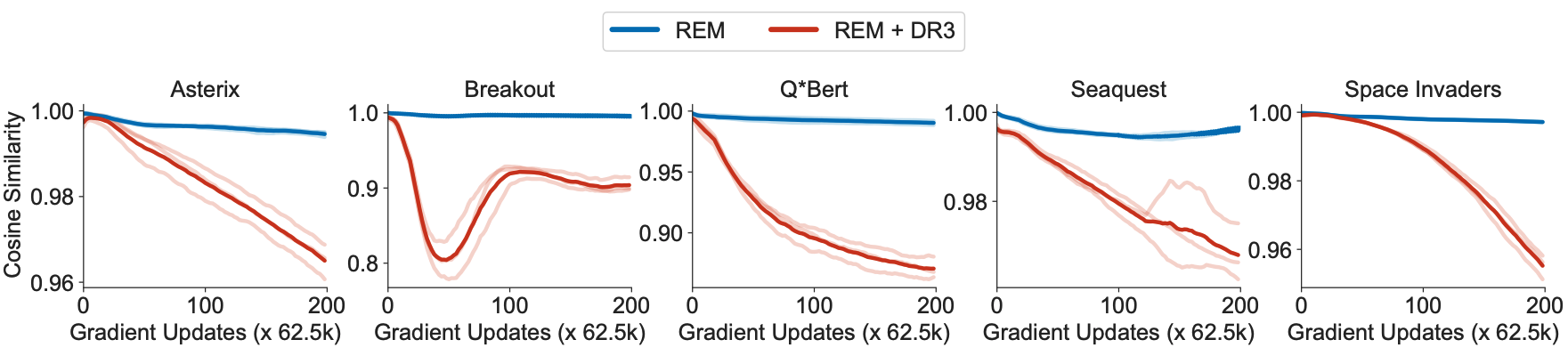}
    \includegraphics[width=0.99\linewidth]{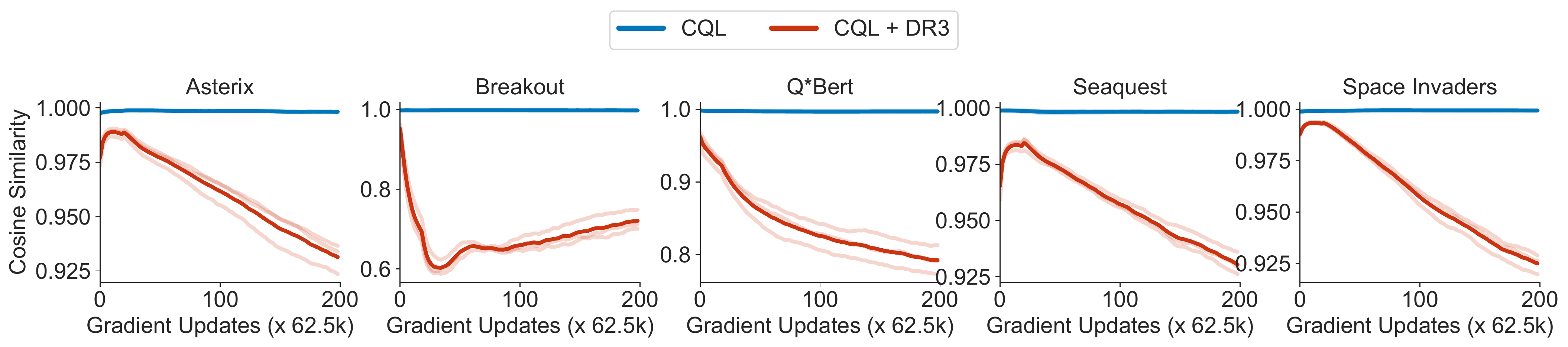}
    \vspace{-5pt}
    \caption{\footnotesize{\label{fig:atari_cosine} \rebuttal{\textbf{Cosine similarities of DQN, DQN + DR3, REM, REM + DR3 and CQL, CQL + DR3.} Note that DQN, REM and CQL attain close to 1 cosine similarities, and addition of DR3 does reduce the cosine similarities of consecutive state-action features.}}}
    \vspace{-0.25cm}
\end{figure}

\begin{figure}[t]
    \centering
    \includegraphics[width=0.9\textwidth]{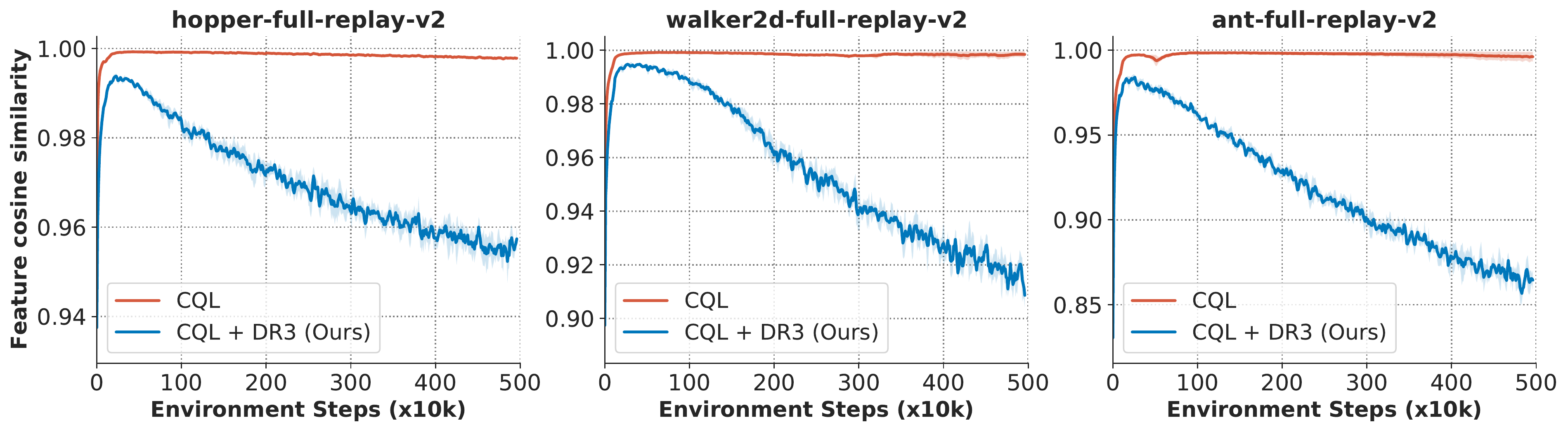}
    \vspace{-5pt}
    \caption{\footnotesize{\label{fig:mujoco_cosine} \rebuttal{\textbf{Cosine similarities of CQL and CQL + DR3 on MuJoCo domains.} Note that the cosine similarities of CQL grow to 1 and roughly stabilize for Ant and Walker2d, but start decreasing for Hopper. This happens despite the oscillatory trends in performance of CQL on Hopper~\ref{fig:mujoco_results_from_iup}. This means that a low cosine similarity need not imply poor performance, and DR3 can improve performance even when cosine similarity of base CQL is decreasing. We also notice that DR3 does actually reduce cosine similarity.}}}
    \vspace{-0.25cm}
\end{figure}

\subsection{\rebuttal{Stability of DR3 From a Good Solution}}
\label{app:cql_stability}
\rebuttal{In this appendix, we study the trend of CQL + DR3 when starting learning from a good initialization, which was studied in Figure~\ref{fig:stability}. As shown in Figure~\ref{fig:cql_stability}, while the performance for baseline CQL degrades significantly (from 5000 at initialization on Asterix, performance degrades to $\sim$2000 by 100 iterations for base CQL), whereas the performance of DR3 only moves from 5000 to $\sim$4300. A similar trend holds for Breakout. This means that the addition of DR3 does stabilize the learning relative to the baseline algorithm. Please note that we are not claiming that DR3 is unequivocally stable, but that improves stability relative to the base method.}

\begin{figure}[t]
    \centering
    \includegraphics[width=0.99\textwidth]{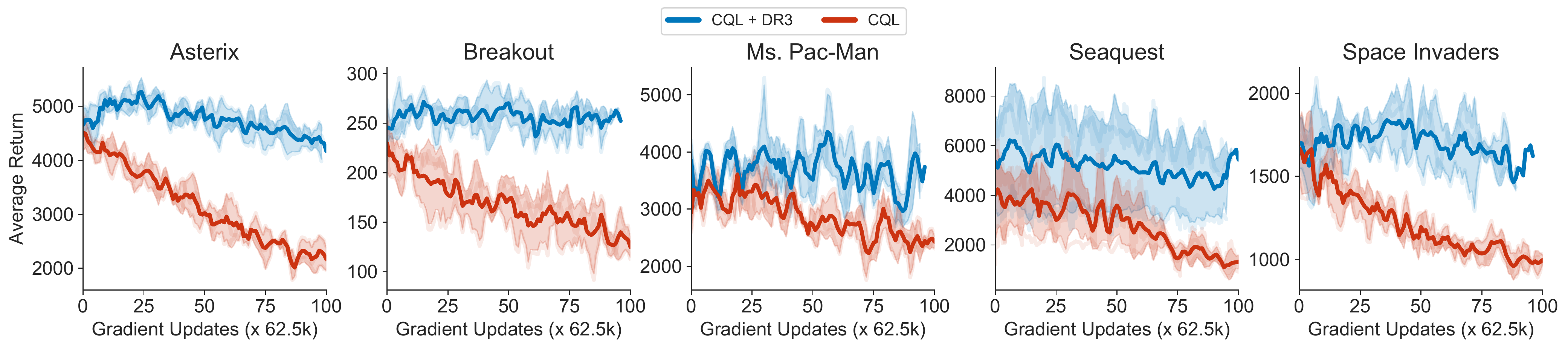}
    \vspace{-5pt}
    \caption{\footnotesize{\label{fig:cql_stability} \rebuttal{\textbf{Running CQL + DR3 and CQL in the setup of Figure~\ref{fig:stability} to evaluate the stability of CQL + DR3 when starting training from a good solution.} Observe that the performance of base CQL decays quickly from the good solution, but CQL + DR3 is \emph{relatively} more stable.}}}
    \vspace{-0.25cm}
\end{figure}

\begin{figure}[t]
    \centering
    \includegraphics[width=0.99\textwidth]{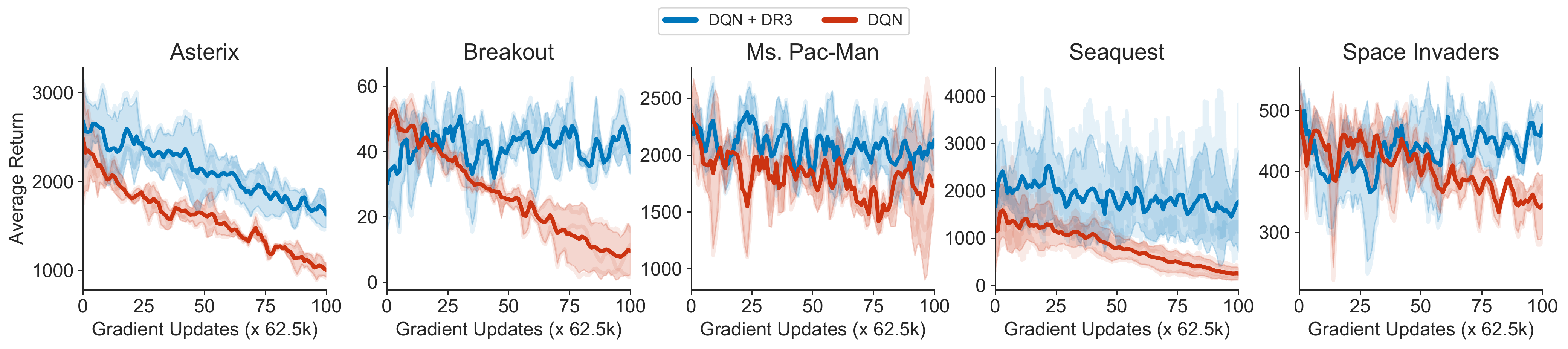}
    \vspace{-5pt}
    \caption{\footnotesize{\label{fig:dqn_stability} \rebuttal{\textbf{Running DQN + DR3 and DQN in the setup of Figure~\ref{fig:stability} to evaluate the stability of DQN + DR3 when starting training from a good solution.} Observe that the performance of base DQN decays quickly from the good solution, but DQN + DR3 is \emph{relatively} more stable.}}}
    \vspace{-0.25cm}
\end{figure}

\vspace{-0.1cm}
\subsection{\rebuttal{Statistical Significance of DR3 and Franka Kitchen Results}}
\label{app:significance}

\begin{table}[h]
    \vspace{-0.05in}
    \fontsize{10}{8}\selectfont
    \centering
    \captionof{table}{\footnotesize{\textbf{Performance of CQL, CQL + \methodname\ after 2M gradient steps on the Franka Kitchen domains} averaged over 3 seeds. This is training for \textbf{6x} longer compared to CQL defaults. Observe that CQL + \methodname\ outperforms CQL at 2M steps, indicating is efficacy in preventing long term performance degradation.
    }}
    \label{tab:cql_kitchen}
    \vspace{-0.1in}
    \begin{tabular}{@{}lrr@{}}
    \toprule
    {\textbf{D4RL Task}} & CQL & CQL + \methodname \\
    \midrule
    \texttt{kitchen-mixed} & 27.67 $\pm$ 12.66 & \textbf{37.00 $\pm$ 11.53} \\
    \texttt{kitchen-partial} & 20.67 $\pm$ 15.57 & \textbf{40.67 $\pm$ 4.04}  \\
    \texttt{kitchen-complete} & 28.00 $\pm$ 14.73 & \textbf{38.67 $\pm$ 6.66} \\
    \bottomrule
    \end{tabular}
\end{table}
We present the results comparing CQL and CQL+DR3 on the Franka Kitchen tasks from D4RL in Table~\ref{tab:cql_kitchen}. Observe that CQL+DR3 outperforms CQL, and to test the statistical significance of these results, we analyze the probability of improvement of CQL+DR3 over CQL next.

\begin{figure}[H]
    \centering
    \vspace{-0.15cm}
    \includegraphics[width=0.45\linewidth]{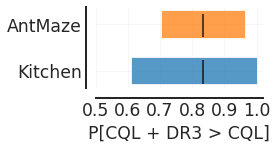}
    \vspace{-0.2cm}
    \caption{\label{fig:significance} \footnotesize{\rebuttal{\textbf{Statistical significance of the results of CQL + DR3 vs CQL (Table~\ref{tab:cql_d4rl}) as measured by average probability of improvement~\citep{agarwal2021precipice},} with stratified bootstrap confidence intervals for this statistic. Since the lower CI for this statistic is $> 0.5$, CQL + DR3 \textbf{significantly} improves over base CQL, and since the mean and upper CI are $\geq 0.75$, this improvement is also \textbf{meaningful}.}}} 
    \vspace{-0.4cm}
\end{figure}

\rebuttal{In order to assess the statistical significance of our D4RL Antmaze and Kitchen results, we follow the recommendedations by \citet{agarwal2021precipice} for comparing deep RL algorithms considering their statistical uncertainties. Specifically, we computed the average probability of improvement~\citep{agarwal2021precipice} of CQL + DR3 over CQL on the antmaze and kitchen domains, and we find that DR3 \textbf{does} significantly and meaningfully improve over CQL on both the Kitchen and AntMaze domains. Before presenting the results, let us first describe the metric we compute.}

\rebuttal{\textbf{Probability of improvement and statistical significance}. For two given algorithms $\mathsf{Alg}_1$ and $\mathsf{Alg}_2$, and runs $X_{k,1}, X_{k,2}, \cdots, X_{k,m}$ from $\mathsf{Alg}_1$ and runs $Y_{k,1}, Y_{k,2}, \cdots, Y_{k,n}$ from $\mathsf{Alg}_2$ on task $k$, the probability of improvement of $\mathsf{Alg}_1$ over $\mathsf{Alg}_2$ is given by $P(\mathsf{Alg}_1 > \mathsf{Alg}_2) = \frac{1}{K} \sum_{k=1}^{K} P(\mathsf{Alg}^k_1 > \mathsf{Alg}^k_2)$. The probability of improvement on a given task $k$,   $P(\mathsf{Alg}^k_1 > \mathsf{Alg}^k_2)$ is computed using the Mann-Whitney U-statistic and is given by:}
\vspace{-0.2cm}
\rebuttal{
\begin{align*}
   P(\mathsf{Alg}^k_1 > \mathsf{Alg}^k_2) = \frac{1}{M N} \sum_{i=1}^{M}\sum_{j=1}^{N}S(X_{k,i}, Y_{k, j})\quad \text{where}\quad
S(x,y)={\begin{cases}1,&{\text{if }}y<x,\\{\tfrac {1}{2}},&{\text{if }}y=x,\\0,&{\text{if }}y>x.\end{cases}}\label{eq:prob_improve}
\end{align*}
\vspace{-0.05cm}
}

\rebuttal{\!$\mathsf{Alg}_1$ leads to statistically \emph{significant} improve over $\mathsf{Alg}_2$ if the lower CI for $\mathsf{P} (\mathsf{Alg}_1 > \mathsf{Alg}_2)$ is larger than 0.5. Per the Neyman-Pearson statistical testing criterion in \citet{bouthillier2021accounting}, $\mathsf{Alg}_1$ leads to statistically \emph{meaningful} improvement over $\mathsf{Alg}_2$  if the upper confidence interval~(CI) of $\mathsf{P} (\mathsf{Alg}_1 > \mathsf{Alg}_2)$ is larger than 0.75.} 

\rebuttal{Figure~\ref{fig:significance} presents the value of $\mathsf{P} (\text{CQL + DR3} > \text{CQL})$ on the AntMaze and Kitchen domains at 2M gradient steps along with the 95\% CI for this statistic. DR3 improves over CQL on the AntMaze domains with probability \textbf{0.83} with 95\% CI (0.7, 0.96) and on the Kitchen domains with probability \textbf{0.8} with 95\% CI (0.6, 1.0). These values pass the criterion of being both statistically significant and meaningful per the above definitions, implying that DR3 does significantly and meaningfully improve upon CQL on these domains.}

\vspace{-0.25cm}
\section{Extended Related Work}
\vspace{-0.25cm}
\label{sec:extended_related}
In this section, we briefly review some extended related works, and in particular, try to connect feature co-adaptation and implicit regularization to various interesting results pertaining to RL lower-bounds with function approximation and self-supervised learning.

\textbf{Lower-bounds for offline RL.} \citet{zanette2020exponential} identifies hard instances for offline TD learning of linear value functions when the provided features are ``aliased''. Note that this work does not consider feature learning or implicit regularization, but their  hardness result relies heavily on the fact the given linear features are aliased in a special sense. Aliased features utilized in the hard instance inhibit learning along certain dimensions of the feature space with TD-style updates, necessitating an exponential sample size for near-accurate value estimation, even under strong coverage assumptions. A combination of \citet{zanette2020exponential}'s argument, which provides a hard instance given aliased features, and our analysis, which studies the emergence of co-adapted/similar features in the offline deep RL setting, could imply that the co-adaptation can lead to failure modes from the hard instance, even on standard Offline RL problems, when provided with limited data.

\textbf{Connections to self-supervised learning (SSL).}  Several modern self-supervised learning methods~\citep{grill2020bootstrap,chen2020exploring} can be viewwed as utilizing some form of bootstrapping where different augmentations of the same input ($\bx + \text{Aug}_1, \bx + \text{Aug}_2$) serve as consecutive state-action tuples that appear on two sides of the backup. If we may extrapolate our reasoning of feature co-adaptation to this setting, it would suggest that performing noisy updates on a self-supervised bootstrapping loss will give us feature representations that are highly similar for consecutive state-action tuples, i.e., the representations for $\phi(\bx + \text{Aug}_1)^\top \phi(\bx + \text{Aug}_2)$ will be high. Intuitively, an easy way for obtaining high feature dot products is for $\phi(\cdot)$ to capture only that information in $\cdot$, which is agnostic to data augmentation, thus giving rise to features that are invariant to transformations. This aligns with what has been shown in self-supervised learning~\citep{tian2020understanding,tian2021understanding}. Another interesting point to note is that while such an explanation would indicate that highly co-adapted features are beneficial in SSL, such features can be adverse in value-based RL as discussed in Section~\ref{sec:problem}. 

\textbf{Preventing divergence in deep TD-learning.} Finally, we discuss \citet{achiam2019towards} which proposes to pre-condition the TD-update using the inverse the neural tangent kernel~\citep{ntk} matrix so that the TD-update is always a contraction, for every $\theta_k$ found during TD-learning. Intuitively, this can be overly restrictive in several cases: we do not need to ensure that TD always contracts, but that is eventually stabilizes at good solution over long periods of running noisy TD updates, Our implicit regularizer (Equation`\ref{eqn:regularizer}) derives this condition, and our theoretically-inspired \methodname\ regularizer shows that empirically, it suffices to penalize the dot product similarity in practice.

\vspace{-0.25cm}
\section{Proof of Theorem~\ref{thm:implicit_noise_reg}}
\vspace{-0.25cm}
\label{app:proofs}

In this section, we will derive our implicit regularizer $R_\mathrm{TD}(\theta)$ that emerges when performing TD updates with a stochastic noise model with covariance matrix $M$. We first introduce our notation that we will use throughout the proof, then present our assumptions and finally derive the regularizer. Our proof utilizes the analysis techniques from \citet{blanc2020implicit} and \citet{damian2021label}, which analyze label-noise SGD for supervised learning, however key modifications need to be made to their arguments to account for non-symmetric matrices that emerge in TD learning. As a result, the form of the resulting regularizer is very different. To keep the proof concise, we will appeal to lemmas from these prior works which will allow us to bound certain concentration terms. 

\subsection{Notation}
The noisy TD-learning update for training the Q-function is given by:
\begin{align}
    \!\!\!\theta_{k+1} = \theta_k - \eta \underbrace{\left( \sum_i \nabla_\theta Q(\bs_i, \ba_i) \left(Q_\theta(\bs_i, \ba_i)\!- \!(r_i\!+\!\gamma {Q}_{\theta}(\bs'_i, \ba'_i))\right) \right)}_{:= g(\theta)}\!+\!\eta \varepsilon_k,  ~~~~ \varepsilon_k \sim \mathcal{N}(0, M)
\end{align}
where $g(\theta)$ denotes the parameter update. Note that $g(\theta)$ is not a full gradient of a scalar objective, but it is a form of a ``pseudo''-gradient or ``semi''-gradient. Let $\varepsilon_k$ denote an  i.i.d.random noise that is added to each update. This noise is sampled from a zero-mean Gaussian random variable with covariance matrix $M$, i.e., $\mathcal{N}(0, M)$. 

Let $\theta^*$ denote a point in the parameter space such that in the vicinity of $\theta^*$, $g(\theta) \leq \mathscr{C}$, for a small enough $\mathscr{C}$. Let $G(\theta)$ denote the derivative of $g(\theta)$ w.r.t. $\theta$: $G(\theta) = \nabla_\theta g(\theta)$ and let $\nabla G(\theta)$ denote the third-order tensor $\nabla^2_\theta g(\theta)$. For notation clarity, let $G = G(\theta^*), \nabla G = \nabla G (\theta^*)$. Let $e_i$ denote the signed TD error for a given transition $(\bs_i, \ba_i, \bs'_i) \in \mathcal{D}$ at $\theta^*$: 
\begin{align}
e_i = Q_{\theta^*}(\bs_i, \ba_i) - (r_i + \gamma Q_{\theta^*}(\bs'_i, \ba'_i)).
\end{align}
Since $\theta^*$ is a fixed point of the training TD error, $e_i = 0$. Following \citet{blanc2020implicit}, we will assume that the learning rate in gradient descent, $\eta$, is small and we will ignore terms that scale as $\mathcal{O}(\eta^{1 + \delta})$, for $\delta > 0$. Our proof will rely on using a reference Ornstein-Uhlenbeck (OU) process which the TD parameter iterates will be compared to. Let $\zeta_k$ denote the $k$-th iterate of an OU process, which is defined as:
\vspace{-0.1in}
\begin{equation}
    \label{eqn:ou_process}
    \zeta_{k+1} = (I - \eta G) \zeta_k + \eta \varepsilon_k, ~~~ \varepsilon_k \sim \mathcal{N}(0, M)
\end{equation}
We will drop $\theta$ from $\nabla_\theta$ to indicate that the gradient is being computed at $\theta^*$, and drop $(\bs_i, \ba_i)$ from $Q(\bs_i, \ba_i)$ and instead represent it as $Q_i$ for brevity; we will represent $Q(\bs'_i, \ba'_i)$ as $Q'_i$. We assume that $\nabla^2 Q_i$ is $\mathscr{L}_2$-Lipschitz and $\nabla^3 Q_i$ is $\mathscr{L}_3$-Lipschitz throughout the parameter space $\Theta$.

\subsection{Proof Strategy} 
For a given point $\theta^*$ to be an attractive fixed point of TD-learning, our proof strategy would be to derive the condition under which it mimics a given OU noise process, which as we will show stays close to the parameter $\theta^*$. This condition would then be interpreted as the gradient of a ``induced'' implicit regularizer. If the point $\theta^*$ is not a stationary point of this regularizer, we will show that the movement $\theta$ is large when running the noisy TD updates, indicating that the regularizer, atleast in part guides the dynamics of TD-learning. To show this, we would write out the gradient update, isolate some terms that will give rise to the implicit regularizer, and bound the remaining terms using contraction and concentration arguments. The contraction arguments largely follow prior work (though with key exceptions in handling contraction with asymmetric and complex eigenvalue matrices), while the form of the implicit regularizer is different. Finally, we will interpret the resulting update over large timescales to show that learning is indeed guided by the implicit regularizer.   

\subsection{Assumptions and Conditions}
Next, we present some key assumptions we will need for the proof. Our first assumption is that the matrix $G \in \mathbb{R}^{d \times d}$ is of maximal rank possible, which is equal to the number of datapoints $n$ and $n \ll d$, the dimensionality of the parameter space. Crucially, this assumption do not imply that $G$ is of full rank -- it cannot be, because we are in the overparameterized regime. 
\begin{assumption}[$G$ spans an $n$-dimensional basis.]
\label{assumption:psd}
Assume that the matrix $G$ spans $n$-possible directions in the parameter space and hence, attains the maximal possible rank it can.
\end{assumption}

The second condition we require is that the matrices $\sum_i \nabla Q_i \nabla Q_i^\top$ and $M$ share the same $n$-dimensional basis as matrix $G$:
\begin{assumption}
\label{assumption:shared_basis}
$\sum_i \nabla Q_i \nabla Q_i^\top$, $M$, and $G$ span identical $n$-dimensional subspaces.
\end{assumption}
This is a technical condition that is required. If this condition is not met, as we will show the learning dynamics of noisy TD will not be a contraction in certain direction in the parameter space and TD-learning will not stabilize at such a solution $\theta^*$. In fact, we will utilize a stronger version of this statement for TD-learning to converge, and we will discuss this shortly.

\subsection{Lemmas Used In The Proof}
Next, we present some lemmas that would be useful for proving the theoretical result.  

\begin{lemma}[Expressions for the first and and second-order derivatives of $g(\theta)$.]
\label{lemma:useful}
The following definitions and expansions apply to our proof:
\begin{align*}
    G(\theta^*) &= \sum_{i} \nabla^2 Q_i e_i + \sum_i \nabla Q_i (\nabla Q_i - \gamma \nabla Q'_i)^\top\\
    \nabla G (\theta^*) [\bv, \bv] &= 2 \sum_i \nabla^2 Q_i \bv \bv^\top (\nabla Q_i - \gamma \nabla Q'_i) + \sum_i \mathrm{tr}\left((\nabla^2 Q_i - \gamma \nabla^2 Q'_i) \bv \bv^\top\right) \nabla Q_i + \nabla^3 Q_i e_i  
\end{align*}
\end{lemma}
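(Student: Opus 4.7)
The plan is a direct computation by the product and chain rules, exploiting the fact that once $g(\theta)$ is viewed as a vector-valued map on parameter space, both $G(\theta) = \nabla_\theta g(\theta)$ and the third-order tensor $\nabla G(\theta) = \nabla_\theta^2 g(\theta)$ are ordinary derivatives of that map. The ``semi-gradient'' status of $g$ only matters for whether $g$ is the gradient of some scalar objective (it is not, precisely because the target $\gamma Q_\theta(\bs'_i,\ba'_i)$ is treated as if fixed when $g$ is defined), but it plays no role in differentiating $g$ itself: when forming $G$ and $\nabla G$ we must differentiate through the target as well, and this is exactly what produces the $-\gamma \nabla Q'_i$ and $-\gamma \nabla^2 Q'_i$ terms in the final expressions.

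For the first identity, write $g(\theta) = \sum_i \nabla Q_i(\theta)\, \tilde e_i(\theta)$, where $\tilde e_i(\theta) = Q_\theta(\bs_i,\ba_i) - r_i - \gamma Q_\theta(\bs'_i,\ba'_i)$ is the signed TD residual (so $\tilde e_i(\theta^*) = e_i$). Applying the product rule to each summand gives $\nabla_\theta g(\theta) = \sum_i \nabla^2 Q_i(\theta)\, \tilde e_i(\theta) + \sum_i \nabla Q_i(\theta)\, \nabla_\theta \tilde e_i(\theta)^\top$, and since $\nabla_\theta \tilde e_i(\theta) = \nabla Q_i(\theta) - \gamma \nabla Q'_i(\theta)$, evaluating at $\theta^*$ yields the stated expression for $G(\theta^*)$. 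For the second identity, the cleanest route is to Taylor-expand the univariate curve $t \mapsto g(\theta^* + t\bv)$ and read off $\nabla G(\theta^*)[\bv,\bv] = \tfrac{d^2}{dt^2}\big|_{t=0} g(\theta^* + t\bv)$. Applying the one-dimensional product rule $(fh)'' = f''h + 2f'h' + fh''$ to each summand $\nabla Q_i(\theta^*+t\bv)\,\tilde e_i(\theta^*+t\bv)$ and evaluating at $t=0$ produces exactly three contributions: $\nabla^3 Q_i[\bv,\bv]\, e_i$ from $f''h$; $2\,\nabla^2 Q_i \bv \bv^\top(\nabla Q_i - \gamma \nabla Q'_i)$ from $2f'h'$, after rewriting the scalar factor $(\nabla Q_i-\gamma\nabla Q'_i)^\top \bv$ and pulling it into column-by-row form; and $\mathrm{tr}\bigl((\nabla^2 Q_i - \gamma \nabla^2 Q'_i)\bv\bv^\top\bigr)\, \nabla Q_i$ from $fh''$, using the identity $\bv^\top M \bv = \mathrm{tr}(M\bv\bv^\top)$. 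Summing over $i$ reproduces the displayed formula, with the understanding that the term written ``$\nabla^3 Q_i e_i$'' is shorthand for $\nabla^3 Q_i[\bv,\bv]\,e_i$.

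The computation is essentially routine; the main source of friction is bookkeeping rather than any substantive obstacle. Specifically, one must (i) keep vector/matrix/tensor shapes consistent when collapsing scalar factors back into column-vector form, (ii) be consistent about whether $\nabla^3 Q_i$ denotes the full $d{\times}d{\times}d$ third-derivative tensor or its contraction $\nabla^3 Q_i[\bv,\bv]$ (the lemma statement clearly uses the latter convention in its final term), and (iii) remember that $\tilde e_i(\theta)$ depends on $\theta$ through both $Q_\theta(\bs_i,\ba_i)$ and $Q_\theta(\bs'_i,\ba'_i)$, so every derivative of $\tilde e_i$ contributes a ``next-state'' piece scaled by $-\gamma$. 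No Lipschitz or regularity hypotheses beyond the existence of the relevant derivatives are invoked here — those enter only when the Taylor remainder in $\nu_k = \theta_k - \theta^*$ is bounded in the subsequent stability analysis.
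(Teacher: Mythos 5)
Your computation is correct and is exactly the routine product-rule/chain-rule derivation the paper intends (the paper states Lemma~\ref{lemma:useful} without an explicit proof, treating it as a direct expansion of $G=\nabla_\theta g$ and $\nabla G[\bv,\bv]=\tfrac{d^2}{dt^2}\big|_{t=0}g(\theta^*+t\bv)$). You also correctly handle the one subtlety that matters here—differentiating through the next-state term $Q_\theta(\bs'_i,\ba'_i)$ even though $g$ itself is only a semi-gradient—and your reading of $\nabla^3 Q_i e_i$ as shorthand for $\nabla^3 Q_i[\bv,\bv]\,e_i$ matches the paper's convention.
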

Lemma~\ref{lemma:useful} presents a decomposition of the matrix $G$ and the directional derivative of the third order tensor $\nabla G [\bv, \bv]$ in directions $\bv$ and $\bv$, which will appear in the Taylor expansion layer. Note that at $\theta^*$ since $e_i = 0$, the first term in $G(\theta^*)$ and the third term in $\nabla G(\theta^*)[\bv, \bv]$ vanish.
Lemma~\ref{lemma:covariance_noise} derives a fixed-point recursion for the covariance matrix of the total noise accumulated in the OU-process with covariance matrix $M$ and this will appear in our proof.  

\begin{lemma}[Covariance of the random noise process $\zeta_k$]
\label{lemma:covariance_noise}
Let $\zeta_{k}$ denote the OU process satisfying: $\zeta_{k+1} = (I - \eta G) \zeta_k + \eta \varepsilon_k$, where $\varepsilon_k \sim \mathcal{N}(0, M)$, where $M \succcurlyeq 0$. Then, $\zeta_{k+1} \sim \mathcal{N}(0, \Sigma)$, where $\Sigma$ satisfies the discrete Lyapunov equation: 
\begin{equation*}
    \Sigma^*_M = (I - \eta G) \Sigma^*_M (I - \eta G)^\top + \eta^2 M.
\end{equation*}
\end{lemma}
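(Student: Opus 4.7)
The plan is to establish the result in three steps: (i) show that $\zeta_k$ is zero-mean Gaussian for every $k$; (ii) derive a one-step recursion for its covariance; (iii) show this recursion has a unique fixed point satisfying the discrete Lyapunov equation, which is the limiting distribution.

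First I would argue Gaussianity by induction on $k$. Starting from $\zeta_0 \equiv 0$ (or any Gaussian initialization), the affine map $\zeta \mapsto (I - \eta G)\zeta + \eta \varepsilon_k$ sends a Gaussian to a Gaussian, since it is the sum of a linear image of one Gaussian and an independent Gaussian $\eta \varepsilon_k \sim \mathcal{N}(0, \eta^2 M)$. Taking expectations of both sides of the recursion and using $\E[\varepsilon_k]=0$ gives $\E[\zeta_{k+1}] = (I-\eta G)\E[\zeta_k]$, so the mean stays at zero. Hence $\zeta_k \sim \mathcal{N}(0, \Sigma_k)$ for every $k$, where $\Sigma_k := \E[\zeta_k \zeta_k^\top]$.

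Second, I would compute $\Sigma_k$ explicitly. Writing out the outer product $\zeta_{k+1}\zeta_{k+1}^\top$ using the recursion and taking expectations, the cross terms $\E[(I-\eta G)\zeta_k \varepsilon_k^\top] = 0$ vanish by independence of $\varepsilon_k$ from $\zeta_k$, which leaves
\begin{equation*}
\Sigma_{k+1} = (I - \eta G)\,\Sigma_k\,(I - \eta G)^\top + \eta^2 M.
\end{equation*}
This is a linear recursion in $\Sigma_k$, and by induction one obtains the closed form $\Sigma_k = \eta^2 \sum_{j=0}^{k-1} (I-\eta G)^j M (I-\eta G)^{\top\, j}$.

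Third, I would analyze the fixed point. Under the stability condition from Theorem~\ref{thm:implicit_noise_reg}~(1), the spectral radius $\rho(I - \eta G) < 1$ for small enough $\eta$ (the eigenvalues of $I-\eta G$ have modulus $<1$ whenever $\mathrm{Re}(\lambda_i(G))>0$, and the degenerate zero-eigenvalue directions are precisely the $\mathrm{Null}(G)$ directions handled separately in the main theorem). This implies the linear operator $T: \Sigma \mapsto (I-\eta G)\Sigma(I-\eta G)^\top$ is a contraction on the space of symmetric matrices in the appropriate norm, so the series defining $\Sigma_k$ converges absolutely to a unique limit $\Sigma^*_M$, which by taking $k \to \infty$ in the recursion satisfies $\Sigma^*_M = (I-\eta G)\Sigma^*_M (I-\eta G)^\top + \eta^2 M$. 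Positive semidefiniteness of $\Sigma^*_M$ follows from $M \succcurlyeq 0$ and the fact that each term in the series is PSD.

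The main obstacle is that $G$ is asymmetric and may have complex eigenvalues, so standard symmetric Lyapunov-theory arguments do not directly apply; one must work with the Jordan form or a suitable matrix norm that respects $\rho(I-\eta G)$ in order to conclude contraction. Also, in the genuinely overparameterized setting where $G$ has null directions, the statement should be read as describing the covariance in the range of $G$ (or equivalently restricting attention to the stable subspace), with the null directions treated separately via the nonlinear $\nabla G$ terms, as is done in the surrounding analysis of Theorem~\ref{thm:implicit_noise_reg}.
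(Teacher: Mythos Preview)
Your proposal is correct and follows essentially the same approach as the paper: induction to establish Gaussianity, derivation of the one-step covariance recursion $\Sigma_{k+1} = (I-\eta G)\Sigma_k(I-\eta G)^\top + \eta^2 M$, and then taking the fixed point. The paper's own proof is in fact much terser than yours---it simply states the recursion and says ``solving for the fixed point for $\Sigma_k$ gives the desired expression''---so your added discussion of the contraction argument, the closed-form series, and the handling of $\mathrm{Null}(G)$ directions goes beyond what the paper provides, but none of it is needed at the level of rigor the paper adopts for this lemma.
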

\begin{proof}
For the OU process, $\zeta_{k+1} = (I - \eta G) \zeta_k + {\eta} \varepsilon_k$, since $\varepsilon_k$ is a Gaussian random variable, by induction so is $\zeta_{k+1}$, and therefore the covariance matrix of $\zeta_{k+1}$ is given by:
\vspace{-0.1in}
\begin{equation}
    \Sigma_{k+1} := (I - \eta G) \Sigma_k (I - \eta G^\top) + \eta^2 M.
\end{equation}
\vspace{-0.1in}
Solving for the fixed point for $\Sigma_k$ gives the desired expression.
\end{proof}

In our proofs, we will require the following contraction lemmas to tightly bound the magnitude of some zero-mean terms that will appear in the noisy TD update under certain scenarios. Unlike the analysis in \citet{damian2021label} and \citet{blanc2020implicit} for supervised learning with label noise, where the contraction terms like $(I - \eta G)^k G$ are bounded by $\approx \frac{1}{k \eta}$ intuitively because $I - \eta G$ is a contraction in the subspace spanned by matrix $G$. However, this is not true for TD-learning directly since terms like $(I - \eta G)^k S$ appear for a different matrix $S$. Therefore, TD-learning will diverge from $\theta^*$ unless matrices $G$ and $M$ have their corresponding eigenvectors assigned to the top eigenvalues be approximately ``aligned''. We formalize this definition next, and then provide a proof of the concentration guarantee. 

\begin{definition}[$(\omega, C_0)$-alignment]
\label{def:alignment}
Given a positive semidefinite matrix $A$, let $A = U_A \Lambda_A U_A^\top$ denote its eigendecomposition. Without loss of generality assume that the eiegenvalues are arranged in decreasing order, i.e., $\forall i > j, \Lambda_A(i) \leq \Lambda_A(j)$. Given another matrix $B$, let $B = U_B \Lambda_B U_B^H$ denote its complex eigendecomposition, where eigenvalues in $\Lambda_B$ are arranged in decreasing order of their complex magnitudes, i.e., $\forall i > j, |\Lambda_B(i)| \leq |\Lambda_B(j)|$. Then the matrix pair $(A, B)$ is said to be $(\omega, C_0)$-aligned if $|U_B^H(i) U_A(i)| \leq \omega$ and if $\forall~ i, \Lambda_A(i) \leq C_0 |\Lambda_B(i)|$ for a constant $C_0$. 
\end{definition}
If two matrices are $(\omega, C_0)$-aligned, this means that the corresponding eigenvectors when arranged in decreasing order of eigenvalue magnitude roughly align with each other. This condition would be crucial while deriving the implicit regularizer as it will quantify the rate of contraction of certain terms that define the neighborhood that the iterates of noisy TD-learning will lie in with high probability. We will operate in the setting when the matrix $G$ and $\sum_i \nabla Q_i \nabla Q_i^\top$ are $(\omega, C_0)$-aligned with each other, and matrix $M$ and $G$ are also $(\omega, C_0)$-aligned (note that we can consider $\omega', C'_0)$, which will not change our bounds and therefore we go for less notational clutter). Next we utilize this notion of alignment to show a particular contraction bound that extends the weak contraction bound in \citet{damian2021label}.  

\begin{lemma}
\label{lemma:contraction}
Assume we are given a matrix $G$ such that $|\lambda_i(I - \eta G)| \leq \rho_0 < 1$ for all $\lambda_i$ such that $\lambda_i \neq 0$. Let $G = U \Lambda U^H$ be the complex eigenvalue decomposition of $G$ (since almost every matrix is complex-diagonalizable). For a positive semi-definite matrix $S$ that is $(\omega, C_0)$-aligned with $G$, if $S = U_S \Lambda_S U_S^\top$ is its eigenvalue decomposition, the following contraction bound holds: 
\begin{align*}
    ||(I - \eta G)^k S||  = \mathcal{O}\left(\frac{\omega C_0}{\eta k}\right)
\end{align*}
\end{lemma}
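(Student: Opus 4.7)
The plan is to reduce the operator norm bound to a mode-by-mode scalar estimate via the spectral decompositions, using the alignment hypothesis to couple the modes of $G$ and $S$. First I would write $G = U\Lambda U^H$ as its complex eigendecomposition, so that $(I - \eta G)^k = U(I - \eta\Lambda)^k U^H$, and use the real eigendecomposition $S = U_S \Lambda_S U_S^\top$. Since $S$ is PSD and the alignment condition gives $\Lambda_S(i) \leq C_0|\Lambda(i)|$, the support of $S$ lies within the $n$-dimensional range spanned by the nonzero eigenvalues of $G$; in particular every zero eigenvalue of $G$ contributes nothing to the product. Expanding
\begin{equation*}
(I - \eta G)^k S = \sum_{i,j} (1 - \eta\lambda_i)^k \,\Lambda_S(j) \,U(i)\, [U^H(i) U_S(j)] \,U_S^\top(j),
\end{equation*}
I would bound the operator norm by triangle inequality applied term-by-term, using the alignment bound $|U^H(i) U_S(j)| \leq \omega$ on the cross-overlaps (and, for off-diagonal pairs, an analogous bound derivable from the $(\omega, C_0)$ condition together with the unitarity of $U_S$).

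The key scalar estimate is then $|1 - \eta\lambda|^k |\lambda| \leq O\bigl(1/(\eta k)\bigr)$ for every nonzero eigenvalue $\lambda$ of $G$. This follows from calculus on the function $f(x) = (1 - \eta x)^k x$, whose maximum on the real interval where $|1 - \eta x| \leq \rho_0$ is attained near $x^* \sim 1/(\eta(k+1))$ with value $\sim 1/(e\eta k)$; outside that interval, $|1 - \eta x|^k$ is exponentially small in $k$ and absorbs any $|x| \leq 2/\eta$. The same estimate extends to complex $\lambda$ in the disk $\{z : |1 - \eta z| \leq \rho_0\}$ around $1/\eta$ by applying the bound to $|z|$ directly, using $|\lambda| = O(1/\eta)$ under the contraction hypothesis. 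Combining this with the alignment bound $\Lambda_S(j) \leq C_0|\lambda_{\sigma(j)}|$ (for the pairing $\sigma$ picked out by the alignment ordering) and summing over the $n$ nonzero modes yields the desired $O(\omega C_0/(\eta k))$ bound, absorbing $n$ into the implicit constant (or noting that with the diagonal-dominant alignment structure only one term per $j$ survives at leading order).

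The main obstacle will be controlling the asymmetric, complex-diagonalizable structure of $G$: the rows of $U^H$ are not orthonormal in the usual sense, so the cross-overlaps $U^H(i) U_S(j)$ cannot be interpreted as simple projections and naive triangle inequality can overcount. I expect to handle this by invoking the alignment condition as a \emph{diagonal pairing}: arranging eigenvalues in decreasing order of magnitude, the $i$-th complex eigenvector of $G$ is matched to the $i$-th eigenvector of $S$ up to an overlap of $\omega$, so the double sum reduces to an $n$-term diagonal sum plus off-diagonal remainders that are again controlled by $\omega$ and the conditioning of $U$. A secondary subtlety is verifying that the calculus bound on $(1 - \eta z)^k z$ holds uniformly over complex $z$ in the contraction disk, which I would confirm by the same elementary argument applied to $|z|$. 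The remaining bookkeeping is routine and parallels the weak-contraction lemma of Damian et al.\ in the symmetric case.
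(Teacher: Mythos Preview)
Your overall strategy matches the paper's proof: factor via the eigendecompositions $(I-\eta G)^k = U(I-\eta\Lambda)^kU^H$ and $S=U_S\Lambda_S U_S^\top$, use the $(\omega,C_0)$-alignment to pull out the factors $\omega$ and $C_0$, and reduce to the scalar estimate $\max_i |1-\eta\lambda_i|^k|\lambda_i|=O(1/(\eta k))$. The paper does exactly this, collapsing directly to $\omega\, C_0\cdot\max_i|1-\eta\Lambda(i)|^k|\Lambda(i)|$ without writing out your double sum over $(i,j)$; your concern about the non-orthonormality of the rows of $U^H$ and the off-diagonal cross-terms is legitimate, but the paper handles it at essentially the same informal level you anticipate, via the diagonal pairing implicit in the alignment ordering.

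The one genuine gap is your handling of complex eigenvalues. You assert that the scalar bound ``extends to complex $\lambda$ in the disk $\{z:|1-\eta z|\leq\rho_0\}$ by applying the bound to $|z|$ directly'' and that you would ``confirm by the same elementary argument applied to $|z|$.'' This does not work: the real-variable calculus on $f(x)=(1-\eta x)^kx$ does not transfer by substituting $|z|$ for $z$, because $|1-\eta z|$ is not a function of $|z|$ alone (it depends on the argument of $z$). The paper instead writes out $|1-\eta x|^2=1+\eta^2r^2-2\eta r\cos\theta$ for $x=re^{i\theta}$ explicitly and shows that the $O(1/(\eta k))$ bound holds only under the additional condition $\eta\leq\min_i \mathrm{Re}(\Lambda(i))/|\Lambda(i)|$; without such a condition the expression is unbounded. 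To close the argument you would need either to derive this condition on $\eta$, or alternatively to argue directly from the fixed-$\rho_0$ hypothesis that $|1-\eta\lambda|^k|\lambda|\leq (1+\rho_0)\rho_0^k/\eta$, which is indeed $O(1/(\eta k))$ with an implicit constant depending on $\rho_0$. Your stated route does neither.
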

\begin{proof}
To prove this statement, we can expand $(I - \eta G)$ using its eigenvalue decomposition only in the subspace that is jointly shared by $G$ and $M$, and then utilize the definition of $\omega$-alignment to bound the terms.
\begin{align}
    ||(I - \eta G)^k S|| &= ||(I - \eta U \Lambda U^H)^k U_S \Lambda_S U_S^\top ||\\
    &= \left\vert \left\vert (U U^H - \eta U \Lambda_U U^H)^k U_S \Lambda_S U_S^\top \right\vert \right\vert \\
    &= \leftnorm U \left(I - \eta \Lambda \right)^k U^H U_S \Lambda_S U_S^\top \rightnorm\\
    &\leq \omega \cdot ||\left(I - \eta \Lambda\right)^k|| \cdot \Lambda_S \\
    & \leq \omega \cdot C_0 \cdot \left( \max_i~~ |1 - \eta \Lambda (i)|^k |\Lambda(i)|\right)
\end{align}
Now we need to solve for the inner maximization term. When $\Lambda(i)$ is not complex for any $i$, the term above is $\lesssim 1/\eta k$ using the result from \citet{damian2021label}, but when $\Lambda(i)$ is complex, this bound can only hold under certain conditions. To note when this quantity is bounded, we expand $|1 - \eta x|^k$ for some complex number $x = r (\cos \theta + \iota \sin \theta) $:
\begin{align}
|1 - \eta x|^k &= \left\vert\left(1 - \eta r \cos \theta \right) + \iota \eta r \sin \theta \right\vert \\
&= \left[\sqrt{\left(1 - \eta r \cos \theta\right)^2 + \eta^2 r^2 \sin^2 \theta}\right]^k = \left(1 +\eta^2 r^2 - 2 \eta r \cos \theta\right)^{k/2}\\
\implies |1 - \eta x|^k |x| &= \left(1 +\eta^2 r^2 - 2 \eta r \cos \theta\right)^{k/2} r\\
& \lesssim \frac{1}{\eta k}~~~~\text{if}~ \eta \leq \min_{i} \frac{\mathrm{Re}(\Lambda(i))}{|\Lambda(i)|} ~~~~~\text{and}~~~~ \infty~~\text{otherwise}.
\end{align}
Plugging back the above expression in the bound above completes the proof.
\end{proof}

The proof of Lemma~\ref{lemma:contraction} indicates that unless the learning rate $\eta$ and the matrix $G$ are such that the $|\lambda_i(I - \eta G)| \leq \rho < 1$ in directions spanned by matrix $S$, such an expression may not converge. This is expected since the matrix $I - \eta G$  will not contract in directions of non-zero eigenvalues if the real part $r \cos \theta$ is negative or zero. Additionally, we note that under Definition~\ref{def:alignment}, we can extend several weak-contraction bounds from \citet{damian2021label} (Lemmas 9-14 in \citet{damian2021label}) to our setting. 

Next, Lemma~\ref{lemma:bounded_noise} shows that the OU noise iterates are bounded with high probability when Definition~\ref{def:alignment} holds:
\begin{lemma}[$\zeta_k$ is bounded with high probability]
\label{lemma:bounded_noise}
With probability atleast $1 - \delta$ and under Definition~\ref{def:alignment}, $||\zeta_k|| \leq n \omega \sqrt{\eta C_0} \log \frac{1}{\delta} = \mathcal{O}(\sqrt{\eta})$. 
\end{lemma}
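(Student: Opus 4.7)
The plan is to combine Lemma~\ref{lemma:covariance_noise}, which characterizes $\zeta_k$ as a Gaussian with covariance $\Sigma^*_M$ satisfying the discrete Lyapunov equation, with standard Gaussian tail bounds. Since $\zeta_k \sim \mathcal{N}(0, \Sigma_k)$ where $\Sigma_k \to \Sigma^*_M$, controlling $\|\zeta_k\|$ with high probability reduces to controlling the operator norm (and trace) of $\Sigma^*_M$. The bound $\|\zeta_k\| = \mathcal{O}(\sqrt{\eta})$ suggests the target is $\|\Sigma^*_M\| = \mathcal{O}(\eta)$, which is natural: the input noise scales as $\eta^2 M$ per step, while contraction by $I - \eta G$ in the relevant subspace removes a factor of $\eta$, leaving one factor of $\eta$ in the stationary covariance.

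First, I would unroll the Lyapunov recursion as the series
\begin{equation*}
\Sigma^*_M \;=\; \eta^2 \sum_{k=0}^{\infty} (I - \eta G)^k \, M \, \bigl((I - \eta G)^\top\bigr)^k .
\end{equation*}
Next, I would use the $(\omega, C_0)$-alignment of $(G, M)$ from Definition~\ref{def:alignment} to diagonalize both matrices in nearly the same basis, reducing each summand to a product of scalar contractions. Under the alignment, the eigenvalues of $I - \eta G$ in the span of $M$ satisfy $|1 - \eta \lambda_i(G)| \leq 1 - c\eta$ for some $c > 0$ (using the condition from Lemma~\ref{lemma:contraction} that $\eta \leq \min_i \mathrm{Re}(\Lambda(i))/|\Lambda(i)|$, so the real parts dominate). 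Summing the geometric series term-by-term then yields $\|\Sigma^*_M\| \lesssim \omega^2 C_0 \, \eta \|M\|/c$ and, since $M$ has rank $n$, $\mathrm{tr}(\Sigma^*_M) \lesssim n \omega^2 C_0 \, \eta \|M\|$. Finally, I would apply the standard sub-Gaussian tail inequality: for $\zeta \sim \mathcal{N}(0, \Sigma^*_M)$, with probability at least $1 - \delta$,
\begin{equation*}
\|\zeta\| \;\leq\; \sqrt{\mathrm{tr}(\Sigma^*_M)} + \sqrt{2 \|\Sigma^*_M\| \log(1/\delta)} \;=\; \mathcal{O}\!\left( n \omega \sqrt{\eta C_0} \log(1/\delta) \right),
\end{equation*}
matching the claimed bound.

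The main obstacle will be handling the non-symmetric (and possibly complex-eigenvalued) structure of $I - \eta G$ in the Lyapunov series. Unlike the symmetric label-noise setting of \citet{blanc2020implicit,damian2021label}, here one cannot directly diagonalize $G$ in an orthogonal basis; one must instead use the complex eigendecomposition $G = U \Lambda U^H$ and appeal to the alignment condition to bound cross-terms between the (possibly oblique) eigenvectors of $G$ and the orthogonal eigenvectors of $M$. Concretely, the factor $\omega$ in the bound arises precisely from the inner products $|U_M^\top(i) U_G(i)|$ in Definition~\ref{def:alignment}, and the condition $\eta \leq \min_i \mathrm{Re}(\Lambda(i))/|\Lambda(i)|$ is needed to ensure that the complex eigenvalues still produce a genuine contraction, since for purely imaginary eigenvalues the series would diverge. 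Once these technicalities are resolved, plugging into Gaussian concentration is routine.
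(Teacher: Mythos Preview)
Your proposal is correct and follows essentially the same route as the paper: unroll the Lyapunov recursion for the covariance as the series $\eta^2 \sum_k (I-\eta G)^k M (I-\eta G^\top)^k$, use the $(\omega,C_0)$-alignment together with the complex eigendecomposition of $G$ to bound the trace by $\mathcal{O}(\eta n^2 \omega^2 C_0)$, and then apply Gaussian norm concentration. The only cosmetic differences are that the paper bounds the finite-$k$ covariance $\Sigma_k$ directly rather than the stationary $\Sigma_M^*$, and for the final step simply invokes Corollary~1 of \citet{damian2021label} (a trace-only bound $\|\zeta_k\|\leq\sqrt{2\,\mathrm{tr}(\Sigma)\log(1/\delta)}$) rather than the sharper trace-plus-operator-norm inequality you wrote.
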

\begin{proof}
To prove this lemma, we first bound the trace of the covariance matrix $\Sigma_{k+1}$ and then apply high probability bounds on the Martingale norm concentration. The trace of the covariance matrix $\Sigma_{k+1}$ can be bounded as follows (all the equations below are restricted to the dimensions of non-zero eigenvalues of $G$):
\begin{align}
    \mathrm{tr}\left[\Sigma_{k+1} \right] &= \sum_{j \leq k} \mathrm{tr}\left[(I - \eta G)^j M (I - \eta G^\top)^j \right]\\
    &= \sum_{j \leq k} \mathrm{tr}\left[ (U U^H - \eta U \Lambda U^H)^j M (U U^H - \eta U \Lambda U^H)^j \right]\\
    &= \sum_{j \leq k} \mathrm{tr}\left[ U (I - \eta \Lambda)^j U^H U_M \Lambda_M U_M^\top U (I - \eta \Lambda)^j U^H \right]\\
    &= \sum_{j \leq k} n \omega^2 C_0 \mathrm{tr}\left[ |I - \eta \Lambda|^j \cdot |\Lambda| \cdot |I - \eta \Lambda|^j \right]\\
    &\leq n \omega^2 C_0 \sum_{j \leq k} n \cdot \max_\lambda (|1 - \eta \lambda|^{2j} \cdot  |\lambda|) \leq {\eta n^2 C_0 \omega^2}
\end{align}
Now, we can apply Corollary 1 from \citet{damian2021label} to obtain a bound on $||\zeta_k||$ as with high probability, atleast $1 - \delta$, $||\zeta_k|| \leq \sqrt{2 \mathrm{tr}(\Sigma) \log \frac{1}{\delta}} = n \omega \sqrt{\eta C_0} \log \frac{1}{\delta}$.
\end{proof}

\subsection{Main Proof of Theorem~\ref{thm:implicit_noise_reg}}
In this section, we present the main proof of Theorem~\ref{thm:implicit_noise_reg}. The proof involves two components: \textbf{(1)} the part where we derive the regularizer, and \textbf{(2)} bounding additional terms via concentration inequalities. Part \textbf{(1)} is specific to TD-learning, while a lot of the machinery for part \textbf{(2)} is directly taken from prior work~\citep{damian2021label} and \citet{blanc2020implicit}. We focus on part \textbf{(1)} here.

Our strategy is to analyze the learning dynamics of noisy TD updates that originate at $\theta^*$. In a small neighborhood around $\theta^*$, we can expand the noisy TD update (Equation~\ref{eq:td_update}) using Taylor's expansion around $\theta^*$ which gives:
\begin{align}
    \label{eqn:nu_k_app}
    &\theta_{k+1} = \theta_k - \eta g(\theta_k)+ \eta \varepsilon_k, ~~ \varepsilon_k \sim \mathcal{N}(0, M)\\
    \implies &\theta_{k+1} = \theta_k - \eta \left( g + G (\theta_k - \theta^*) - \frac{\eta}{2} G [\theta_k - \theta^*, \theta_k - \theta^*] \right) + \eta \varepsilon_k + \mathcal{O}(\eta ||\theta_k - \theta^*||^3).
\end{align}
Denoting $\nu_k := \theta_k - \theta^*$, using the fact that $||g(\theta^*)|| \leq \mathscr{C}$, we find that $\nu_k$ can be written as:
\begin{align}
\label{eqn:nu_k_appe}
    \nu_{k+1} &= (I - \eta G) \nu_k + \varepsilon_k + \frac{\eta}{2} G [\nu_k, \nu_k] + \mathcal{O}(\eta ||\nu_k||^3 + \eta\mathscr{C})
\end{align}
Since the OU process $\zeta_k$ stays in the vicinity of the point $\theta^*$, and follows a similar recursion to the one above, our goal would be to design a regularizer so that Equation~\ref{eqn:nu_k_appe} closely follows the OU process. Thus, we would want to bound the difference between the variable $\nu_k$ and the variable $\zeta_k$, denoted as $r_k$ to be within a small neighborhood:
\begin{align*}
r_{k+1} = \nu_{k+1} - \zeta_{k+1} = (I - \eta G) \underbracket{(\nu_k - \zeta_k)}_{r_k} + \frac{1}{2} G [\nu_k, \nu_k] + \mathcal{O}(\eta ||\nu_k||^3 + \eta \mathscr{C}). 
\end{align*}
We can write down an expression for $r_k$ summing over all the terms:
\begin{equation}
\label{eqn:r_k}
    r_{k+1} = - \underbracket{\frac{\eta}{2} \sum_{j \leq k} (I - \eta G)^{k - j} \nabla G [\nu_k, \nu_k]}_{\text{term (a)}} + \underbracket{\sum_{j \leq k} (I - \eta G)^j \left[\mathcal{O}(\eta ||\nu_k||^3 + \eta \mathscr{C}) \right]}_{\text{term (b)}}.
\end{equation}
Term (a) in the above equation is the one that can induce a displacement in $r_k$ as $k$ increases and would be used to derive the regularizer, whereas term (b) primarily consists of terms that concentrate to $0$. We first analyze term (a) and then we will analyze the concentration terms later. 

To analyze term (a), note that the term $\nabla G [\nu_k, \nu_k]$, by Lemma~\ref{lemma:useful}, only depends on $\nu_k$ via the covariance matrix $\nu_k \nu_k^\top$. So we will partition this term into two terms: \textbf{(i)} a term that utilizes the asymptotic covariance matrix of the OU process and \textbf{(ii)} errors due to a finite $k$ and stochasticity that will concentrate.
\begin{align}
    2 \times \text{(a)}~ &= \eta  \sum_{j \leq k} (I - \eta G)^{k - j} \nabla G [\nu_k, \nu_k]\\
    \label{eqn:random1}
    &= \sum_{j \leq k} (I - \eta G)^{k - j} \nabla G [\zeta^*, \zeta^*] + \sum_{j \leq k} (I - \eta G)^{k - j} \nabla G ([\nu_k, \nu_k] - [\zeta^*, \zeta^*]),
\end{align}
The first term is a ``bias'' term and doesn't concentrate to $0$, and will give rise to the regularizer. We can break this term using Lemma~\ref{lemma:useful} as:
\begin{align}
\label{eqn:b_9}
   \!\!\!\!\!\!\!\!\!\!\!\!\nabla G [\zeta^*, \zeta^*] =& 2 \sum_i \nabla^2 Q_i \Sigma^*_M (\nabla Q_i - \gamma \nabla Q'_i) + \sum_i \mathrm{tr}\left[(\nabla^2 Q_i - \gamma \nabla^2 Q'_i) \Sigma^*_M \right] \nabla Q_i 
\end{align}
The regularizer $R_\mathrm{TD}(\theta)$ is the function such that:
\begin{align}
    \label{eqn:derive_regularizer}
    \nabla_\theta R_\mathrm{TD}(\theta) & = \sum_i \nabla^2 Q_i \Sigma^*_M (\nabla Q_i - \gamma \nabla Q'_i)\\
    \label{eqn:regularizer_fn_app}
    \implies R_\mathrm{TD}(\theta) &= \sum_i \nabla Q_i \Sigma^*_M \nabla Q_i^\top - \gamma \sum_i \mathrm{trace}\left(\Sigma^*_M \nabla Q_i [[\nabla Q'_i]]^\top\right),
\end{align}
where $[[\cdot]]$ denotes the stop gradient operator. If the point $\theta^*$ is a stationary point of the regularizer $R_\mathrm{TD}(\theta)$, then Equations~\ref{eqn:derive_regularizer} and \ref{eqn:regularizer_fn_app} imply that the first term of Equation~\ref{eqn:b_9} must be 0. Therefore in this case to show that $\theta^*$ is attractive, we need to show that the other terms in Equations~\ref{eqn:b_9}, \ref{eqn:random1} and term (b) in Equation~\ref{eqn:r_k} concentrate around $0$ and are bounded in magnitude. The remaining part of the proof shown in Appendix~\ref{app:concentrating} provides these details, but we first summarize the main takeaways in the proof to conclude the argument.

\subsection{Summary of the Argument}
We will show how to concentrate terms in Equation~\ref{eqn:derive_regularizer} besides the regularizer largely following the techniques from prior work, but we first summarize the entire proof. The overall update to the vector $r_k$ which measures the displacement between the parameter vector $\theta_k - \theta^*$ and the OU-process $\zeta_k$ can be written as follows, and it is governed by the derivative of the implicit regularizer (modulo error terms):
\begin{equation}
\label{Eqn:rk_final}
    r_{k+1} = - \frac{\eta}{2} \sum_{j \leq k} (I - \eta G)^{k - j} \nabla_\theta R_\mathrm{TD}(\theta^*) + \mathcal{O}\left(\sqrt{\eta t} \cdot \mathrm{poly}(\mathscr{C}, \mathscr{L}_2, \mathscr{L}_3, \omega, C_0)\right).   
\end{equation}
An important detail to note here is that since the regularizer consists of $\Sigma^*_M$ and the size of $\Sigma^*_M$ (i.e, its eigenvalues), as shown in Lemma~\ref{lemma:bounded_noise} depends on one factor of $\eta$. So, effectively the first term in Equation~\ref{Eqn:rk_final} does depend on two factors of $\eta$. Using Equation~\ref{Eqn:rk_final}, we can write the deviation between $\theta^*$ and $\theta_k$ as:
\begin{align}
\nu_{k+1} &= \zeta_{k+1} - \frac{\eta}{2} \sum_{j \leq k} (I - \eta G)^{k - j} \nabla_\theta R_\mathrm{TD}(\theta^*) + \mathcal{O}\left(\sqrt{\eta t} \cdot \mathrm{poly}(\mathscr{C}, \mathscr{L}_2, \mathscr{L}_3, \omega, C_0)\right).
\end{align}
The OU process $\zeta_k$ converges to $\theta^*$ in the subspace spanned by $G$, since the condition $\rho(I - \eta G) < 1$ is active in this subspace (if the condition that $\rho(I - \eta G) < 1$ in the subspace spanned by $G$ is not true, then as \citet{ghosh2020representations} show, TD can diverge). Now, given $G$ satisfies this spectral radius condition, $\zeta_k$ would converge to $\theta^*$ within a timescale of $\mathcal{O}\left(\frac{1}{\eta} \right)$ within this subspace, which as \citet{blanc2020implicit} put it is the strength of the ``mean-reversion'' term. On the remaining directions (note that $d \gg n$), the dynamics is guided by the regularizer, although with a smaller weight of $\eta^2$.

\subsection{Additional Proof Details: Concentrating Other Terms}
\label{app:concentrating}
We first concentrate the terms in Equation~\ref{eqn:b_9}. The cumulative effect of the second term in Equation~\ref{eqn:b_9} is given by:
\begin{align}
    &\eta \sum_{j \leq k} (I - \eta G)^{j-k} \nabla Q_i \mathrm{tr}\left[ (\nabla^2 Q_i - \gamma \nabla^2 Q'_i) \Sigma^*_M \right]\\ 
    &\leq \eta \sum_{j \leq k} (I - \eta G)^{j-k} \nabla Q_i \cdot \mathcal{O}\left( \mathscr{L}_2 (1+ \gamma) \sigma \right) \leq \mathcal{O}\left( \eta \sqrt{\frac{k}{\eta}} \omega_0 C_0  \mathscr{L}_2 (1 + \gamma) \sigma \right), 
\end{align}
which follows from the fact that $\nabla^2 Q_i$ is $\mathscr{L}_2$-Lipschitz, and using Lemma~\ref{lemma:contraction} for contracting the remaining terms.

Next, we turn to concentrating the second term in Equation~\ref{eqn:random1}. This term corresponds to the contribution of difference between the empirical covariance matrix  $\nu_k \nu_k^\top$ and the asymptotic covariance matrix $\zeta^* \zeta^{* \top}$. We expand this term below using the form of $G$ from Lemma~\ref{lemma:useful}, and bound it one by one.
\begin{align}
    &\sum_{j \leq k} (I - \eta G)^{k - j} \nabla G ([\nu_k, \nu_k] - [\zeta^*, \zeta^*])\\
    & = \sum_{j \leq k} \sum_i (I - \eta G)^{k-j} \nabla^2 Q_i \left(\nu_k \nu_k - \zeta^* \zeta^{* \top}\right) (\nabla Q_i - \gamma \nabla Q'_i) + \mathcal{O}\left(\sqrt{\eta k} \omega_0 C_0 \mathscr{L}_2 (1 + \gamma) \sigma \right)
    \label{eqn:remaining}
\end{align}
Now, we note that the term $\Delta_{k+1} := \nu_{k+1} \nu_{k+1}^\top - \zeta^* \zeta^{* \top}$ can itself be written as a recursion:
\begin{align}
    \Delta_{k+1} &= (I - \eta G) (\Delta_k) (I - \eta G)^\top + \underbracket{(I - \eta G) \zeta_k \varepsilon^\top  + \varepsilon \zeta_k^\top (I - \eta G)^\top}_{A_k} + \underbracket{\varepsilon \varepsilon^\top - \eta M}_{B_k}     
\end{align}
Expanding the term $\Delta_{k+1}$ in terms of a summation over $k$, and plugging it into the expression from Equation~\ref{eqn:remaining} we get
\begin{align}
\label{eqn:remaining2}
    \sum_{i} \sum_{j \leq k} (I - \eta G)^{k-j} & \nabla^2 Q_i (I - \eta G)^j \Delta_0 (I - \eta G^\top)^j \\ 
    + \sum_i &~ \sum_{j \leq k} \sum_{p \leq j} (I - \eta G)^{k-j} \nabla^2 Q_i (I - \eta G)^{j-p-1} (A_p + B_p) (I - \eta G^\top)^{j-p-1} \nonumber
\end{align}
Now by noting that if $G$ and $\nabla Q_i$ are $(\omega, C_0)$-aligned, then so are $G^\top$ and $\nabla Q_i$, we can finish the proof by repeating the calculations used by \citet{damian2021label} (Appendix B, Equations 67-73) to bound the terms in Equation~\ref{eqn:remaining2} by $\mathcal{O}(\sqrt{\eta k})$, but with an additional factor of $\omega^2 C_0^2$.  

\textbf{Term (b) in Equation~\ref{eqn:r_k}.} When $\mathscr{C}$ is small enough, we can bound the term (b) using $\mathcal{O}(\sqrt{\eta k})$, similar to \citet{damian2021label}.

\section{Proof of Proposition~\ref{thm:co_adapted_features_are_bad}}
\label{app:new_thm}
In this section, we will prove Proposition~\ref{thm:co_adapted_features_are_bad}. First, we refer to Proposition 3.1 in \citet{ghosh2020representations}, which shows that TD-learning is stable and converges if and only if the matrix $M_\phi = \Phi^\top (\Phi - \gamma \Phi')$ has eigenvalues with all positive real entries. Now note that if, 
\begin{align}
    \sum_{\bs, \ba} \phi(\bs, \ba)^\top \phi(\bs, \ba) &\leq \gamma \sum_{\bs, \ba, \bs'} \phi(\bs', \ba')^\top \phi(\bs, \ba)\\
    \implies \mathrm{trace} \left(\Phi^\top \Phi\right) &\leq \gamma \mathrm{trace}\left(\Phi^\top \Phi'\right)\\
    \implies \mathrm{trace}\left[\Phi^\top \left(\Phi - \gamma \Phi'\right) \right] \leq 0. 
\end{align}
Since the trace of a real matrix is the sum of real components of eigenvalues, if for a given matrix $M$, $\mathrm{trace}(M) \leq 0$, then there exists atleast one eigenvalue $\lambda_i$ such that $\mathrm{Re}(\lambda_i) \leq 0$. If $\lambda_i < 0$, then the learning dynamics of TD would diverge, while if $\lambda_i = 0$ for all $i$, then learning will not contract towards the TD fixed point. This concludes the proof of this result.

\section{Experimental Details of Applying \methodname}
\label{app:additional_background}

In this section, we discuss the practical experimental details and hyperparameters in applying our method, \methodname\ to various offline RL methods. We first discuss an overview of the offline RL methods we considered in this paper, and then provide a discussion of hyperparameters for \methodname.

\subsection{Background on Various Offline RL Algorithms}
\label{app:details_algo}

In this paper, we consider four base offline RL algorithms that we apply DR3 on. These methods are detailed below: 

\textbf{REM}. Random ensemble mixture~\citep{agarwal2019optimistic} is an uncertainty-based offline RL algorithm uses multiple parameterized Q-functions to estimate the Q-values. During the Bellman backup, REM computes a random convex combination of the target Q-values and then trains the Q-function to match this randomized target estimate. The randomized target value estimate provides a robust estimate of target values, and delays unlearning and performance degradation that we typically see with standard DQN-style algorithms in the offline setting. {For instantiating REM}, we follow the instantiation provided by the authors and instantiate a multi-headed Q-function with 200 heads, each of which serves as an estimate of the target value. These multiple heads branch off the last-but-one layer features of the base Q-network.
The objective for REM is given by:
\begin{equation}
\!\!\!\!\!\!\!\!\!\min_{\theta} \expected_{\bs, \ba, r, \bs' \sim \mathcal{D}} \left[ \expected_{\alpha_{1}, \dots, \alpha_{K} \sim {\Delta}} \left[ \ld \left(
\sum_{k} \alpha_{k} \Qt^{k}(\bs, \ba) - r - \gamma\max_{\ba'} \sum_{k} \alpha_{k}\Qtp^{k}(\bs', \ba') \right) \right]\right] \label{eq:sqn}
\end{equation} 
where $l_\lambda$ denotes the Huber loss while $P_\Delta$ denotes the probability distribution over the standard K-1 simplex.

\textbf{CQL}. Conservative Q-learning~\citep{kumar2020conservative} is an offline RL algorithm that learns a conservative value function such that the estimated performance of the policy under this learned value function lower-bounds its true value. CQL modifies the Q-function training to incorporate a term that minimizes the overestimated Q-values in expectation, while maximizing the Q-values observed in the dataset, in addition to standard TD error. This CQL regularizer is typically multiplied by a coefficient $\alpha$, and we pick $\alpha=0.1$ for all our Atari experiments following \citet{kumar2021implicit} and $\alpha=5.0$ for all our kitchen and antmaze D4RL experiments. Using $\overline{y}_k(\bs, \ba)$ to denote the target values computed via the Bellman backup (we use actor-critic backup for D4RL experiments and the $\max_{\ba'}$ backup for standard Q-learning in our Atari experiments following \citet{kumar2020conservative}), the objective for training CQL is given by: 
\begin{equation*}
    \!\!\small{\min_{Q} \alpha \left(\E_{\bs \sim \mathcal{D}}\left[\log \sum_{\ba} \exp(Q(\bs, \ba))\right] - \E_{\bs, \ba \sim \mathcal{D}}\left[Q(\bs, \ba)\right] \right)\! +\! \frac{1}{2}\! \E_{\bs, \ba, \bs' \sim \mathcal{D}}\left[\left(Q(\bs, \ba) - \overline{y}_k(\bs, \ba) \right)^2 \right]}.
\end{equation*}
The deep Q-network utilized by us is a ReLU network with four hidden layers of size $(256, 256, 256, 256)$ for the D4RL experiments, while for Atari we utilize the standard convolutional neural network from \citet{agarwal2019optimistic,kumar2021implicit} with 3 convolutional layers borrowed from the nature DQN network and then a hidden feedforward layer of size $512$.

\textbf{BRAC}. Behavior-regularized actor-critic~\citep{wu2019behavior} is a policy-constraint based actor-critic offline RL algorithm which regularizes the policy to stay close to the behavior policy $\pi_\beta$ to prevent the selection of ``out-of-distribution'' actions. In addition, BRAC subtracts this divergence estimate from the target Q-values when performing the backup, to specifically penalize target values that come from out-of-distribution action inputs at the next state ($\bs', \ba')$. 
\begin{align}
    \text{Q-function:} ~~&~~ \min_\theta~~ \E_{\bs, \ba \sim \mathcal{D}}\left[\left(r(\bs, \ba) + \gamma \E_{\ba' \sim \pi_\phi(\cdot|\bs')}[\bar{Q}_\theta(\bs', \ba') + {\beta \log \hat{\pi}_\beta(\ba'|\bs')}] - Q_\theta(\bs, \ba) \right)^2 \right]. \nonumber\\
    \text{Policy:} ~~&~~ \max_\phi~~ \E_{\bs \sim \mathcal{D}, \ba \sim \pi_\phi(\cdot|\bs)}\left[ Q_\theta(\bs, \ba) + {\beta \log \hat{\pi}_\beta(\ba|\bs)} - {\alpha \log \pi_\phi(\ba|\bs)} \right].  
    \label{eqn:brac_eqns}
\end{align}

\textbf{COG}. COG~\citep{singh2020cog} is an algorithmic framework for utilizing large, unlabeled datasets of diverse behavior to learn generalizable policies via offline RL. Similar to real-world scenarios where large unlabeled datasets are available alongside limited task-specific data, the agent is provided with two types of datasets. The task-specific dataset consists of behavior relevant for the task, but the prior dataset can consist of a number of random or scripted behaviors being executed in the same environment/setting. The goal in this task is to actually stitch together relevant and overlapping parts of different trajectories to obtain a good policy that can work from a new initial condition that was not seen in a trajectory that actually achieved the reward. COG utilizes CQL as the base offline RL algorithm, and following \citet{singh2020cog}, we fix the hyperparameter $\alpha=1.0$ in the CQL part for both base COG and COG + DR3. All other hyperparameters including network sizes, etc are kept fixed as the prior work~\citet{singh2020cog} as well.    

\vspace{-0.2cm}
\subsection{Tasks and Environments Used}
\label{app:tasks}
\vspace{-0.2cm}

{\bf Atari 2600 games used}. For all our experiments, we used the same set of 17 games utilized by \citet{kumar2021implicit} to test rank collapse. In the case of Atari, we used the 5 standard games (\textsc{Asterix}, \textsc{Qbert}, \textsc{Pong}, \textsc{Seaquest}, \textsc{Breakout}) for tuning the hyperparameters, a strategy followed by several prior works~\citep{gulcehre2020rl,agarwal2019optimistic,kumar2021implicit}. The 17 games we test on are: \ \textsc{Asterix}, \textsc{Qbert}, \textsc{Pong}, \textsc{Seaquest}, \textsc{Breakout}, \textsc{Double Dunk}, \textsc{James Bond}, \textsc{Ms. Pacman}, \textsc{Space Invaders}, \textsc{Zaxxon}, \textsc{Wizard of Wor}, \textsc{Yars' Revenge}, \textsc{Enduro}, \textsc{Road Runner}, \textsc{BeamRider}, \textsc{Demon Attack}, \textsc{Ice Hockey}.

Following \citet{agarwal2021precipice}, we report interquartile mean~(IQM) normalized scores across all runs as mean scores can be dominated by performance on a few outlier tasks while median is independent of performance on all except 1 task -- zero score on half of the tasks would not affect the median. IQM which corresponds to 25\% trimmed mean and considers the performance on middle 50\% of the runs. IQM interpolates between mean and median, which correspond to 0\% and almost 50\% trimmed means across runs.

{\bf D4RL tasks used.} For our experiments on D4RL, we utilize the Gym-MuJoCo-v0 environments for evaluating BRAC, since BRAC performed somewhat reasonably on these domains~\citep{fu2020d4rl}, whereas we use the harder AntMaze and Franka Kitchen domains for evaluating CQL, since these domains are challenging for CQL~\citep{kumar2020conservative}.

{\bf Robotic manipulation tasks from COG~\citep{singh2020cog}.} These tasks consist of a 6-DoF WidowX robot, placed in front of two drawers and a larger variety of objects. The robot can open or close a drawer, grasp objects from inside the drawer or on the table, and place them anywhere in the scene. The task here consists of taking an object out of a drawer. A reward of +1 is obtained when the object has been taken out, and zero otherwise. There are two variants of this domain: \textbf{(1)} in the first variant, the drawer starts out closed, the top drawer starts out open (which blocks the handle for the lower drawer), and an object starts out in front of the closed drawer,
which must be moved out of the way before opening, and \textbf{(2)} in the second variant, the drawer is blocked by an object, and this object must be removed before the drawer can be opened and the target object can be grasped from the drawer. The prior data for this environment is collected from a collection of scripted randomized policies. These policies are capable of opening and closing both drawers with 40-50\% success rates, can grasp objects in the scene with about a 70\% success rate, and place those objects at random places in the scene (with a slight bias for putting them in the tray). 

\begin{table*}[t]
\small
\caption{\textbf{Hyperparameters used by the offline RL Atari agents in our experiments.} Following \citet{agarwal2019optimistic}, the Atari environments used by us are stochastic due to sticky actions, \ie\ there is a 25\% chance at every time step that the environment will execute the agents previous action again, instead of the new action commanded. We report offline training results with same hyperparameters over 5 random seeds of the offline dataset, game simulator and network initialization.} 
\centering
\begin{tabular}{lrr}
\toprule
Hyperparameter & \multicolumn{2}{r}{Setting (for both variations)} \\
\midrule
Sticky actions && Yes        \\
Sticky action probability && 0.25\\
Grey-scaling && True \\
Observation down-sampling && (84, 84) \\
Frames stacked && 4 \\
Frame skip~(Action repetitions) && 4 \\
Reward clipping && [-1, 1] \\
Terminal condition && Game Over \\
Max frames per episode && 108K \\
Discount factor && 0.99 \\
Mini-batch size && 32 \\
Target network update period & \multicolumn{2}{r}{every 2000 updates} \\
Training environment steps per iteration && 250K \\
Update period every && 4 environment steps \\
Evaluation $\epsilon$ && 0.001 \\
Evaluation steps per iteration && 125K \\
$Q$-network: channels && 32, 64, 64 \\
$Q$-network: filter size && $8\times8$, $4\times4$, $3\times3$\\
$Q$-network: stride && 4, 2, 1\\
$Q$-network: hidden units && 512 \\
\bottomrule
\end{tabular}
\label{table:hyperparams_atari}
\end{table*}

\vspace{-0.2cm}
\subsection{The DR3 Regularizer Coefficient}
\label{app:tuning_dr3}
\vspace{-0.2cm}
We utilize identical hyperparameters of the base offline RL algorithms when DR3 is used, where the base hyper-parameters correspond to the ones provided in the corresponding publications. DR3 requires us to tune the additional coefficient $c_0$, that weights the DR3 explicit regularizer term. In order to find this value on our domains, we followed the tuning strategy typically followed on Atari, where we evaluated four different values of $c_0 \in \{0.001, 0.01, 0.03, 0.3\}$ on 5 games (\textsc{Asterix}, \textsc{Seaquest}, \textsc{Breakout}, \textsc{Pong} and \textsc{SpaceInvaders}) on the 5\% replay dataset settings, picked $c_0$ that wprked best on just these domains, and used it to report performance on all 17 games, across all dataset settings (1\% replay and 10\% initial replay) in Section~\ref{sec:experiments}. This protocol is standard in Atari and has been used previously in \citet{agarwal2019optimistic,gulcehre2020rl,kumar2021implicit} in the context of offline RL. The value of the coefficient found using this strategy was $c_0 = 0.001$ for REM and $c_0 = 0.03$ for CQL.

For CQL on D4RL, we ran DR3 with multiple values of $c_0 \in \{0.0001, 0.001, 0.01, 0.5, 1.0, 10.0\}$, and picked the smallest value of $c_0$ which did not lead to eventually divergent (either negatively diverging or positively diverging) Q-values, in average. For the antmaze domains, this corresponded to $c_0=0.001$ and for the FrankaKitchen domains, this corresponded to $c_0=1.0$.

\section{Complete Results on All Domains}
\label{app:full_results}

In this section, we present the results obtained by running DR3 on the Atari and D4RL domains which were not discussed in the main paper due to lack of space. We first understand the effect of applying DR3 on BRAC~\citep{wu2019behavior}, which was missing from the main paper, and then present the per-game Atari results. 

\begin{table}[H]
\small
\centering
 \caption{\small{Normalized interquartile mean~ (IQM) final performance (last iteration return) of CQL, CQL + \methodname, REM and REM + \methodname\ after 6.5M gradient steps for the 1\% setting and 12.5M gradient steps for the 5\%, 10\% settings. Intervals in brackets show 95\% CIs computed using stratified percentile bootstrap~\citep{agarwal2021precipice}}}.
 \label{tab:cql_res_median}
    \vspace{0.1cm}
\begin{tabular}{ccccc}
\toprule
Data & CQL &  CQL + \methodname & REM & REM + \methodname \\
\midrule
1\% & 44.4~\ss{(31.0, 54.3)} & \textbf{61.6}~\ss{(39.1, 71.5)} & 0.0~\ss{(-0.7, 0.1)} & \textbf{13.1}~\ss{(9.9, 18.3)} \\
\midrule
5\%  & 89.6~\ss{(67.9, 98.1)} & \textbf{100.2}~\ss{(90.6, 102.7)} & 3.9~\ss{(3.1, 7.6)} & \textbf{74.8}~\ss{(59.6, 84.4)} \\
\midrule
10\%  & 57.4~\ss{(53.2, 62.4)} &  \textbf{67.0}~\ss{(62.8, 73.0)} & 24.9~\ss{(15.0, 29.1)} &  \textbf{72.4}~\ss{(65.7, 81.7)} \\
\bottomrule
\end{tabular}
\end{table}

\begin{table}[H]
\fontsize{9}{9}
\centering
\caption{\textbf{Performance of \methodname\ when applied in conjunction with BRAC~\citep{wu2019behavior}.} Note that DR3 attains a larger final performance (at the end of 2M steps of training) as well as a higher average performance (i.e. stability score) across all iterations of training. %
}
\label{tab:brac}
\vspace{0.2cm}
\begin{tabular}{ccccc}
\toprule
\multirow{2}{*}{Task} & \multicolumn{2}{c}{Average Performance across Iterations}   & \multicolumn{2}{c}{Final Performance} \\
& BRAC & BRAC + \methodname & BRAC & BRAC + \methodname \\
\midrule
halfcheetah-expert-v0 & 1.7 $\pm$ 1.9 & 49.9 $\pm$ 16.7  & 2.1 $\pm$ 3.3 & 71.5 $\pm$ 24.9 \\
halfcheetah-medium-v0 & 43.5 $\pm$ 0.2 & 43.2 $\pm$ 0.2  & 45.1 $\pm$ 0.8 & 44.9 $\pm$ 0.6 \\
halfcheetah-medium-expert-v0 & 17.0 $\pm$ 5.4 & 6.0 $\pm$ 5.5  & 24.8 $\pm$ 9.3 & 6.7 $\pm$ 7.3 \\
halfcheetah-random-v0 & 24.4 $\pm$ 0.4 & 18.4 $\pm$ 0.3  & 24.9 $\pm$ 0.8 & 18.2 $\pm$ 1.0 \\
halfcheetah-medium-replay-v0 & 44.9 $\pm$ 0.3 & 44.1 $\pm$ 0.4  & 45.0 $\pm$ 1.4 & 44.9 $\pm$ 0.5 \\
hopper-expert-v0 & 15.7 $\pm$ 1.5 & 21.8 $\pm$ 3.2  & 16.6 $\pm$ 6.0 & 20.8 $\pm$ 5.3 \\
hopper-medium-v0 & 32.8 $\pm$ 1.4 & 46.3 $\pm$ 7.1  & 36.2 $\pm$ 1.7 & 58.3 $\pm$ 13.7 \\
hopper-medium-expert-v0 & 40.2 $\pm$ 5.7 & 37.0 $\pm$ 2.9  & 31.7 $\pm$ 11.8 & 21.8 $\pm$ 4.9 \\
hopper-random-v0 & 11.7 $\pm$ 0.0 & 11.2 $\pm$ 0.0  & 12.2 $\pm$ 0.0 & 11.1 $\pm$ 0.0 \\
hopper-medium-replay-v0 & 31.6 $\pm$ 0.3 & 30.3 $\pm$ 0.8  & 31.3 $\pm$ 1.2 & 36.1 $\pm$ 5.7 \\
walker2d-expert-v0 & 25.5 $\pm$ 14.4 & 33.6 $\pm$ 11.8  & 54.0 $\pm$ 31.0 & 60.6 $\pm$ 20.2 \\
walker2d-medium-v0 & 81.3 $\pm$ 0.3 & 80.8 $\pm$ 0.2  & 83.8 $\pm$ 0.2 & 83.4 $\pm$ 0.3 \\
walker2d-medium-expert-v0 & 5.8 $\pm$ 5.2 & 6.4 $\pm$ 3.4  & 22.4 $\pm$ 22.0 & 39.5 $\pm$ 23.3 \\
walker2d-random-v0 & 1.4 $\pm$ 0.8 & 1.7 $\pm$ 0.9  & 0.0 $\pm$ 0.1 & 2.9 $\pm$ 2.1 \\
walker2d-medium-replay-v0 & 26.1 $\pm$ 6.4 & 47.4 $\pm$ 4.1  & 11.7 $\pm$ 7.0 & 38.7 $\pm$ 9.6 \\
\bottomrule
\end{tabular}
\end{table}

\begin{table*}[H]
\small
 \caption{\small{Normalized median final performance (last iteration return) and mediann average performance (our metric for stability) of CQL, CQL + \methodname, REM and REM + \methodname\ after 6.5M gradient steps for the 1\% setting and 12.5M gradient steps for the 5\%, 10\% settings. Intervals in brackets show 95\% CIs computed using stratified percentile bootstrap~\citep{agarwal2021precipice}}}.
 \label{tab:cql_res_median}
    \vspace{0.1cm}
\begin{tabular}{c|cccc|cccc}
\toprule
\multirow{2}{*}{\textbf{Data}} &  \multicolumn{4}{c|}{\textbf{Last iteration performance}} & \multicolumn{4}{c}{\textbf{Stability performance}} \\
& CQL & CQL+\methodname & REM & REM+\methodname & CQL & CQL+\methodname & REM & REM+\methodname \\
\midrule
1\% & 44.4~\ss{(31.0, 54.3)} & 61.6~\ss{(39.1, 71.5)} & 0.0~\ss{(-0.7, 0.1)} & \textbf{13.1}~\ss{(9.9, 18.3)} & 43.6~\ss{(36.4, 52.7)} & 56.3~\ss{(46.9, 70.3)} & 4.1~\ss{(2.9, 4.9)} & \textbf{18.1}~\ss{(11.3, 22.5)}\\
\midrule
5\%  & 89.6~\ss{(67.9, 98.1)} & 100.2~\ss{(90.6, 102.7)} & 3.9~\ss{(3.1, 7.6)} & \textbf{74.8}~\ss{(59.6, 84.4)} & 85.8~\ss{(77.3, 95.8)} & \textbf{107.6}~\ss{(105.4, 109.5)} & 28.7~\ss{(20.4, 30.0)} & \textbf{60.5}~\ss{(55.1, 65.5)} \\
\midrule
10\%  & 57.4~\ss{(53.2, 62.4)} &  \textbf{67.0}~\ss{(62.8, 73.0)} & 24.9~\ss{(15.0, 29.1)} &  \textbf{72.4}~\ss{(65.7, 81.7)} & 53.6~\ss{(51.9, 56.5)} & \textbf{71.5}~\ss{(66.5, 73.9)} & 49.4~\ss{(47.7, 54.1)} & \textbf{63.9}~\ss{(67.1, 73.9)} \\
\bottomrule
\vspace{-10pt}
\end{tabular}
\end{table*}

\begin{table}[h]
\centering
    \caption{\textbf{Mean evaluation returns per Atari game across 5 runs with standard deviations for 1\% dataset}. The coefficient for \methodname\ is 0.03 with a CQL coefficient of 1.0.
    The average performance is computed over 20 checkpoints spaced uniformly over training for 100 iterations where 1 iteration corresponds to 62,500 gradient updates.}
    \label{tab:cql_dqn_1}
    \vspace{0.2cm}
\begin{tabular}{ccccc}
\toprule
\multirow{2}{*}{Game} & \multicolumn{2}{c}{Final Performance}   & \multicolumn{2}{c}{Average Performance across Iterations} \\
& CQL & CQL + \methodname & CQL & CQL + \methodname \\
\midrule
Asterix       &      656.9 $\pm$ 91.0 &      821.4 $\pm$ 75.1 &      650.2 $\pm$ 65.3 &      814.1 $\pm$ 25.1 \\
Breakout      &        23.9 $\pm$ 3.8 &        32.0 $\pm$ 3.2 &        23.8 $\pm$ 0.5 &        32.8 $\pm$ 3.1 \\
Pong          &        16.7 $\pm$ 1.7 &        14.2 $\pm$ 3.3 &        15.7 $\pm$ 2.0 &        15.1 $\pm$ 2.3 \\
Seaquest      &      449.0 $\pm$ 11.0 &      446.6 $\pm$ 26.9 &      474.5 $\pm$ 30.3 &      456.1 $\pm$ 17.0 \\
Qbert         &   8033.8 $\pm$ 1513.2 &    9162.7 $\pm$ 993.6 &    7980.0 $\pm$ 379.9 &    9000.7 $\pm$ 225.2 \\
SpaceInvaders &     386.0 $\pm$ 123.2 &      351.9 $\pm$ 77.1 &      371.7 $\pm$ 47.5 &      440.6 $\pm$ 29.6 \\
Zaxxon        &     829.4 $\pm$ 813.3 &    1757.4 $\pm$ 879.4 &     834.6 $\pm$ 504.0 &    1634.0 $\pm$ 673.9 \\
YarsRevenge   &  11848.2 $\pm$ 2977.7 &  16011.3 $\pm$ 1409.0 &  15077.9 $\pm$ 1301.9 &   17741.6 $\pm$ 613.6 \\
RoadRunner    &  37000.7 $\pm$ 1148.5 &  24928.7 $\pm$ 7484.5 &   35899.9 $\pm$ 653.1 &  32063.3 $\pm$ 1011.4 \\
MsPacman      &    1869.8 $\pm$ 167.2 &    2245.7 $\pm$ 193.8 &     1991.9 $\pm$ 55.1 &     2224.1 $\pm$ 80.8 \\
BeamRider     &      780.3 $\pm$ 64.5 &      617.9 $\pm$ 25.1 &      782.0 $\pm$ 36.1 &      619.9 $\pm$ 20.9 \\
Jamesbond     &     558.5 $\pm$ 124.8 &     460.5 $\pm$ 102.0 &     524.6 $\pm$ 118.5 &      484.2 $\pm$ 89.4 \\
Enduro        &      198.4 $\pm$ 34.2 &      253.5 $\pm$ 14.2 &      259.8 $\pm$ 16.4 &      276.1 $\pm$ 16.9 \\
WizardOfWor   &     771.1 $\pm$ 358.2 &     904.6 $\pm$ 343.7 &     833.7 $\pm$ 168.4 &     935.2 $\pm$ 174.4 \\
IceHockey     &        -8.7 $\pm$ 1.3 &        -7.8 $\pm$ 0.9 &        -8.8 $\pm$ 0.9 &        -7.9 $\pm$ 0.7 \\
DoubleDunk    &       -15.1 $\pm$ 1.9 &       -14.0 $\pm$ 2.8 &       -15.3 $\pm$ 0.9 &       -14.5 $\pm$ 1.0 \\
DemonAttack   &    1970.2 $\pm$ 161.3 &      386.2 $\pm$ 75.3 &    1338.8 $\pm$ 298.4 &      414.0 $\pm$ 46.0 \\
\bottomrule
\end{tabular}
\end{table}

\begin{table}[h]
\centering
    \caption{\textbf{Mean evaluation returns per Atari game across 5 runs with standard deviations for 5\% dataset.} The coefficient for \methodname\ is 0.03 with a CQL coefficient of 0.1.
    The average performance is computed over 20 checkpoints spaced uniformly over training for 200 iterations where 1 iteration corresponds to 62,500 gradient updates.}
    \label{tab:cql_dqn_5}
    \vspace{0.2cm}
\begin{tabular}{ccccc}
\toprule
\multirow{2}{*}{Game} & \multicolumn{2}{c}{Final Performance}   & \multicolumn{2}{c}{Average Performance across Iterations} \\
& CQL & CQL + \methodname & CQL & CQL + \methodname \\
\midrule
Asterix       &    1798.2 $\pm$ 168.6 &    3318.5 $\pm$ 301.7 &     1812.7 $\pm$ 64.0 &    3790.5 $\pm$ 218.0 \\
Breakout      &       94.1 $\pm$ 44.4 &      166.0 $\pm$ 23.1 &      105.1 $\pm$ 10.4 &       196.5 $\pm$ 4.4 \\
Pong          &        13.1 $\pm$ 4.2 &        17.9 $\pm$ 1.1 &        15.2 $\pm$ 1.3 &        17.4 $\pm$ 1.2 \\
Seaquest      &    1815.9 $\pm$ 722.8 &    2030.7 $\pm$ 822.8 &    1382.3 $\pm$ 258.1 &    3722.3 $\pm$ 969.5 \\
Qbert         &  10595.7 $\pm$ 1648.5 &   9605.6 $\pm$ 1593.5 &    9552.0 $\pm$ 925.6 &   10830.7 $\pm$ 783.1 \\
SpaceInvaders &      758.9 $\pm$ 56.9 &    1214.6 $\pm$ 281.8 &      662.0 $\pm$ 58.1 &     1323.7 $\pm$ 94.4 \\
Zaxxon        &   1501.0 $\pm$ 1165.7 &    4250.1 $\pm$ 626.2 &    1508.8 $\pm$ 437.5 &    3556.5 $\pm$ 531.3 \\
YarsRevenge   &  24036.7 $\pm$ 3370.6 &  17124.7 $\pm$ 2125.6 &  22733.1 $\pm$ 1175.3 &  18339.8 $\pm$ 1299.7 \\
RoadRunner    &  40728.4 $\pm$ 3318.9 &  38432.6 $\pm$ 1539.7 &   42338.4 $\pm$ 471.4 &  41260.2 $\pm$ 1008.6 \\
MsPacman      &    2975.9 $\pm$ 522.1 &    2790.6 $\pm$ 353.1 &    2923.6 $\pm$ 251.3 &    3101.2 $\pm$ 381.6 \\
BeamRider     &    1897.6 $\pm$ 473.7 &      785.8 $\pm$ 43.5 &    2218.5 $\pm$ 242.4 &      775.9 $\pm$ 12.5 \\
Jamesbond     &      108.8 $\pm$ 49.1 &       96.8 $\pm$ 43.2 &        76.5 $\pm$ 4.6 &      106.1 $\pm$ 34.8 \\
Enduro        &     764.3 $\pm$ 168.7 &      938.5 $\pm$ 63.9 &      797.7 $\pm$ 47.8 &      923.2 $\pm$ 40.3 \\
WizardOfWor   &     943.2 $\pm$ 380.3 &     612.0 $\pm$ 343.3 &    1004.3 $\pm$ 314.7 &    1007.4 $\pm$ 313.2 \\
IceHockey     &       -17.3 $\pm$ 0.6 &       -15.0 $\pm$ 0.7 &       -16.6 $\pm$ 0.5 &       -12.0 $\pm$ 0.3 \\
DoubleDunk    &       -18.1 $\pm$ 1.5 &       -16.2 $\pm$ 1.7 &       -17.3 $\pm$ 1.0 &       -16.0 $\pm$ 1.6 \\
DemonAttack   &    4055.8 $\pm$ 499.7 &   8517.4 $\pm$ 1065.9 &    4062.4 $\pm$ 465.8 &    8396.7 $\pm$ 689.4 \\
\bottomrule
\end{tabular}
\end{table}

\begin{table}[h]
\centering
    \caption{\textbf{Mean returns per Atari game across 5 runs with standard deviations for initial 10\% dataset.} The coefficient for \methodname\ is 0.03 with a CQL coefficient of 0.1. The average performance is computed over 20 checkpoints spaced uniformly over training for 200 iterations.}
    \label{tab:cql_dqn_10}
    \vspace{0.2cm}
\begin{tabular}{ccccc}
\toprule
\multirow{2}{*}{Game} & \multicolumn{2}{c}{Final Performance}   & \multicolumn{2}{c}{Average Performance across Iterations} \\
& CQL & CQL + \methodname & CQL & CQL + \methodname \\
\midrule
Asterix       &    2803.9 $\pm$ 294.6 &    3906.2 $\pm$ 521.3 &    2903.2 $\pm$ 217.7 &    4692.2 $\pm$ 377.0 \\
Breakout      &        64.7 $\pm$ 7.3 &        70.8 $\pm$ 5.5 &        65.6 $\pm$ 5.7 &        75.4 $\pm$ 6.0 \\
Pong          &         5.3 $\pm$ 6.8 &         5.5 $\pm$ 6.2 &         7.3 $\pm$ 5.0 &         8.1 $\pm$ 5.2 \\
Seaquest      &     222.3 $\pm$ 219.5 &    1313.0 $\pm$ 220.0 &     704.9 $\pm$ 254.5 &    1327.9 $\pm$ 250.0 \\
Qbert         &    4803.2 $\pm$ 489.5 &   5395.3 $\pm$ 1003.6 &    4492.5 $\pm$ 240.8 &    4708.5 $\pm$ 463.0 \\
SpaceInvaders &     704.9 $\pm$ 121.5 &      938.1 $\pm$ 80.3 &      737.8 $\pm$ 23.8 &      902.1 $\pm$ 60.0 \\
Zaxxon        &     231.6 $\pm$ 450.9 &     836.8 $\pm$ 434.7 &     394.4 $\pm$ 385.1 &     725.7 $\pm$ 370.3 \\
YarsRevenge   &  13076.2 $\pm$ 2427.0 &  12413.9 $\pm$ 2869.7 &   12493.2 $\pm$ 543.6 &  12395.6 $\pm$ 1044.2 \\
RoadRunner    &  45063.5 $\pm$ 1749.7 &  45336.9 $\pm$ 1366.7 &  45522.7 $\pm$ 1068.1 &   44808.0 $\pm$ 911.7 \\
MsPacman      &    2459.5 $\pm$ 381.3 &    2427.5 $\pm$ 191.3 &    2528.1 $\pm$ 149.2 &    2488.3 $\pm$ 109.8 \\
BeamRider     &    4200.7 $\pm$ 470.2 &    3468.0 $\pm$ 238.0 &     4729.5 $\pm$ 94.8 &    3344.3 $\pm$ 289.0 \\
Jamesbond     &       84.6 $\pm$ 25.4 &       89.7 $\pm$ 15.6 &      108.7 $\pm$ 34.1 &      111.7 $\pm$ 10.9 \\
Enduro        &     946.7 $\pm$ 289.7 &     1160.2 $\pm$ 81.5 &     1013.9 $\pm$ 29.7 &     1136.2 $\pm$ 32.5 \\
WizardOfWor   &     520.4 $\pm$ 451.2 &     764.7 $\pm$ 250.0 &     499.8 $\pm$ 238.5 &     792.2 $\pm$ 101.3 \\
IceHockey     &       -18.1 $\pm$ 0.7 &       -16.0 $\pm$ 1.3 &       -17.6 $\pm$ 0.5 &       -15.2 $\pm$ 1.0 \\
DoubleDunk    &       -21.2 $\pm$ 1.1 &       -20.6 $\pm$ 1.0 &       -20.6 $\pm$ 0.3 &       -19.7 $\pm$ 0.5 \\
DemonAttack   &    4145.2 $\pm$ 400.6 &    7152.9 $\pm$ 723.2 &    4839.4 $\pm$ 586.7 &    7278.5 $\pm$ 701.3 \\
\bottomrule
\end{tabular}
\end{table}

\begin{table}[h]
\centering
    \caption{\textbf{Mean returns per Atari game across 5 runs with standard deviations for 1\% dataset.} The coefficient for \methodname\ is 0.001 while we use a multi-headed REM with 200 Q-heads~\citep{agarwal2019optimistic}. The average performance is computed over 20 checkpoints spaced uniformly over training for 100 iterations.}
    \label{tab:rem_dqn_1}
    \vspace{0.2cm}
\begin{tabular}{ccccc}
\toprule
\multirow{2}{*}{Game} & \multicolumn{2}{c}{Final Performance}   & \multicolumn{2}{c}{Average Performance across Iterations} \\
& REM & REM + \methodname & REM & REM + \methodname \\
\midrule
Asterix       &     240.4 $\pm$ 29.1 &     405.7 $\pm$ 46.5 &     304.4 $\pm$ 9.3 &      413.7 $\pm$ 39.6 \\
Breakout      &        0.7 $\pm$ 0.7 &       14.3 $\pm$ 2.8 &       6.3 $\pm$ 1.0 &        10.3 $\pm$ 1.1 \\
Pong          &      -14.2 $\pm$ 1.7 &       -7.7 $\pm$ 6.3 &     -14.1 $\pm$ 2.2 &       -15.3 $\pm$ 3.0 \\
Seaquest      &      81.0 $\pm$ 78.5 &    293.3 $\pm$ 191.5 &    246.6 $\pm$ 49.5 &     489.9 $\pm$ 128.6 \\
Qbert         &    239.6 $\pm$ 133.2 &    436.3 $\pm$ 111.5 &    255.5 $\pm$ 76.0 &     471.0 $\pm$ 116.5 \\
SpaceInvaders &     152.8 $\pm$ 27.5 &     206.6 $\pm$ 77.6 &     188.6 $\pm$ 5.8 &      262.7 $\pm$ 22.4 \\
Zaxxon        &    534.9 $\pm$ 731.3 &  2596.4 $\pm$ 1726.4 &  1807.9 $\pm$ 478.2 &     707.7 $\pm$ 577.4 \\
YarsRevenge   &  1452.6 $\pm$ 1631.0 &   5480.2 $\pm$ 962.3 &  4018.8 $\pm$ 987.8 &    7352.0 $\pm$ 574.7 \\
RoadRunner    &        0.0 $\pm$ 0.0 &  3872.9 $\pm$ 1616.4 &  1601.2 $\pm$ 637.9 &  14231.9 $\pm$ 2406.0 \\
MsPacman      &    698.8 $\pm$ 129.5 &   1275.1 $\pm$ 345.6 &    690.4 $\pm$ 69.7 &      860.4 $\pm$ 57.1 \\
BeamRider     &     703.0 $\pm$ 97.4 &     522.9 $\pm$ 42.2 &    745.5 $\pm$ 30.7 &      592.2 $\pm$ 27.7 \\
Jamesbond     &      41.0 $\pm$ 27.0 &     157.6 $\pm$ 65.0 &     53.3 $\pm$ 12.1 &       88.8 $\pm$ 27.2 \\
Enduro        &        0.5 $\pm$ 0.4 &     132.4 $\pm$ 16.1 &      21.7 $\pm$ 4.0 &      197.5 $\pm$ 19.1 \\
WizardOfWor   &    362.5 $\pm$ 321.8 &   1663.7 $\pm$ 417.8 &   552.1 $\pm$ 253.1 &    1460.8 $\pm$ 194.8 \\
IceHockey     &      -16.7 $\pm$ 0.9 &       -9.1 $\pm$ 5.1 &     -12.1 $\pm$ 0.8 &        -4.8 $\pm$ 1.8 \\
DoubleDunk    &      -21.8 $\pm$ 1.0 &      -17.6 $\pm$ 1.5 &     -20.4 $\pm$ 0.6 &       -17.1 $\pm$ 1.6 \\
DemonAttack   &     102.0 $\pm$ 17.3 &     162.0 $\pm$ 34.7 &    124.0 $\pm$ 10.7 &      145.6 $\pm$ 27.2 \\
\bottomrule
\end{tabular}
\end{table}

\begin{table}[h]
\centering
    \caption{\textbf{Mean returns per Atari game across 5 runs with standard deviations for the 5\% dataset.} The coefficient for \methodname\ is 0.001 while we use a multi-headed REM with 200 Q-heads~\citep{agarwal2019optimistic}. The average performance is computed over 20 checkpoints spaced uniformly over training for 200 iterations.}
    \label{tab:rem_dqn_5}
    \vspace{0.2cm}
\begin{tabular}{ccccc}
\toprule
\multirow{2}{*}{Game} & \multicolumn{2}{c}{Final Performance}   & \multicolumn{2}{c}{Average Performance across Iterations} \\
& REM & REM + \methodname & REM & REM + \methodname \\
\midrule
Asterix       &      876.8 $\pm$ 201.1 &    2317.0 $\pm$ 838.1 &      958.9 $\pm$ 50.9 &    1252.6 $\pm$ 395.1 \\
Breakout      &         15.2 $\pm$ 4.9 &        33.4 $\pm$ 4.0 &        16.3 $\pm$ 3.4 &        17.7 $\pm$ 2.4 \\
Pong          &          7.5 $\pm$ 5.2 &        -0.7 $\pm$ 9.9 &        -4.7 $\pm$ 3.0 &       -12.0 $\pm$ 3.2 \\
Seaquest      &     1276.0 $\pm$ 417.3 &   2753.6 $\pm$ 1119.7 &    1484.3 $\pm$ 367.7 &    1602.0 $\pm$ 603.7 \\
Qbert         &    2421.4 $\pm$ 1841.8 &   7417.0 $\pm$ 2106.7 &    1330.7 $\pm$ 431.0 &    4045.8 $\pm$ 898.9 \\
SpaceInvaders &       431.5 $\pm$ 23.3 &      443.5 $\pm$ 67.4 &      349.5 $\pm$ 22.6 &      362.1 $\pm$ 33.6 \\
Zaxxon        &     6738.2 $\pm$ 966.6 &   1609.7 $\pm$ 1814.1 &    3630.7 $\pm$ 751.4 &     346.1 $\pm$ 512.1 \\
YarsRevenge   &   14454.2 $\pm$ 1644.4 &  16930.4 $\pm$ 2625.8 &  14628.3 $\pm$ 1945.1 &  12936.5 $\pm$ 1286.0 \\
RoadRunner    &  15570.9 $\pm$ 12795.6 &  46601.6 $\pm$ 2617.2 &  22740.3 $\pm$ 1977.2 &  33554.1 $\pm$ 1880.4 \\
MsPacman      &     1272.2 $\pm$ 215.3 &    2303.1 $\pm$ 202.7 &    1147.7 $\pm$ 126.1 &    1438.7 $\pm$ 140.4 \\
BeamRider     &     1922.5 $\pm$ 589.1 &      674.8 $\pm$ 21.4 &      886.9 $\pm$ 82.1 &      698.3 $\pm$ 21.5 \\
Jamesbond     &       189.6 $\pm$ 77.0 &      130.5 $\pm$ 45.7 &       120.2 $\pm$ 9.3 &       88.6 $\pm$ 41.5 \\
Enduro        &       172.7 $\pm$ 55.9 &     583.9 $\pm$ 108.7 &      236.8 $\pm$ 11.3 &      457.7 $\pm$ 39.3 \\
WizardOfWor   &      838.4 $\pm$ 670.0 &    2661.6 $\pm$ 371.4 &     1281.3 $\pm$ 66.7 &    1863.7 $\pm$ 261.2 \\
IceHockey     &         -9.7 $\pm$ 4.2 &        -6.5 $\pm$ 3.1 &        -8.1 $\pm$ 0.7 &        -4.1 $\pm$ 1.5 \\
DoubleDunk    &        -18.4 $\pm$ 0.9 &       -17.6 $\pm$ 2.6 &       -19.6 $\pm$ 1.0 &       -17.8 $\pm$ 1.9 \\
DemonAttack   &      507.7 $\pm$ 120.1 &   5602.3 $\pm$ 1855.5 &     581.6 $\pm$ 207.0 &    1452.3 $\pm$ 765.0 \\
\bottomrule
\end{tabular}
\end{table}

\begin{table}[h]
\centering
    \caption{\textbf{Mean returns per Atari game across 5 runs with standard deviations for initial 10\% dataset.} The coefficient for \methodname\ is 0.001 while we use a multi-headed REM with 200 Q-heads~\citep{agarwal2019optimistic}. The average performance is computed over 20 checkpoints spaced uniformly over training for 200 iterations.}
    \label{tab:rem_dqn_10}
    \vspace{0.2cm}
\begin{tabular}{ccccc}
\toprule
\multirow{2}{*}{Game} & \multicolumn{2}{c}{Final Performance}   & \multicolumn{2}{c}{Average Performance across Iterations} \\
& REM & REM + \methodname & REM & REM + \methodname \\
\midrule
Asterix       &    2254.7 $\pm$ 403.6 &    5122.9 $\pm$ 328.9 &   2684.6 $\pm$ 184.4 &    3432.1 $\pm$ 257.5 \\
Breakout      &       81.2 $\pm$ 13.9 &       96.8 $\pm$ 21.2 &       63.5 $\pm$ 4.6 &        62.4 $\pm$ 6.1 \\
Pong          &         8.8 $\pm$ 3.1 &        7.6 $\pm$ 11.1 &        2.6 $\pm$ 2.1 &        -2.5 $\pm$ 5.6 \\
Seaquest      &    1540.2 $\pm$ 354.6 &     981.3 $\pm$ 605.9 &   1029.5 $\pm$ 260.6 &     836.2 $\pm$ 234.3 \\
Qbert         &    4330.7 $\pm$ 250.2 &    4126.2 $\pm$ 495.7 &   3478.0 $\pm$ 248.0 &    3494.7 $\pm$ 380.3 \\
SpaceInvaders &      895.2 $\pm$ 68.3 &      799.0 $\pm$ 28.3 &     699.7 $\pm$ 31.4 &      653.1 $\pm$ 21.5 \\
Zaxxon        &     950.7 $\pm$ 897.4 &         0.0 $\pm$ 0.0 &    490.2 $\pm$ 306.6 &         0.0 $\pm$ 0.0 \\
YarsRevenge   &  10913.1 $\pm$ 1519.1 &  11924.8 $\pm$ 2413.8 &  11508.5 $\pm$ 290.0 &  10977.7 $\pm$ 1026.9 \\
RoadRunner    &  45521.7 $\pm$ 2502.1 &  49129.4 $\pm$ 1887.9 &  37997.4 $\pm$ 638.6 &  41995.2 $\pm$ 1482.1 \\
MsPacman      &    2177.4 $\pm$ 393.0 &    2268.8 $\pm$ 455.0 &   1930.5 $\pm$ 141.7 &    2126.6 $\pm$ 147.6 \\
BeamRider     &    2921.7 $\pm$ 308.7 &    4154.9 $\pm$ 357.2 &   3727.5 $\pm$ 304.3 &     2871.0 $\pm$ 44.3 \\
Jamesbond     &      197.8 $\pm$ 73.8 &     149.3 $\pm$ 304.5 &    149.0 $\pm$ 120.5 &      83.3 $\pm$ 162.4 \\
Enduro        &     529.5 $\pm$ 200.7 &      832.5 $\pm$ 65.5 &     584.6 $\pm$ 85.3 &      801.8 $\pm$ 39.3 \\
WizardOfWor   &     606.5 $\pm$ 823.2 &     920.0 $\pm$ 497.0 &    838.3 $\pm$ 343.7 &     926.3 $\pm$ 318.5 \\
IceHockey     &        -4.3 $\pm$ 0.6 &        -5.9 $\pm$ 5.1 &       -7.0 $\pm$ 1.1 &        -5.4 $\pm$ 3.7 \\
DoubleDunk    &       -17.7 $\pm$ 3.9 &       -19.5 $\pm$ 2.5 &      -16.9 $\pm$ 0.5 &       -16.7 $\pm$ 1.0 \\
DemonAttack   &   6097.9 $\pm$ 1251.3 &   9674.7 $\pm$ 1600.6 &   4649.1 $\pm$ 514.6 &    5141.9 $\pm$ 361.4 \\
\bottomrule
\end{tabular}
\end{table}

\begin{table}[h]
\centering
    \caption{Average returns across 5 runs for the random agent and the average performance of the trajectories in the DQN~(Nature) dataset. For Atari normalized scores reported in the paper, the random agent is assigned a score of 0 while the average DQN replay is assigned a score of 100. Note that the random agent scores are also evaluated on Atari 2600 games with sticky actions.}
    \label{tab:random_dqn_scores}
    \vspace{0.2cm}
\begin{tabular}{ccc}
\toprule
Game &  Random &  Average DQN-Replay \\
\midrule
Asterix       &   279.1 &              3185.2 \\
Breakout      &     1.3 &               104.9 \\
Pong          &   -20.3 &                14.5 \\
Seaquest      &    81.8 &              1597.4 \\
Qbert         &   155.0 &              8249.7 \\
SpaceInvaders &   149.5 &              1529.8 \\
Zaxxon        &    10.6 &              1854.1 \\
YarsRevenge   &  3147.7 &             21015.0 \\
RoadRunner    &    15.5 &             38352.3 \\
MsPacman      &   248.0 &              3108.8 \\
BeamRider     &   362.0 &              4576.4 \\
Jamesbond     &    27.6 &               560.3 \\
Enduro        &     0.0 &               671.9 \\
WizardOfWor   &   686.6 &              1128.5 \\
IceHockey     &    -9.8 &                -8.5 \\
DoubleDunk    &   -18.4 &               -11.3 \\
DemonAttack   &   166.0 &              4407.5 \\
\bottomrule
\end{tabular}
\end{table}

\clearpage

\subsection{\rebuttal{Per-Game Learning Curves for Atari Games}}
\label{per_game_figures}

\begin{figure}[H]
    \centering
    \vspace{-0.3cm}
    \includegraphics[width=0.92\textwidth]{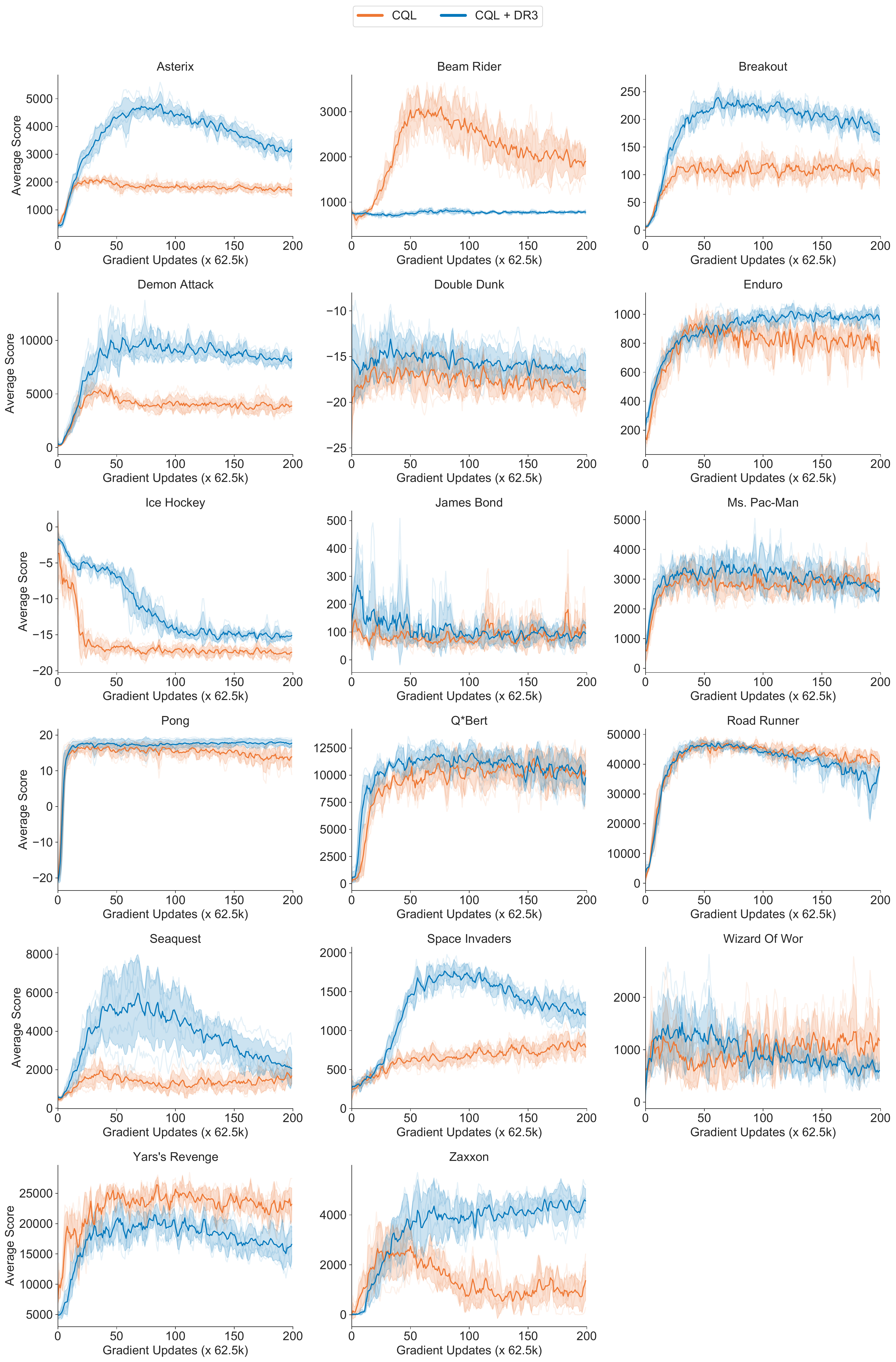}
    \vspace{-0.2cm}
    \caption{\footnotesize{\label{fig:cql_5_percent_all_games} \rebuttal{\textbf{Per-game learning curves of CQL and CQL + DR3 on the 5\% uniform replay dataset, for which the normalized average learning curve is shown in Figure~\ref{fig:atari_all_combined}.} Note that CQL + DR3 attains a higher performance than CQL for a majority of games, and rises up to a higher peak. }}}
\end{figure}

\begin{figure}[t]
    \centering
    \vspace{-5pt}
    \includegraphics[width=0.93\textwidth]{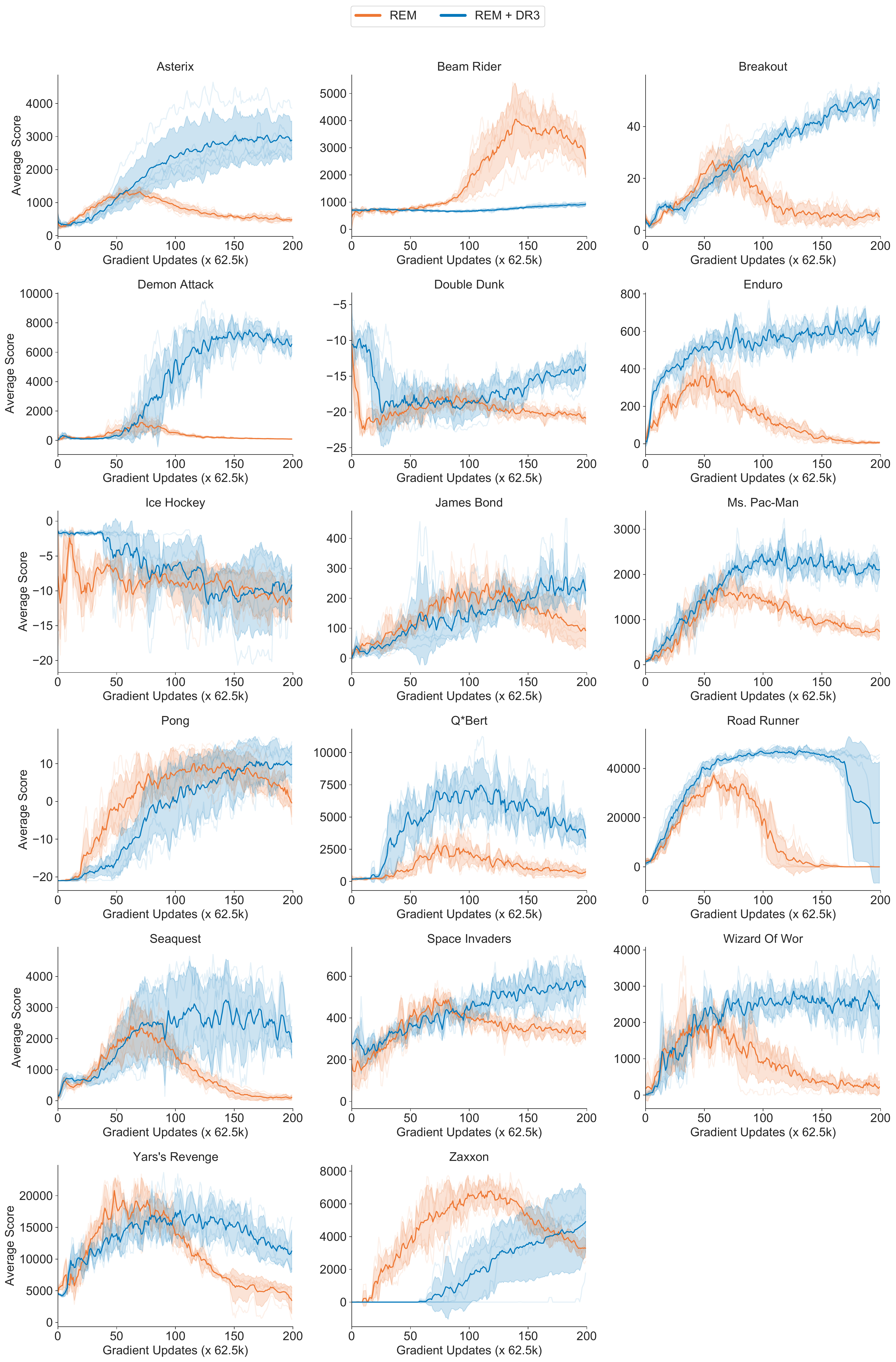}
    \vspace{-5pt}
    \caption{\footnotesize{\label{fig:rem_5_percent_all_games} \rebuttal{\textbf{Per-game learning curves of REM and REM + DR3 on the 5\% uniform replay dataset, for which the normalized average learning curve is shown in Figure~\ref{fig:atari_all_combined}.} Note that REM + DR3 attains a higher performance than REM for a majority of games. }}}
    \vspace{-0.5cm}
\end{figure}

\subsection{\rebuttal{Dot Product Similarities For CQL+DR3 and REM+DR3 on 17 Games}}
\begin{figure}[H]
    \centering
    \vspace{-0.3cm}
    \includegraphics[width=0.95\textwidth]{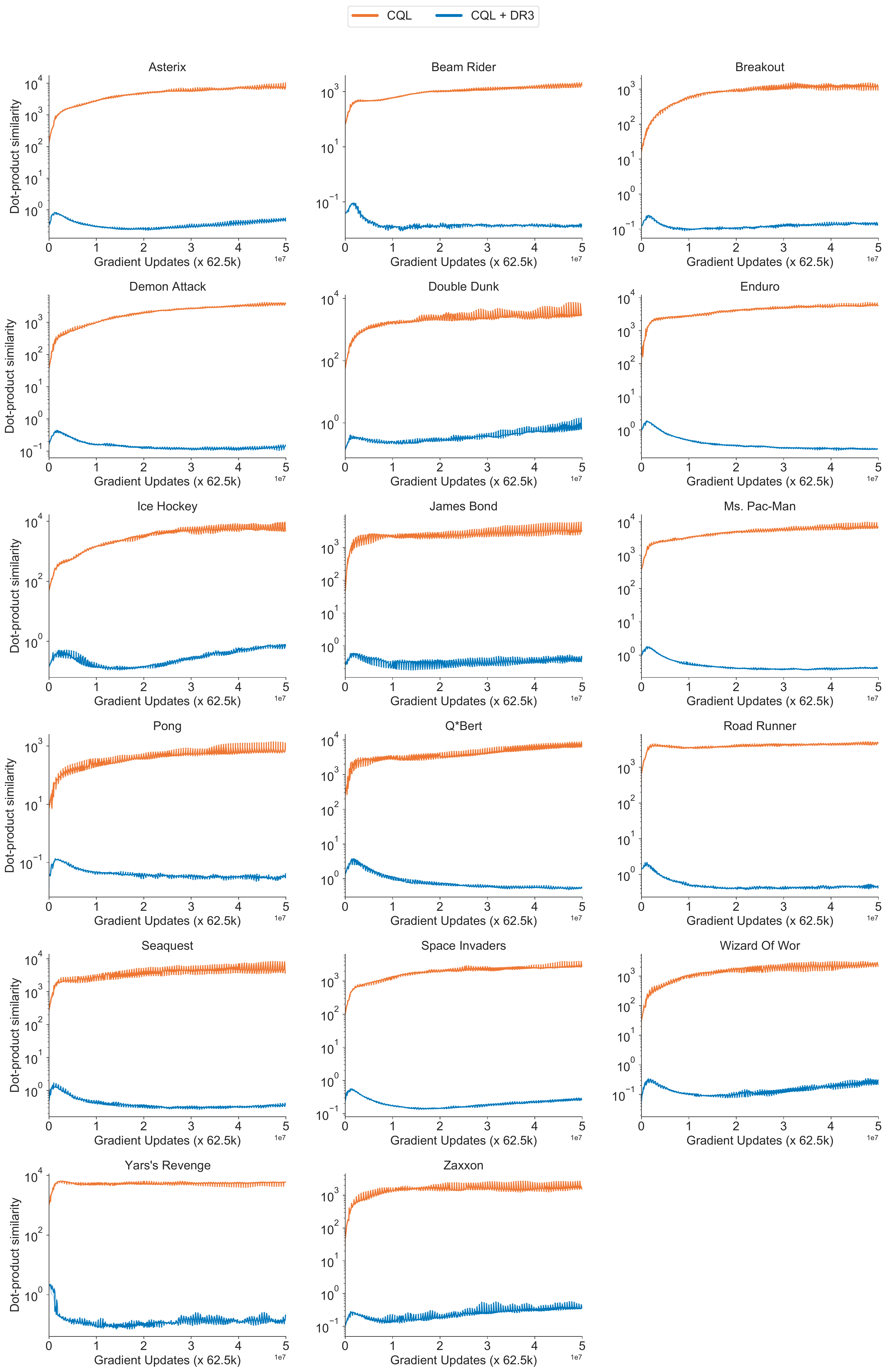}
    \vspace{-0.2cm}
    \caption{\footnotesize{\label{fig:cql_5_percent_all_games_dot_products} \rebuttal{\textbf{Per-game feature dot products (\underline{in log scale}) of CQL and CQL + DR3 on the 5\% uniform replay dataset}. Note that CQL + DR3 attains a smaller value of the feature dot product.}}}
\end{figure}

\begin{figure}[t]
    \centering
    \vspace{-5pt}
    \includegraphics[width=0.95\textwidth]{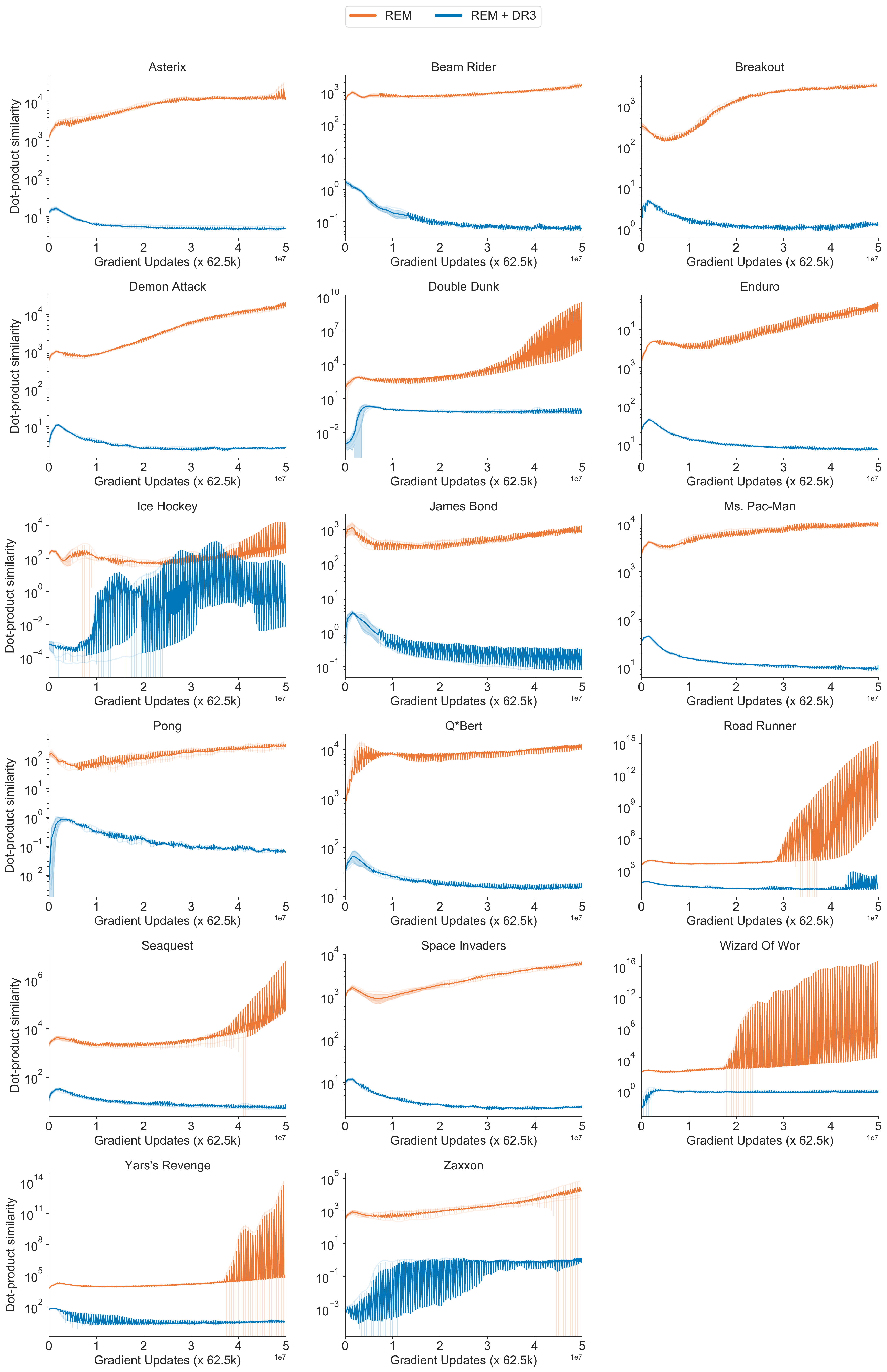}
    \vspace{-5pt}
    \caption{\footnotesize{\label{fig:rem_5_percent_all_games_dot_products} \rebuttal{\textbf{Per-game feature dot products (\underline{in log scale}) of REM and REM + DR3 on the 5\% uniform replay dataset} Note that REM + DR3 attains a higher performance than REM for a majority of games. Note that the dot products for REM+DR3 stabilize are small, and decreases for a majority of the training steps for a number of games, or stabilize at a small value. }}}
    \vspace{-0.5cm}
\end{figure}

\end{document}